\newcommand{\DO}{\mathcal{D}}
\newcommand{\Models}{\mathcal{S}}
\newcommand{\Ex}{\mathbb{E}}
\newcommand{\R}{\mathbb{R}}
\newcommand{\RKM}{{\normalfont \textsc{rkm}}}
\DeclareMathOperator*{\argmin}{arg\,min}
\newtheorem{theorem}{Theorem}
\newtheorem{corollary}[theorem]{Corollary}
\newtheorem{definition}[theorem]{Definition}
\newtheorem{lemma}[theorem]{Lemma}
\newtheorem{proposition}[theorem]{Proposition}
\title{Robust Unsupervised Learning via L-Statistic Minimization}
\author{
Andreas Maurer \\
Istituto Italiano di Tecnologia\\
\texttt{am@andreas-maurer.eu} \\
\And
Daniela A. Parletta \\
Istituto Italiano di Tecnologia \&\\
University of Genoa\\
\texttt{daniela.parletta@iit.it} \\
\AND
Andrea Paudice \\
Istituto Italiano di Tecnologia \& \\
University of Milan\\
\texttt{andrea.paudice@iit.it} \\
\And
Massimiliano Pontil \\
Istituto Italiano di Tecnologia \& \\
University College of London\\
\texttt{massimiliano.pontil@iit.it} \\
}
\begin{document}
\maketitle

\begin{abstract}
Designing learning algorithms that are resistant to perturbations of the underlying data distribution is a problem of wide practical and theoretical importance. We present a general approach to this problem focusing on unsupervised learning. The key assumption is that the perturbing distribution is characterized by larger losses relative to a given class of admissible models. This is exploited by a general descent algorithm which minimizes an $L$-statistic criterion over the model class, weighting small losses more. Our analysis characterizes the robustness of the method in terms of bounds on the reconstruction error relative to the underlying unperturbed distribution. As a byproduct, we prove uniform convergence bounds with respect to the proposed criterion for several popular models in unsupervised learning, a result which may be of independent interest.Numerical experiments  with \textsc{kmeans} clustering and principal subspace analysis demonstrate the effectiveness of our approach.
\end{abstract}

\section{Introduction}
\label{intro}
{Making learning methods robust is a fundamental problem in machine learning and statistics.} In this work we proposes an approach to unsupervised learning which is resistant to unstructured contaminations of the underlying data distribution. As noted by Hampel \cite{Hampel2001}, ``outliers" are an ill-defined concept, and an approach to robust learning, which relies on rules for the rejection of outliers (see \cite{Keith1996} and the references therein) prior to processing may be
problematic, since the hypothesis class of the learning process itself may determine which data is to be regarded as structured or unstructured. Instead of the elimination of outliers -- quoting Hampel ``data that don't fit the pattern set by the majority of the data" -- in this paper we suggest to restrict attention to ``a \textit{sufficient portion} %
\textit{of the data in good agreement with one of the hypothesized models}''.

{To implement the above idea, we propose using $L$-estimators \cite{Serfling1980}, which are formed by a weighted average of the order statistics. That is, given a candidate model, we first rank its losses on the empirical data and than take a weighted average which emphasizes small losses more. An important example of this construction is the average of a fraction of the smallest losses. However, our observations apply to general classes of weight functions, which are only restricted to be non-increasing and in some cases Lipschitz continuous.}

We highlight that although $L$-statistics have a long tradition, a key novelty of this paper is to use them as objective functions based on which to search for a robust model. This approach is general in nature and can be applied  
to robustify any learning method, supervised or unsupervised, based on empirical risk minimization. In this paper we focus 
on unsupervised learning, and our analysis includes \textsc{kmeans} clustering, principal subspace analysis and sparse coding, among others. 

This paper makes the following contributions:
\vspace{-.1truecm}
\begin{itemize}
\vspace{-.1truecm}
\item A theoretical analysis of the robustness of the proposed method (Theorem 1). Under the assumption that the data-distribution is a mixture of an unperturbed distribution adapted to our model class and a perturbing distribution, we identify conditions under which we can bound the reconstruction error, when the minimizer of the proposed objective trained from the perturbed distribution is tested on the unperturbed distribution. 
\vspace{-.1truecm}
\item An analysis of generalization (Theorems 4--6). We give dimension-free uniform bounds in terms of Rademacher averages as well as a dimension- and variance-dependent uniform bounds in terms of covering numbers which can outperform the dimension-free bounds under favorable conditions.
\vspace{-.1truecm}
\item A meta-algorithm operating on the empirical objective which can be used whenever there is a descent algorithm for the underlying loss function (Theorem 9).
\end{itemize}
\vspace{-.2truecm}

The paper is organized as follows. In Section \ref{sec:2} we give a brief overview of unsupervised (representation) learning. In Sections \ref{sec:3} to \ref{sec:5} we present and analyze our method. In Section \ref{sec:4} we discuss an algorithm optimizing the proposed objective and in Section \ref{sec:exps} we present numerical experiments
with this algorithm for \textsc{kmeans} clustering and principal subspace analysis, which indicate that the proposed method is promising. Proofs can be found in the supplementary material.

\paragraph{Previous Work} Some elements of our approach have a long tradition. For fixed models the proposed empirical objectives are called $L$-statistics or $L$-estimators. They have been used in robust statistics since the middle of the last century  \cite{LLoyd1952} and their asymptotic properties have been studied by many authors (see \cite{Serfling1980} and the references therein). Although influence functions play a certain role, our approach is somewhat different from the traditions of robust statistics. Similar techniques to ours have been experimentally explored in the context of classification \cite{Han2018} or latent variable selection \cite{Kumar2010}. Finite sample bounds, uniform bounds, the minimization of $L$-statistics over model classes and the so called risk based-objectives however are more recent developments~
\cite{Maurer2018,Maurer2019u,Lee2020}, and we are not aware of any other general bounds on the reconstruction error of models trained from perturbed data. %
A very different line of work for robust statistics are model-independent methods available in high dimensions \cite{Elmore2006,Fraiman2019}. {Although elegant and very general, these depth-related pre-processing methods may perform sub-optimally in practice, as our numerical experiments indicate.}
Finally, we note that previous work on PAC learning (e.g. \cite{AngluinL87}) has addressed the problem of learning a good classifier with respect to a target, when the data comes from a perturbed distribution affected by unstructured noise. Similarly to us, they consider that the  target distribution is well adapted to the model class.

\section{Unsupervised Learning}
\label{sec:2}
Let $\mathcal{S}$ be a class of subsets of $\mathbb{R}^{d}$, which we call the model class. %
For $S\in \mathcal{S}$ define the distortion function $d_{S}:\mathbb{R}^{d}\rightarrow \left[ 0,\infty \right)$ by\footnote{In most parts our analysis applies also to other distortion measures, for example omitting the square in (\ref{Distortion function}). The chosen form is important for generalization bounds, when we want to bound the complexity of the class $\left\{ x\mapsto d_{S}\left( x\right) :S\in \mathcal{S}\right\}$ for specific cases.}
\begin{equation}
d_{S}\left( x\right) =\min_{y\in S}\left\Vert x-y\right\Vert ^{2}\text{ for }%
x\in \mathbb{R}^{d}.  
\label{Distortion function}
\end{equation}
We assume that the members of $\mathcal{S}$ are either compact sets or subspaces, so the minimum in (\ref{Distortion function}) is always attained. For instance $\mathcal{S}$ could be the class of singletons, a class of subsets of cardinality $k$, the class of subspaces of dimension $k$, or a class of compact convex polytopes with $k$ vertices\footnote{In these cases the set $S$ is the image of a linear operator on a prescribed set of code vectors, see \cite{Maurer2010}. Our setting is more general, e.g. it includes non-linear manifolds.
}. 

We write $\mathcal{P}\left( \mathcal{X}\right) $ for the set of Borel probability measures on a locally compact Hausdorff space $\mathcal{X}$. If $\mu \in \mathcal{P}\left( \mathbb{R}^{d}\right) $, define the probability measure $\mu _{S}\in \mathcal{P}\left( \left[ 0,\infty \right) \right) $ as the push-forward of $\mu $ under $d_{S}$, that is, $\mu _{S}\left( A\right) =\mu\left(\{ x:d_{S}\left( x\right) \in A\}\right)$ for $A\subseteq \left[0,\infty \right) $. Now consider the functional $\Phi :\mathcal{P}\left(\left[ 0,\infty \right) \right) \mathcal{\rightarrow }\left[ 0,\infty\right) $ defined by

\begin{equation}
\Phi \left( \rho \right) =\int_{\text{0}}^{\infty }rd\rho \left( r\right),~~~\rho \in \mathcal{P}\text{.}
\label{eq:functional}
\end{equation}
Then $\Phi \left( \mu _{S}\right) =\mathbb{E}_{X\sim \mu }\left[ d_{S}\left(X\right) \right] $ is the expected \textit{reconstruction error}, incurred when coding points by the nearest neighbors in $S$. The measures $\mu_{S}\in \mathcal{P}\left( \left[ 0,\infty \right) \right) $ and the functional $\Phi $ allow the compact and general description of several problems of unsupervised learning as 
\begin{equation}
\min_{S\in \mathcal{S}}\Phi \left( \mu _{S}\right) 
=\min_{S\in \mathcal{S}}\mathbb{E}_{X\sim \mu }\left[ d_{S}\left( X\right) \right].   
\label{Unsupervised objective}
\end{equation}
Denote with $S^{\ast }=S^{\ast }\left( \mu \right) $ a global minimizer of (\ref{Unsupervised objective}). Returning to the above examples, if $\mathcal{S}$ is the class of singleton sets, then $S^{\ast }\left( \mu \right) $ is the mean of $\mu $. If it is the class of subsets of cardinality $k$, then $S^{\ast }\left( \mu \right)$ is the optimal set of centers for \textsc{kmeans} clustering. If $\mathcal{S}$ is the class of $k$-dimensional subspaces, then $S^{\ast }\left( \mu \right)$ is the principal $k$-dimensional subspace. 

An important drawback of the above formulation is that the functional $\Phi $ is very sensitive to perturbing masses at large distortions $R$. In the tradition of robust statistics (see e.g. \cite{Hampel1974,Serfling1980}) this can be expressed in terms of the 
\textit{influence function}, {measuring the effect of an infinitesimal point mass perturbation of the data}. Let $\delta _{R}$ be the unit mass at $R>0$, then the influence function 
\begin{eqnarray}
{\rm IF}\left( R;\rho ,\Phi \right)  &:=&\frac{d}{dt}\Phi \big((1-t)
\rho +t\delta _{R}\big) \bigg|_{t=0}   \label{Influence non-robust} = R-\Phi \left( \rho \right) ,  \nonumber
\end{eqnarray}
can be arbitrarily large, indicating that even a single datapoint could already corrupt $S^{\ast }\left( \mu \right)$. To overcome this problem, in the next section we introduce a class of robust functional based on $L$-statistics.

\section{Proposed Method}
\label{sec:3}
Our goal is to minimize the reconstruction error on unperturbed test data, from perturbed training data. Specifically, we assume that the data we observe comes from a perturbed distribution $\mu$ that is the mixture of an unperturbed distribution $\mu^{\ast }$, which is locally concentrated on the minimizer $S^{\ast }=S^{\ast}\left( \mu ^{\ast }\right)$, and a perturbing distribution $\nu $ which is unstructured in the sense that it does not concentrate on any of our models\footnote{This is in contrast with the assumptions made in adversarial learning, where the goal is to increase robustness against adversarial worst-case perturbations (see e.g. \cite{Ragi2018}).}. Figure \ref{fig:ex_mean} depicts such a situation, when $\mathcal{S}$ is the set of singletons and $d=1$.

\begin{figure}[ht]
\begin{center}
\begin{tikzpicture}[scale=1.6, domain=-1:3]
\draw [green, thick, domain=-1:3, samples=500] plot (\x, {1/(0.3 * sqrt(2 * pi)) * exp(-(\x/0.3)*(\x/0.3))}); %
\draw [red, thick, domain=-1:3, samples=500] plot (\x, {1/(1 * sqrt(2 * pi)) * exp(-((\x-2)/1)*((\x-2)/1))}); %
\draw [black, thick, domain=-1:3, samples=500] plot (\x, {0.4/(0.3 * sqrt(2 * pi)) * exp(-(\x/0.3)*(\x/0.3)) + 0.6/(1 * sqrt(2 * pi)) * exp(-((\x-2)/1)*((\x-2)/1))}); %
\filldraw[fill=none] (0,0) node[anchor=north] {$S^{\ast}$}; %
\end{tikzpicture}
\end{center}
\caption{Densities of the unperturbed distribution $\protect\mu ^{\ast }$ (light green) with high local concentration on the optimal model $S^{\ast }$%
, the perturbing distribution $\protect\nu $ (light red) without significant concentration, and the observable mixture $\protect\mu =\left( 1-\protect\lambda \right) \protect\mu ^{\ast }+\protect\lambda \protect\nu^{\ast }$ (black) at $\protect\lambda =0.6.$}
\label{fig:ex_mean}
\end{figure}
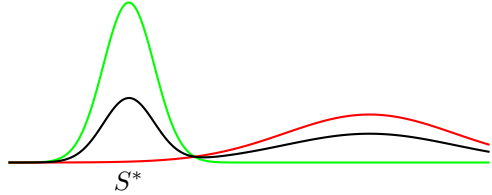

We wish to train from the available, perturbed data a model $\hat{S}\in\mathcal{S}$, which nearly minimizes the reconstruction error on the unperturbed distribution $\mu ^{\ast }$. To this end we exploit the assumption that the unperturbed distribution $\mu ^{\ast }$ is much more strongly concentrated at $S^{\ast }$ than the mixture $\mu =\left( 1-\lambda\right) \mu ^{\ast }+\lambda \nu $ is at models $S$ \emph{away} from $S^{\ast }$ in terms of reconstruction error.

The key observation is that if the mixture parameter $\lambda $ is not too large, the concentration of $\mu ^{\ast }$ causes the cumulative distribution function of the losses for the optimal model $F_{\mu _{S^{\ast}}}:r\mapsto \mu _{S^{\ast }}\left[ 0,r\right] $ to increase rapidly for small values of $r$, until it reaches the value $\zeta =F_{\mu _{S^{\ast}}}\left( r^{\ast }\right) $, where $r^{\ast }$ is a critical distortion radius depending on $S^{\ast }$. Thus, when searching for a model, we can consider as irrelevant the remaining mass $1-\zeta =\mu _{S^{\ast }}\left( r^{\ast },\infty \right) $, which can be attributed to $\nu $ and may arise from outliers or other contaminating effects. To achieve this, we modify the functional \eqref{eq:functional} so as to consider only the relevant portion of data, replacing $\Phi \left( \mu _{S}\right)$ by
\begin{equation}
\zeta ^{-1}\int_{0}^{F_{\mu _{S}}^{-1}\left( \zeta \right) }rd\mu _{S}\left(r\right) .
\label{eq:hard}    
\end{equation}
Intuitively, the minimization of \eqref{eq:hard} forces the search towards models with the smallest \emph{truncated} expected loss. Among such models there is also $S^*$, whose losses have the strongest concentration around a \emph{small} value and then leading to a very small value $r^\ast$ for $F_{\mu _{S^\ast}}^{-1}\left( \zeta \right)$.

More generally, since the choice of the hard quantile-thresholding at $\zeta $ is in many ways an ad hoc decision, we might want a more gentle transition of the boundary between relevant and irrelevant data. Let $W:\left[ 0,1\right]\rightarrow \left[ 0,\infty \right) $ be a bounded weight function and define, for every $\rho \in \mathcal{P}\left[ 0,\infty \right)$, 
\[
\Phi _{W}\left( \rho \right) =\int_{\text{0}}^{\infty }rW\left( F_{\rho}\left( r\right) \right) d\rho \left( r\right)\text{. }
\]
{We require $W$ to be non-increasing and zero on $\left[ \zeta ,1\right] $ for some critical mass $\zeta <1$. The parameter $\zeta $ must be chosen on the basis of an estimate of the amount $\lambda $ of perturbing data.} Note that if $W$ is identically $1$ then $\Phi _{1} =\Phi$ in \eqref{eq:functional}, while if $W=\zeta ^{-1}1_{\left[ 0,\zeta \right] }$ then $\Phi _{W}$ is the hard thresholding functional in \eqref{eq:hard}.

We now propose to ``robustify" unsupervised learning by replacing the original problem (\ref{Unsupervised objective}) by
\begin{equation}
\min_{S\in \mathcal{S}}\Phi _{W}\left( \mu _{S}\right),
\label{Robused unsupervised learning problem}
\end{equation}
and denote a global minimizer by $S^{\dagger }\equiv S^{\dagger }\left( \mu\right)$. 
			
{In practice, $\mu$ is unknown and the} %
search for the model $S^{\dagger }$ has to rely on finite data. {If $\hat{\mu}\left( \mathbf{X}\right) = \frac{1}{n}\sum_{i=1}^{n}\delta _{X_{i}}$ is the empirical measure induced by an i.i.d. sample $\mathbf{X}=\left( X_{1},...,X_{n}\right) \sim \mu ^{n}$, }
then the empirical objective is the plug-in estimate
\begin{align}
& \Phi _{W}\left( \hat{\mu}\left( \mathbf{X}\right) _{S}\right)  =\frac{1}{n}\sum_{i=1}^{n}d_{S}\left( X_{i}\right) W\left( \frac{1}{n}\left\vert \left\{ X_{j}:d_{S}\left( X_{j}\right) \leq d_{S}\left(X_{i}\right) \right\} \right\vert \right) =\frac{1}{n}\sum_{i=1}^{n}d_{S}\left( X\right) _{\left( i\right) }W\left(\frac{i}{n}\right),
\label{eq:obj_alt}
\end{align}
where $d_{S}\left( X\right) _{\left( i\right) }$ is the $i$-th smallest member of $\left\{ d_{S}\left( X_{1}\right) ,...,d_{S}\left( X_{n}\right)\right\} $.

{The empirical estimate $\Phi _{W}\left( \hat{\mu}\left( \mathbf{X}\right)_{S}\right) $ is an $L$-statistic \cite{Serfling1980}. We denote a minimizer of this objective}
by 
\begin{equation}
\label{eq:RERM}
\hat{S}\left( \mathbf{X}\right) =\arg\min_{S\in \mathcal{S}}\Phi _{W}\left( \hat{\mu}\left( \mathbf{X}\right)_{S}\right).
\end{equation}
In the sequel we study three questions:
\begin{enumerate}
\item[{1}]If the underlying probability measure is a mixture $\mu =\left( 1-\lambda\right) \mu ^{\ast }+\lambda \nu $ of an unperturbed measure $\mu ^{\ast }$ and a perturbing measure $\nu $, and $S^{\dagger }=S^{\dagger }\left( \mu\right) $ is the minimizer of (\ref{Robused unsupervised learning problem}), under which assumptions will the reconstruction error $\Phi \left( \mu_{S^{\dagger }}^{\ast }\right) $ incurred by $S^{\dagger }$ on the unperturbed distribution approximate the minimal reconstruction error $\Phi\left( \mu _{S^{\ast }}^{\ast }\right) $?
\item[{2}] When solving (\ref{Robused unsupervised learning problem}) for a finite amount of data $\mathbf{X}$, under which conditions can we reproduce the behavior of $S^{\dagger }$ by the empirical  minimizer $\hat{S}\left( \mathbf{X}\right)$ in \eqref{eq:RERM}?
\item[{3}] How can the method be implemented and how does it perform in practice?
\end{enumerate}

\section{Resilience to Perturbations}
\label{sec:3.1}
Before we address the first question we make a preliminary observation in the tradition of robust statistics and compare the influence functions of the functional $\Phi _{1}$ to that one of the proposed $\Phi _{W}$ with bounded $W$, and $W\left( t\right) =0$ for $\zeta \leq t<1$. While we saw in (\ref{Influence non-robust}) that for any $\rho \in {\cal P}\left( \left[ 0,\infty \right) \right) $ the influence function ${\rm IF}\left( R;\rho ,\Phi \right) =R-\Phi \left( \rho\right) $ is unbounded in $R$, in the case of $\Phi_W$ we have, for any $R\in \mathbb{R}^{d}$, that

\begin{eqnarray*}
{\rm IF}\left( R;\rho ,\Phi _{W}\right) &\leq &{\rm IF}_{\max }\left( \rho ,W\right) \coloneqq \int_{0}^{F_{\rho }^{-1}\left( \zeta \right) }W\left( F_{\rho }\left(r\right) \right) F_{\rho }\left( r\right) dr.
\end{eqnarray*}
Notice that the right hand side is always bounded, which already indicates the improved robustness of $\Phi _{W}$ \cite{Hampel1974}. The upper bound ${\rm IF}_{\max }$ on the influence function plays also an important role in the subsequent analysis. %
			
Returning now to the data generating mixture $\mu =\left( 1-\lambda \right)\mu ^{\ast }+\lambda \nu $, where $\mu ^{\ast }\in \mathcal{P}\left( \mathbb{R}^{d}\right) $ is the the ideal, unperturbed distribution and $\nu \in\mathcal{P}\left( \mathbb{R}^{d}\right) $ the perturbation, we make the following assumption. 
			
\textbf{Assumption A}. There exists $S_{0}\in \mathcal{S}$, $\delta >0$, $\beta \in \left( 0,1-\lambda \right) $ and a scale parameter $r^{\ast } > 0$ (in units of squared euclidean distance), such that for	every model $S\in \mathcal{S}$ satisfying $\Phi \left( \mu _{S}^{\ast}\right) >\Phi \left( \mu _{S_{0}}^{\ast }\right) +\delta $ we have $F_{\mu_{S}}\left( r\right) <\beta F_{\mu _{S_{0}}^{\ast }}\left( r\right) $ for all $r\leq r^{\ast }$. 

Loosely speaking this assumption prescribes that, under the perturbed distribution $\mu$, any model $S$ with a large reconstruction error on $\mu^\ast$, should have its losses far less concentrated than the losses of $S_0$ around a small value (any $r \leq r^\ast$). As an example, on a typical sample from $\mu$ any such $S$ will have far more \emph{large} losses than $S_0$. For the sake of intuition, one should think of $S_0$ as $S^\ast(\mu^\ast)$ and $\beta$ as a very small number controlling the concentration of the losses. %
Equivalently, the assumption requires a perturbing distribution that is concentrated on no model $S$ very different from $S_0$ in terms of reconstruction error on the target. For concrete examples for the cases of \textsc{k-means} clustering and principal subspace analysis are given in  Figures \ref{fig:ex_mean} and \ref{fig:ex_psa}. %

We now state the main result of this section.

\begin{theorem}
\label{Theorem perturbed distribution}
Let $\mu ^{\ast },\nu \in \mathcal{P}\left( \mathbb{R}^{d}\right) $, $\mu =\left( 1-\lambda \right) \mu ^{\ast}+\lambda \nu $, and $\lambda \in \left( 0,1\right) $ and suppose there are $S_{0}$, $r^{\ast }$, $\delta >0$ and $\beta\in (0,1-\lambda)$, satisfying Assumption A. Let $W$ be nonzero on a set of positive Lebesgue measure, nonincreasing and $W\left( t\right) =0$ for $t\geq \zeta =F_{\mu_{S_{0}}}\left( r^{\ast }\right) $. Then ${\rm IF}_{\max }\left( \mu _{S_{0}},W\right) >0$, and if any $S\in\mathcal{S}$ satisfies

\begin{equation}
\Phi _{W}{(\mu_{S})} -\Phi _{W}{(\mu _{S_{0}})} \leq\left( 1{-}\frac{\beta }{1{-}\lambda }\right) {\rm IF}_{\max }{( \mu_{S_{0}},{W})}  
\label{Delta condition}
\end{equation}
then we have that $\Phi \left( \mu _{S}^{\ast }\right) \leq \Phi \left( \mu_{S_{0}}^{\ast }\right) +\delta $. In particular we always have that $\Phi \left(\mu _{S^{\dagger }}^{\ast }\right) \leq \Phi \left( \mu _{S_{0}}^{\ast}\right) +\delta $.  
\end{theorem}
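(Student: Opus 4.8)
The plan is to prove the contrapositive of the main implication: assuming $\Phi(\mu_S^\ast)>\Phi(\mu_{S_0}^\ast)+\delta$, I will show that the left-hand side of \eqref{Delta condition} \emph{strictly} exceeds the right-hand side, so \eqref{Delta condition} cannot hold. The analytic backbone is the classical $L$-statistic identity
\[
\Phi_W(\rho)\;=\;\int_0^1 F_\rho^{-1}(u)\,W(u)\,du\;=\;\int_0^\infty \bar W\big(F_\rho(r)\big)\,dr,\qquad \bar W(v):=\int_v^1 W(u)\,du,
\]
the first equality being the substitution $u=F_\rho(r)$ and the second Fubini together with $F_\rho^{-1}(u)>r\iff u>F_\rho(r)$. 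Note that $\bar W\ge0$, that $\bar W$ is nonincreasing (since $W\ge0$), and that $\bar W(v)=0$ for $v\ge\zeta$ (since $W=0$ on $[\zeta,1]$). Since moreover $W(F_\rho(r))=0$ whenever $F_\rho(r)\ge\zeta$, and $\zeta=F_{\mu_{S_0}}(r^\ast)$, the influence bound rewrites as ${\rm IF}_{\max}(\mu_{S_0},W)=\int_0^{r^\ast}W(F_{\mu_{S_0}}(r))\,F_{\mu_{S_0}}(r)\,dr$; its strict positivity (the first assertion of the theorem) is a short separate computation from the hypotheses on $W$, which already force $\zeta>0$, together with the behaviour of $F_{\mu_{S_0}}$ near $0$.

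Next I turn Assumption A into a pointwise comparison of distribution functions on $[0,r^\ast]$. Linearity of push-forwards gives $\mu_{S_0}=(1-\lambda)\mu_{S_0}^\ast+\lambda\,\nu_{S_0}$, hence $F_{\mu_{S_0}^\ast}(r)\le\frac{1}{1-\lambda}F_{\mu_{S_0}}(r)$; combining this with the bound $F_{\mu_S}(r)<\beta F_{\mu_{S_0}^\ast}(r)$ supplied by Assumption A (which applies precisely because $\Phi(\mu_S^\ast)>\Phi(\mu_{S_0}^\ast)+\delta$) gives $F_{\mu_S}(r)<c\,F_{\mu_{S_0}}(r)$ for all $r\le r^\ast$, with $c:=\beta/(1-\lambda)\in(0,1)$. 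Now split $\Phi_W(\mu_S)-\Phi_W(\mu_{S_0})=\int_0^\infty[\bar W(F_{\mu_S}(r))-\bar W(F_{\mu_{S_0}}(r))]\,dr$ at $r^\ast$: for $r\ge r^\ast$ the subtracted term vanishes (since $F_{\mu_{S_0}}(r)\ge\zeta$) while $\bar W(F_{\mu_S}(r))\ge0$, so that part is $\ge0$; for $r\le r^\ast$, monotonicity of $\bar W$ with $F_{\mu_S}(r)<c\,F_{\mu_{S_0}}(r)$ yields $\bar W(F_{\mu_S}(r))\ge\bar W(c\,F_{\mu_{S_0}}(r))$. Hence
\[
\Phi_W(\mu_S)-\Phi_W(\mu_{S_0})\;\ge\;\int_0^{r^\ast}\big[\bar W(c\,F_{\mu_{S_0}}(r))-\bar W(F_{\mu_{S_0}}(r))\big]\,dr .
\]
The one-line inequality $\bar W(cv)-\bar W(v)=\int_{cv}^{v}W(u)\,du\ge(1-c)\,v\,W(v)$ — valid since $W$ is nonincreasing, so $W(u)\ge W(v)$ on $[cv,v]$ — applied with $v=F_{\mu_{S_0}}(r)$ and integrated over $[0,r^\ast]$ then gives exactly $(1-c)\,{\rm IF}_{\max}(\mu_{S_0},W)$, the right-hand side of \eqref{Delta condition}.

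It remains to make the chain strict, which is where the strict inequalities in Assumption A are used. On the positive-measure set where $W(F_{\mu_{S_0}}(r))\,F_{\mu_{S_0}}(r)>0$ one has $0<F_{\mu_{S_0}}(r)<\zeta$, so $W>0$ on $[F_{\mu_S}(r),\,c\,F_{\mu_{S_0}}(r)]$, while $F_{\mu_S}(r)<c\,F_{\mu_{S_0}}(r)$ strictly, whence $\bar W(F_{\mu_S}(r))-\bar W(c\,F_{\mu_{S_0}}(r))=\int_{F_{\mu_S}(r)}^{c\,F_{\mu_{S_0}}(r)}W(u)\,du>0$ there; so one of the nonnegative terms discarded above is strictly positive and $\Phi_W(\mu_S)-\Phi_W(\mu_{S_0})>(1-c)\,{\rm IF}_{\max}(\mu_{S_0},W)$, contradicting \eqref{Delta condition}. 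For the ``in particular'' claim, $S^\dagger$ minimizes $S\mapsto\Phi_W(\mu_S)$ over $\mathcal S\ni S_0$, so $\Phi_W(\mu_{S^\dagger})-\Phi_W(\mu_{S_0})\le0<(1-c)\,{\rm IF}_{\max}(\mu_{S_0},W)$ (using $c<1$ and ${\rm IF}_{\max}>0$), so \eqref{Delta condition} holds with $S=S^\dagger$ and the first part applies. I expect the main obstacle to be not this analytic core — the $\bar W$-representation and the elementary inequality above — but the measure-theoretic bookkeeping: justifying the $L$-statistic identity and the rewrite of ${\rm IF}_{\max}$ when $\mu$ has atoms, proving ${\rm IF}_{\max}(\mu_{S_0},W)>0$, and verifying that Assumption A's strict inequality is genuinely available on a large enough set (i.e.\ that $F_{\mu_{S_0}^\ast}(r)>0$ wherever the strictness step needs it).
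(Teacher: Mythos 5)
Your proof is correct and follows essentially the same route as the paper's: your identity $\Phi_W(\rho)=\int_0^\infty \bar W\left(F_\rho(r)\right)dr$ with $\bar W(v)=\int_v^1 W$ is exactly the paper's difference formula $\Phi_W(\rho_1)-\Phi_W(\rho_2)=-\int_0^\infty\left(K_W(F_{\rho_1}(r))-K_W(F_{\rho_2}(r))\right)dr$ in disguise, and your elementary bound $\int_{cv}^{v}W(u)\,du\ge(1-c)\,v\,W(v)$ together with the mixture inequality $F_{\mu^\ast_{S_0}}\le(1-\lambda)^{-1}F_{\mu_{S_0}}$ reproduces precisely the paper's key lemma with $\alpha=\beta/(1-\lambda)$ and its application in the proof of Theorem 1. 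The one genuine refinement is your explicit strictness argument, which the paper's own proof elides (it establishes only the non-strict lower bound $\ge(1-\alpha)\,{\rm IF}_{\max}$ and then concludes from a strict hypothesis), so your version actually covers the ``$\le$'' as stated in the displayed condition.
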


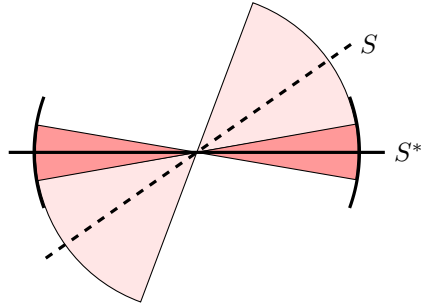
\begin{figure}[ht]
\begin{center}
\begin{tikzpicture}[scale=1.25]
\coordinate (c) at (0,0);
\filldraw[fill=red!10!white,draw=black] (0,0) -- (1.69cm,0.29cm) arc (10:70:1.7cm) -- cycle;
\filldraw[fill=red!10!white,draw=black] (0,0) -- (-1.69cm,-0.29cm) arc (190:250:1.7cm) -- cycle;
\filldraw[red!40!white,draw=black] (0,0) -- (1.69cm,-0.29cm) arc (-10:10:1.7cm) -- cycle;
\filldraw[red!40!white,draw=black] (0,0) -- (-1.69cm,0.29cm) arc (170:190:1.7cm) -- cycle;
\draw[very thick] (-2,0) -- (2,0);
\draw (2,0) node[anchor=west]{$S^{\ast}$};
\draw[dashed, very thick] (1.64,1.15) -- (-1.64,-1.15);
\draw (1.64,1.15) node[anchor=west]{$S$};
\draw [black,very thick,domain=-20:20] plot ({1.73 * cos(\x)}, {1.73 * sin(\x)});
\draw [black,very thick,domain=160:200] plot ({1.73 * cos(\x)}, {1.73 * sin(\x)});
\end{tikzpicture}
\end{center}
\caption{Illustration of Theorem \protect
\ref{Theorem perturbed distribution} for $d=2$ and $k=1$ in the case of \textsc{PSA}. The target distribution (dark gray) is concentrated on the subspace $S^*$, while the perturbing distribution (light gray) does not concentrate well on any individual subspace.}
\label{fig:ex_psa}
\end{figure}

We close this section by stating some important conclusions of the above theorem. 
\begin{enumerate}
\item A simplifying illustration of Theorem \ref{Theorem perturbed distribution} for principal subspace analysis is provided by Figure \ref{fig:ex_psa}. The distributions $\mu ^{\ast }$ and $\nu $ are assumed to have uniform densities $\rho \left( \mu ^{\ast }\right) $ and $\rho \left( \nu \right) $ supported on dark red and light red areas of the unit disk respectively. Suppose $\beta =\rho \left( \nu \right) /\rho \left( \mu ^{\ast }\right)<1-\lambda $, let $r^{\ast }=\sin ^{2}\left( \pi /\rho \left( \mu ^{\ast}\right) \right) $ and $\delta =4r^{\ast }$. If $\Phi \left( \mu _{S}^{\ast}\right) >\Phi \left( \mu _{S^{\ast }}^{\ast }\right) +\delta $ then the direction of the subspace $S$ does not intersect the black part of the unit circle and therefore $F_{\mu _{S}}\left( r\right) \leq \beta F_{\mu_{S^{\ast }}^{\ast }}(r)$ for all $r\leq r^{\ast }$. Thus Assumption A is satisfied and consequently, if $W\left( t\right) =0$ for $t\geq F_{\mu_{S^{\ast }}}\left( r^{\ast }\right) $, then $S^{\dagger }$ must intersect the black part of the unit circle and $\Phi \left( \mu _{S^{\dagger }}^{\ast}\right) \leq \Phi \left( \mu _{S^{\ast }}^{\ast }\right) +\delta $.
\item The generic application of this result assumes that $S_{0}=S^{\ast}\left( \mu ^{\ast }\right) $, but this is not required. Suppose $\mathcal{S}$ is the set of singletons and $\mu ^{\ast }$ is bimodal, say the mixture of distant standard normal distributions, and $\lambda =0$ for simplicity. Clearly there is no local concentration on the midpoint $S^{\ast }\left( \mu^{\ast }\right) $, but there is on each of the modes. If $S_{0}$ is the mean of the first mode and $\zeta $ is sufficiently small, then $S^{\dagger }$ can be near the mean of the other mode, because it has comparable reconstruction error. In this way the result also explains the astonishing behavior of our algorithm in clustering experiments with mis-specified number of clusters.
\item The conditions on $W$ prescribe an upper bound on the cutoff parameter $\zeta $. If the cutoff parameter $\zeta $ is chosen smaller (so that $W\left( t\right) =0$ for $t\geq \zeta \ll F_{\mu _{S^{\ast }}}\left( r^{\ast}\right) $), the required upper bound in (\ref{Delta condition}) decreases and it becomes more difficult to find $S$ satisfying the upper bound. This problem becomes even worse in practice, because the bounds on the estimation error also increase with $\zeta $, as we will see in the next section. 
\end{enumerate}

\section{Generalization Analysis}
\label{sec:5}

Up to this point we were working with distributions and essentially infinite data. In practice we only have samples $\mathbf{X}\sim \mu ^{n}$ and then it is important to understand to which extend we can obtain the conclusion of Theorem~\ref{Theorem perturbed distribution}, when $S$ is the minimizer of the empirical robust functional $\Phi _{W}\left( \hat{\mu}\left( \mathbf{X}\right)_{S} \right)$. 
This can be settled by a uniform bound on the estimation error for $\Phi _{W}$. 

\begin{proposition}
Under the conditions of Theorem \ref{Theorem perturbed distribution} with ${\bf X} \sim \mu^n$ we have that
\begin{multline*}
\Pr\left\{\Phi\left(\mu _{\hat{S}\left(\mathbf{X}\right) }^{\ast}\right)\leq\Phi\left(\mu _{S^{\ast }}^{\ast}\right) +\delta\right\} \geq \Pr \left\{ 2\sup_{S\in \mathcal{S}}\left\vert \Phi _{W}\left( \mu_{S}\right) -\Phi_{W}\left( \hat{\mu}_{S}\left( \mathbf{X}\right) \right)\right\vert \leq\left( 1-\frac{\beta }{1-\lambda } \right) {\rm IF}_{\max }\left( \mu_{S^{\ast }},W\right) \right\}.
\end{multline*}
\end{proposition}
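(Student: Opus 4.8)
The plan is to establish a pointwise inclusion of events: on the event
\[
E:=\left\{\,2\sup_{S\in\mathcal{S}}\left\vert \Phi_{W}\left(\mu_{S}\right)-\Phi_{W}\left(\hat{\mu}_{S}\left(\mathbf{X}\right)\right)\right\vert\leq\left(1-\tfrac{\beta}{1-\lambda}\right)\IFM\,\right\},
\]
the empirical minimizer $\hat{S}\left(\mathbf{X}\right)$ from \eqref{eq:RERM} satisfies the hypothesis \eqref{Delta condition} of Theorem~\ref{Theorem perturbed distribution} (with $S_{0}=S^{\ast}$), so that its conclusion gives $\Phi\left(\mu_{\hat{S}\left(\mathbf{X}\right)}^{\ast}\right)\leq\Phi\left(\mu_{S^{\ast}}^{\ast}\right)+\delta$; monotonicity of probability then yields $\Pr\{\Phi(\mu^{\ast}_{\hat{S}(\mathbf{X})})\leq\Phi(\mu^{\ast}_{S^{\ast}})+\delta\}\geq\Pr(E)$, which is the statement. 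Note that Theorem~\ref{Theorem perturbed distribution} also guarantees $\IFM>0$ under the stated conditions, so the right-hand side of the defining inequality of $E$ is positive and the event is non-trivial.

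The core of the argument is a standard empirical-risk-minimization decomposition carried out on a fixed realization $\mathbf{X}\in E$. Writing $\hat{S}=\hat{S}\left(\mathbf{X}\right)$ and inserting the empirical objective at $\hat{S}$ and at $S^{\ast}$,
\begin{align*}
\Phi_{W}\left(\mu_{\hat{S}}\right)-\Phi_{W}\left(\mu_{S^{\ast}}\right)
&=\big(\Phi_{W}\left(\mu_{\hat{S}}\right)-\Phi_{W}(\hat{\mu}_{\hat{S}}(\mathbf{X}))\big)\\
&\quad+\big(\Phi_{W}(\hat{\mu}_{\hat{S}}(\mathbf{X}))-\Phi_{W}(\hat{\mu}_{S^{\ast}}(\mathbf{X}))\big)\\
&\quad+\big(\Phi_{W}(\hat{\mu}_{S^{\ast}}(\mathbf{X}))-\Phi_{W}\left(\mu_{S^{\ast}}\right)\big).
\end{align*}
By definition of $\hat{S}$ as a minimizer of $S\mapsto\Phi_{W}(\hat{\mu}_{S}(\mathbf{X}))$ the middle term is at most $0$, while each of the two outer terms is bounded in absolute value by $\sup_{S\in\mathcal{S}}\vert\Phi_{W}(\mu_{S})-\Phi_{W}(\hat{\mu}_{S}(\mathbf{X}))\vert$. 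Hence the left-hand side is at most $2\sup_{S}\vert\Phi_{W}(\mu_{S})-\Phi_{W}(\hat{\mu}_{S}(\mathbf{X}))\vert$, which on $E$ is at most $\left(1-\tfrac{\beta}{1-\lambda}\right)\IFM$; this is precisely condition \eqref{Delta condition} for $S=\hat{S}$.

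I do not anticipate a genuine obstacle: the reduction is a textbook ERM argument, and the only points requiring care are the direction of the inequality in the optimality step, the (intended) identification of the benchmark $S_{0}$ in Assumption~A with $S^{\ast}=S^{\ast}(\mu^{\ast})$ so that Theorem~\ref{Theorem perturbed distribution} applies verbatim, and the measurability of $\hat{S}\left(\mathbf{X}\right)$ and of the supremum over $\mathcal{S}$, which is handled by the usual separability conventions on the model class (or by assuming a measurable selection of the empirical minimizer).
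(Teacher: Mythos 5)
Your proof is correct and follows essentially the same route as the paper's: a telescoping ERM decomposition whose middle term is killed by the optimality of $\hat{S}(\mathbf{X})$, a bound on the remaining terms by $2\sup_{S}\left\vert \Phi_{W}\left(\mu_{S}\right)-\Phi_{W}\left(\hat{\mu}_{S}\left(\mathbf{X}\right)\right)\right\vert$, and an application of Theorem \ref{Theorem perturbed distribution} with $S_{0}=S^{\ast}$. The only (harmless) cosmetic difference is that you telescope through $S^{\ast}$ directly in three terms, whereas the paper inserts the population robust minimizer $S^{\dagger}$ as an additional intermediate point and also invokes its optimality; both yield the same bound.
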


The left hand side is the probability that the minimization of our robust $L$-statistic objective returns a $\delta$-optimal model for the target distribution $\mu^\ast$. The right hand side goes to $1$ as $n$ grows. As we show next, this is due to the fact that the class $\{\mu_S\}$ enjoys a uniform convergence property with respect to the functional $\Phi_{W}$. Particularly, we present three uniform bounds that control the rate of decay of the same estimation error $\left\vert \Phi _{W}\left( \mu_{S}\right) -\Phi_{W}\left( \hat{\mu}_{S}\left( \mathbf{X}\right) \right)\right\vert$.

The first two bounds are dimension-free and rely on Rademacher and Gaussian averages of the function class $\left\{ x\mapsto d\left( x,S\right) :S\in \mathcal{S}\right\} $. Bounds for these complexity measures in the practical cases considered can be found in \cite{Maurer2010}. Our last bound is dimension dependent but may outperform the other two if the variance of the robust objective is small under its minimizer. All three bounds require special properties of the weight function $W$. 

For this section we assume $\mu \in \mathcal{P}\left( \mathbb{R}^{d}\right)$ to have compact support, write $\mathcal{X}=$support$\left( \mu \right)$ and let $\mathcal{F}$ be the function class 
\[
\mathcal{F}=\left\{ x\in \mathcal{X}\mapsto d\left( x,S\right) :S\in\mathcal{S}\right\} .
\]
We also set $R_{\max }=\sup_{f\in \mathcal{F}}\left\Vert f\right\Vert_{\infty }$.

The first bound is tailored to the hard-threshold %
$\zeta ^{-1}1_{\left[0,\zeta \right] }$. It follows directly from the elegant recent results of \cite{Lee2020}. For the benefit of the reader we give a proof in the appendix, without any claim of originality and only slightly improved constants.

\begin{theorem}
\label{Theorem hard threshold uniform bound} Let $W=\zeta ^{-1}1_{\left[ 0,\zeta \right] }$ and $\eta >0$. With probability at least $1-\eta $ in $\mathbf{X}\sim \mu ^{n}$ we have that 
\begin{equation*}
\sup_{S\in \mathcal{S}}\left\vert \Phi _{W}\left( \mu _{S}\right) -\Phi_{W}\left(\hat{\mu}_{S}\left(\mathbf{X}\right) \right) \right\vert \leq \frac{2}{\zeta n}\mathbb{E}_{\mathbf{X}}\mathcal{R}\left( \mathcal{F}, \mathbf{X}\right) +\frac{R_{\max }}{\zeta \sqrt{n}}\left( 2+\sqrt{\frac{\ln \left( 2/\eta \right) }{2}}\right) ,    
\end{equation*}
where $\mathcal{R}\left( \mathcal{F},\mathbf{X}\right) $ is the Rademacher average 
\[
\mathcal{R}\left( \mathcal{F},\mathbf{X}\right) =\mathbb{E}_{\mathbf{\epsilon }}\left[ \sup_{S\in \mathcal{S}}\sum_{i=1}^{n}\epsilon _{i}d\left(X_{i},S\right) \right] 
\]
with independent Rademacher variables $\mathbf{\epsilon }=\left( \epsilon_{1},...,\epsilon _{n}\right) $.
\end{theorem}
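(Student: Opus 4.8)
The plan is to reduce the supremum over $S \in \mathcal{S}$ of the deviation $|\Phi_W(\mu_S) - \Phi_W(\hat\mu_S(\mathbf{X}))|$ to a supremum of an empirical-process deviation over the function class $\mathcal{F}$, and then apply a standard symmetrization/Rademacher bound together with a bounded-differences concentration inequality. The key preliminary observation is that for the hard-threshold weight $W = \zeta^{-1}\mathbf{1}_{[0,\zeta]}$, the functional $\Phi_W(\rho)$ equals $\zeta^{-1}\int_0^{F_\rho^{-1}(\zeta)} r\, d\rho(r)$, i.e. $\zeta^{-1}$ times the expectation of the loss restricted to the event $\{d_S(X) \le F_{\mu_S}^{-1}(\zeta)\}$ — this is a \emph{trimmed mean} of the losses. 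So the whole statement is really a uniform convergence result for trimmed means of the class $\mathcal{F}$, which is exactly the setting treated by \cite{Lee2020}.

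\textbf{First} I would establish a deterministic comparison lemma: for a fixed $S$, the trimmed mean $\zeta^{-1}\int_0^{F_\rho^{-1}(\zeta)} r\,d\rho(r)$ can be written as an infimum (or supremum) over a threshold $t \ge 0$ of a Lipschitz function of $\rho$ — concretely, $\Phi_W(\rho) = \inf_{t\ge 0}\big[ \zeta^{-1}\mathbb{E}_\rho[\min(r,t)] + (1 - \zeta^{-1})\,t\big]$ or an analogous identity (the "Kusuoka"-type variational representation of a quantile-trimmed mean). This converts the gap $|\Phi_W(\mu_S) - \Phi_W(\hat\mu_S(\mathbf{X}))|$ into something controlled by $\sup_{t}\big|\mathbb{E}_{\mu}[\min(d_S(X),t)] - \frac1n\sum_i \min(d_S(X_i),t)\big|$, uniformly in $S$ and $t$. \textbf{Then} I would note that the map $u \mapsto \min(u,t)$ is $1$-Lipschitz, so by the Rademacher contraction principle the Rademacher complexity of $\{x \mapsto \min(d_S(x),t): S\in\mathcal{S}, t\ge 0\}$ is bounded (up to the additional infimum over $t$, which only adds a harmless $\ln n$-free term because $t$-thresholding is a monotone operation) by $\mathcal{R}(\mathcal{F},\mathbf{X})$. \textbf{Next}, standard symmetrization gives $\mathbb{E}\sup_S |\Phi_W(\mu_S) - \Phi_W(\hat\mu_S(\mathbf{X}))| \le \frac{2}{\zeta n}\mathbb{E}_{\mathbf{X}}\mathcal{R}(\mathcal{F},\mathbf{X}) + (\text{lower-order})$, where the $\zeta^{-1}$ appears from the normalization in $W$, and the extra $\frac{2R_{\max}}{\zeta\sqrt n}$ absorbs the error from the variational representation and the trimming. \textbf{Finally}, since changing one sample point $X_i$ changes $\Phi_W(\hat\mu_S(\mathbf{X}))$ by at most $O(R_{\max}/(\zeta n))$ uniformly in $S$ (each loss is bounded by $R_{\max}$ and the weight by $\zeta^{-1}$), McDiarmid's inequality upgrades the expectation bound to a high-probability bound, contributing the $\frac{R_{\max}}{\zeta\sqrt n}\sqrt{\ln(2/\eta)/2}$ term.

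\textbf{The main obstacle} will be handling the interaction between the empirical quantile $F_{\hat\mu_S}^{-1}(\zeta)$ and the true quantile $F_{\mu_S}^{-1}(\zeta)$ \emph{uniformly over} $S$: naively one would need a uniform Glivenko–Cantelli bound over the class of CDFs $\{r \mapsto F_{\mu_S}(r)\}$, which could introduce an extra complexity term. The variational (infimum-over-$t$) representation of the trimmed mean is precisely what sidesteps this — it avoids ever inverting the CDF, trading the quantile for an auxiliary scalar variable $t$ over which the functional is jointly convex and Lipschitz, so that the empirical $\Phi_W$ inherits uniform convergence directly from that of $\mathcal{F}$ composed with $1$-Lipschitz truncations. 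Making this reduction precise, and tracking the constants so that the claimed bound (with $2 + \sqrt{\ln(2/\eta)/2}$ and the factor $2$ in front of the Rademacher term) comes out cleanly, is the crux; once the variational identity is in hand, the rest is the routine symmetrization-plus-McDiarmid pipeline. Since the paper explicitly attributes this to \cite{Lee2020} with "only slightly improved constants", I would in practice cite their trimmed-mean uniform convergence theorem and simply re-derive the constants, rather than reprove the contraction argument from scratch.
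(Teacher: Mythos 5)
Your proposal follows essentially the same route as the paper's proof: the Rockafellar--Uryasev-type variational representation of the trimmed mean (the paper's Lemma~\ref{Lemma hard threshold representation}, $\Phi_W(\rho)=\sup_{\lambda\in[0,R_{\max}]}\{\lambda-\zeta^{-1}\int_0^\infty \max\{\lambda-t,0\}\,d\rho(t)\}$), followed by symmetrization, the Rademacher contraction inequality for the $1$-Lipschitz truncation, a triangle-inequality step that produces the extra $2R_{\max}/(\zeta\sqrt{n})$ term, and a bounded-difference argument (via the Kolmogorov-distance Lipschitz property of $\Phi_W$) for the high-probability statement. One small correction: the variational identity is a \emph{supremum} of the concave function $\lambda\mapsto\lambda-\zeta^{-1}\mathbb{E}_\rho[(\lambda-t)_+]$, attained at $\lambda=F_\rho^{-1}(\zeta)$, not an infimum as you tentatively wrote.
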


The next bound requires boundedness and a Lipschitz property for the weight function $W$ which can otherwise be arbitrary. We define the norm $\left\Vert W\right\Vert _{\infty }=\sup_{t\in \left[ 0,1\right] }\left\vert W\left(t\right)\right\vert $ and seminorm $\left\Vert W\right\Vert _{\rm Lip}=\inf \left\{L:\forall t,s\in \left[ 0,1\right] ,~W\left( t\right) -W\left( s\right) \leq
L\left\vert t-s\right\vert \right\} .$

\begin{theorem}
\label{Theorem Lipschitz uniform bound}
For any $\eta >0$
\begin{equation*}
\sup_{S\in \mathcal{S}}\left\vert \Phi _{W}\left( \mu _{S}\right) -\Phi_{W}\left( \hat{\mu}_{S}\left( \mathbf{X}\right) \right) \right\vert \leq \frac{2\sqrt{\pi }\left( R_{\max }\left\Vert W\right\Vert _{\infty}+\left\Vert W\right\Vert _{\rm Lip}\right) }{n}~\mathbb{E}_{\mathbf{X}}\mathcal{G}\left(\mathcal{F},\mathbf{X}\right) \\ +R_{\max }\left\Vert W\right\Vert _{\infty }\sqrt{\frac{2\ln \left( 2/\eta\right) }{n}}    
\end{equation*}
where $\mathcal{G}\left( \mathcal{F},\mathbf{X}\right) $ is the Gaussian average
\[
\mathcal{G}\left( \mathcal{F},\mathbf{X}\right) =\mathbb{E}_{\mathbf{\gamma }}\left[ \sup_{S\in \mathcal{S}}\sum_{i=1}^{n}\gamma _{i}d\left(X_{i},S\right) \right] ,
\]
with independent standard normal variables $\gamma _{1},...,\gamma _{n}$.
\end{theorem}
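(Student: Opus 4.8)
The plan is to control $\Ex_{\mathbf{X}}\sup_{S\in\Models}|\Phi _{W}(\mu _{S})-\Phi _{W}(\hat{\mu}_{S}(\mathbf{X}))|$ and then pass to the high-probability statement by McDiarmid's bounded-differences inequality: replacing a single sample point moves the $L$-statistic $\Phi _{W}(\hat{\mu}_{S}(\mathbf{X}))=\frac1n\sum_i d_{S}(X)_{(i)}W(i/n)$, and hence the supremum over $S$, by at most $R_{\max}\Winf/n$, which produces the additive term $R_{\max}\Winf\sqrt{2\ln(2/\eta)/n}$ (and, together with the Efron--Stein bound on the expectation of a bounded-differences functional, lets the unavoidable $n^{-1/2}$ part of the expectation be absorbed into the same term).

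For the expectation I would first rewrite the criterion using the layer-cake identity $\Phi _{W}(\rho )=\int_{0}^{R_{\max}}\Psi (F_{\rho }(r))\,dr$, valid for non-atomic $\rho$ (the general case by a routine approximation), where $\Psi (t)=\int_{t}^{1}W(u)\,du$ is non-increasing, convex, $\Winf$-Lipschitz, and has $\Wlip$-Lipschitz derivative $-W$. Comparing $W(i/n)$ with $n\int_{(i-1)/n}^{i/n}W$ shows that the plug-in $L$-statistic agrees with $\int_{0}^{R_{\max}}\Psi (\hat F_{S}(r))\,dr$ (where $\hat F_{S}$ is the empirical CDF of the losses $d_{S}(X_i)$) up to $O(R_{\max}\Wlip/n)$, so one may work with the smoothed functional. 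The tangent-line sandwich from convexity of $\Psi$, together with the curvature estimate $|\Psi(b)-\Psi(a)-\Psi'(a)(b-a)|\le\tfrac12\Wlip(b-a)^{2}$, reduces the deviation, after integrating in $r$, to three pieces: (i) the \emph{linear} term $\frac1n\sum_i h_{S}(X_i)-\Ex h_{S}$ with $h_{S}(x)=\int_{d_{S}(x)}^{R_{\max}}W(F_{\mu_{S}}(u))\,du$, a monotone $\Winf$-Lipschitz function of the loss $d_{S}(x)$ that is bounded by $R_{\max}\Winf$; (ii) a nonnegative second-order remainder dominated by $\Wlip\,\sup_{S}\int_{0}^{R_{\max}}(\hat F_{S}(r)-F_{\mu _{S}}(r))^{2}\,dr$, of expected order $R_{\max}\Wlip/n$; and (iii) the Jensen (convexity) bias $\Ex_{\mathbf{X}}\Phi _{W}(\hat{\mu}_{S}(\mathbf{X}))-\Phi _{W}(\mu _{S})\in[0,\,\Wlip R_{\max}/(8n)]$, again via the curvature estimate and $\Ex(\hat F_{S}-F_{\mu_S})^{2}=F_{\mu_S}(1-F_{\mu_S})/n$.

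The main term (i) is handled by symmetrization, $\Ex_{\mathbf{X}}\sup_{S}|\tfrac1n\sum_i(h_{S}(X_i)-\Ex h_{S})|\le\tfrac2n\,\Ex_{\mathbf{X}}\mathcal{R}(\{h_{S}:S\in\Models\},\mathbf{X})$, followed by a contraction step: since $h_{S}=\theta_{S}\circ d_{S}$ with $\theta_{S}(t)=\int_{t}^{R_{\max}}W(F_{\mu_{S}}(u))\,du$ an $\Winf$-Lipschitz, concave, non-increasing transformation that is itself \emph{determined by} $d_{S}$, one bounds this Rademacher average by $\Winf\,\Rad$ up to an absolute constant, and then $\Rad\le\sqrt{\pi/2}\,\Gauss$ converts to Gaussian averages (this is where the $\sqrt{\pi}$ enters). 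Collecting (i) with the $O(R_{\max}(\Winf+\Wlip)/n)$ contributions (ii), (iii) and the discretization error, and absorbing them into the coefficient $2\sqrt{\pi}(R_{\max}\Winf+\Wlip)$, gives the stated inequality.

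The step I expect to be the crux is reconciling the nonlinearity of the $L$-statistic with the linear symmetrization/contraction machinery: (a) making the first-order expansion precise with exactly the advertised split between $\Winf$ and $\Wlip$, and (b) the contraction for the \emph{model-dependent} link $\theta_{S}$, since Talagrand's contraction principle does not apply verbatim when the Lipschitz transformation varies with the class member, so one must invoke a dedicated contraction inequality or exploit the monotonicity/concavity of $\theta_{S}$ and the fact that it is a function of $d_{S}$. The remaining ingredients — McDiarmid, the convexity-based control of bias and remainder, and the Rademacher--Gaussian comparison — are routine.
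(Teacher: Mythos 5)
Your overall architecture --- McDiarmid for the deviation from the mean, the layer-cake identity $\Phi _{W}\left( \rho \right) =\int_{0}^{R_{\max }}\Psi \left( F_{\rho }\left( r\right) \right) dr$ with $\Psi \left( t\right) =\int_{t}^{1}W$, a first-order expansion in $F$ producing a linear empirical process plus a quadratic remainder, and symmetrization of the linear part --- is a reasonable reconstruction. For what it is worth, the paper itself does not prove this theorem from scratch: in the appendix it is dispatched in one line as a consequence of Theorems 2 and 5 of \cite{Maurer2019u} together with the bias bound of Corollary \ref{Corollary DKW and Bias}(ii), so any self-contained argument is necessarily a different route from the paper's.

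However, your proposal has a genuine gap exactly at the step you yourself flag as the crux, and it cannot be waved through. The linear term is $\frac{1}{n}\sum_{i}h_{S}\left( X_{i}\right) -\Ex h_{S}$ with $h_{S}=\theta _{S}\circ d_{S}$ and $\theta _{S}\left( t\right) =\int_{t}^{R_{\max }}W\left( F_{\mu _{S}}\left( u\right) \right) du$ depending on $S$; Talagrand's contraction does not apply to a model-dependent link, and the correct substitute is a Slepian/Sudakov--Fernique comparison of Gaussian processes --- which is the entire reason the theorem is stated with Gaussian rather than Rademacher averages. Concretely one must dominate the increments
\[
\Ex\Bigl( \sum_{i}\gamma _{i}\bigl( \theta _{S}\left( d_{S}\left( X_{i}\right) \right) -\theta _{S^{\prime }}\left( d_{S^{\prime }}\left( X_{i}\right) \right) \bigr) \Bigr) ^{2}
\]
via $\left\vert \theta _{S}\left( a\right) -\theta _{S^{\prime }}\left( b\right) \right\vert \leq \Winf\left\vert a-b\right\vert +\left\Vert \theta _{S}-\theta _{S^{\prime }}\right\Vert _{\infty }$ together with $\left\Vert \theta _{S}-\theta _{S^{\prime }}\right\Vert _{\infty }\leq \Wlip\int_{0}^{R_{\max }}\left\vert F_{\mu _{S}}-F_{\mu _{S^{\prime }}}\right\vert$; it is this second, model-dependent part of the increment that generates the separate $\Wlip$ contribution and, after the Gaussian comparison, the factor $\sqrt{\pi }$ and the precise coefficient $2\sqrt{\pi }\left( R_{\max }\Winf+\Wlip\right)$. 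Asserting the contraction ``up to an absolute constant'' therefore leaves the claimed constant unestablished. A second, smaller gap: your bound on the quadratic remainder by $\Wlip\sup_{S}\int_{0}^{R_{\max }}( \hat{F}_{S}-F_{\mu _{S}}) ^{2}dr$ ``of expected order $R_{\max }\Wlip/n$'' is only valid pointwise in $S$; with the supremum inside the expectation it acquires a complexity factor and must itself be related to $\Gauss$ or absorbed by a separate uniform argument. The McDiarmid step, the bias control, and the $O\left( R_{\max }\Wlip/n\right)$ discretization of the $L$-statistic are fine.
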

Our last result also requires a Lipschitz property for $W$ and uses a
classical counting argument with covering numbers for a variance-dependent
bound.
\begin{theorem}
\label{Theorem Variance uniform bound}
Under the conditions of the previous theorem, with probability at least $1-\eta $ in $\mathbf{X}\sim \mu ^{n}$ we have that for all $S\in \mathcal{S}$%
\begin{equation*}
\left\vert \Phi _{W}\left( \mu _{S}\right) -\Phi _{W}\left( \hat{\mu}_{S}\left(\mathbf{X}\right) \right) \right\vert \leq \sqrt{2V_{S}C}+\frac{6R_{\max }\left( \left\Vert W\right\Vert_{\infty}+\left\Vert W\right\Vert _{\rm Lip}\right) C}{n} +\frac{\left\Vert W\right\Vert _{\infty }R_{\max }}{\sqrt{n}},
\end{equation*}
where $V_{S}$ is the variance of the random variable $\Phi _{W}\left( \hat{\mu}_{S}\left( \mathbf{X}\right) \right) $, and $C$ is the complexity term
\[
C=kd\ln \left( 16n\left\Vert \mathcal{S}\right\Vert ^{2}/\eta \right) 
\]
if $\mathcal{S}$ is the set of sets with $k$ elements, or convex polytopes with $k$ vertices and $\left\Vert \mathcal{S}\right\Vert =\sup_{x\in S\in\mathcal{S}}\left\Vert x\right\Vert $, or
\[
C=kd\ln \left( 16nR_{\max }^{2}/\eta \right) 
\]
if $\mathcal{S}$ is the set of set of $k$-dimensional subspaces.
\end{theorem}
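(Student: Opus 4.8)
The plan is to prove Theorem~\ref{Theorem Variance uniform bound} via a standard chaining/covering-number argument applied to the $L$-statistic functional, combined with Bernstein's inequality to get the variance-dependent leading term. First I would fix a model $S$ and recall that $\Phi_W(\hat\mu_S(\mathbf{X})) = \frac1n\sum_{i=1}^n d_S(X)_{(i)} W(i/n)$ is an $L$-statistic; its expectation is close to $\Phi_W(\mu_S)$ up to a bias term of order $\|W\|_\infty R_{\max}/\sqrt n$ coming from replacing the empirical CDF $F_{\hat\mu_S}$ by $F_{\mu_S}$ inside $W$ — this is where the Lipschitz hypothesis on $W$ enters, since $|W(F_{\hat\mu_S}(r)) - W(F_{\mu_S}(r))| \le \|W\|_{\rm Lip}\|F_{\hat\mu_S}-F_{\mu_S}\|_\infty$ and the DKW inequality controls the sup-norm of the empirical process. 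I would also note that $\Phi_W(\hat\mu_S(\mathbf{X}))$, viewed as a function of the sample, has bounded differences: changing one $X_i$ changes the sorted losses and the weights by $O(R_{\max}(\|W\|_\infty + \|W\|_{\rm Lip})/n)$, so by (a Bernstein-type) concentration it concentrates around its mean with fluctuations governed jointly by the variance $V_S$ and the bounded-difference scale $R_{\max}(\|W\|_\infty+\|W\|_{\rm Lip})/n$. This gives, for fixed $S$, a bound of the shape $|\Phi_W(\mu_S) - \Phi_W(\hat\mu_S(\mathbf{X}))| \lesssim \sqrt{2 V_S t} + \frac{R_{\max}(\|W\|_\infty+\|W\|_{\rm Lip}) t}{n} + \frac{\|W\|_\infty R_{\max}}{\sqrt n}$ with probability $1-e^{-t}$.

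Next I would pass from a single $S$ to a uniform statement over $\mathcal{S}$ by a covering argument. The key geometric input is that for the three model classes in question the map $S \mapsto d_S(\cdot)$ is Lipschitz in an appropriate parametrization: for $k$-point sets or polytopes with $k$ vertices in the ball of radius $\|\mathcal{S}\|$, and for $k$-dimensional subspaces, the relevant parameter space sits in dimension $O(kd)$ and admits an $\epsilon$-net of size $(c/\epsilon)^{O(kd)}$; moreover $\|d_S - d_{S'}\|_\infty \le L\,\mathrm{dist}(S,S')$ for a Lipschitz constant $L$ that is polynomial in $R_{\max}$ and $\|\mathcal{S}\|$. I would choose $\epsilon \asymp 1/n$ so that the net has log-cardinality $\asymp kd\ln(16 n \|\mathcal{S}\|^2/\eta)$ (respectively $kd\ln(16nR_{\max}^2/\eta)$) — this is exactly the complexity term $C$ — and the $\epsilon$-discretization error in $\Phi_W$ (again using $\|W\|_\infty$ and $\|W\|_{\rm Lip}$ together with the $1/n$-Lipschitz dependence of the empirical $L$-statistic on the losses) is absorbed into the $R_{\max}(\|W\|_\infty + \|W\|_{\rm Lip})C/n$ term. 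A union bound over the net with $t \asymp C$ then yields the stated inequality, with the constant $6$ collecting the discretization error, the bias, and the Bernstein correction.

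I would carry out the steps in this order: (1) bias control $|\Ex \Phi_W(\hat\mu_S(\mathbf{X})) - \Phi_W(\mu_S)| \le \|W\|_\infty R_{\max}/\sqrt n$ via DKW and the Lipschitz property of $W$; (2) bounded-difference estimate for $\Phi_W(\hat\mu_S(\mathbf{X}))$ as a function of $(X_1,\dots,X_n)$; (3) Bernstein/Bousquet concentration for fixed $S$ giving the $\sqrt{2V_S C}$ term; (4) construction of the $(1/n)$-net of $\mathcal{S}$ in the $O(kd)$-dimensional parametrization and the corresponding Lipschitz bound on $S \mapsto \Phi_W(\hat\mu_S(\mathbf{X}))$; (5) union bound and combination of all error terms. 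The main obstacle I expect is step (3): $\Phi_W(\hat\mu_S(\mathbf{X}))$ is not a sum of independent terms (the order statistics and the empirical-CDF arguments of $W$ couple the summands), so to get a genuinely variance-dependent bound — rather than one driven only by the bounded-difference range — one needs the self-bounding or entropy-method version of Bernstein (e.g. Bousquet's inequality for suprema, or an argument showing $\Phi_W(\hat\mu_S(\mathbf{X}))$ is a weakly self-bounding function) rather than plain McDiarmid; verifying the requisite variance-of-the-conditional-increments bound for the $L$-statistic is the delicate part. The secondary technical point is making the net-Lipschitz constants explicit enough that the logarithm really collapses to $kd\ln(16n\|\mathcal{S}\|^2/\eta)$ with the stated numerical constant $16$; this is routine but fiddly, and for the subspace case requires working on the Grassmannian with the operator-norm metric on projections.
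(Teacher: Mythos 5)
Your overall architecture coincides with the paper's: a variance-dependent Bernstein-type bound for the $L$-statistic at a fixed model, a $1/n$-net of $\mathcal{S}$ in the sup-distance $d_{\mathcal{X}}(S_1,S_2)=\sup_{x}\left\vert d(x,S_1)-d(x,S_2)\right\vert$ with log-cardinality of order $kd\ln (8n\left\Vert \mathcal{S}\right\Vert^2)$ via the Cucker--Smale covering bound, a union bound, and a separately added bias term of order $\left\Vert W\right\Vert_\infty R_{\max}/\sqrt{n}$. The one step you correctly flag as delicate --- the fixed-$S$ concentration with the \emph{true} variance $V_S$ --- is also the one you do not actually supply, and it is the only genuinely nonstandard ingredient. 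The paper closes it by observing that $\mathbf{x}\mapsto\Phi_W(\hat\mu_S(\mathbf{x}))$ is a $(R_{\max}\left\Vert W\right\Vert_\infty, R_{\max}\left\Vert W\right\Vert_{\rm Lip})$-weakly interacting function in the sense of Maurer's earlier work, for which a Bernstein inequality with the statistic's own variance is available; this is where the hypothesis that $W$ is Lipschitz is really used. Of your two candidate substitutes, Bousquet's inequality does not apply as stated (it concerns suprema of sums of independent terms, which the ordered and reweighted losses are not), while the self-bounding/entropy-method route is essentially the right one --- but you would still have to verify the conditional-variance condition for the $L$-statistic, which is the content of the cited external lemma rather than a routine check.

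Two smaller points. First, your bias bound is attributed to the wrong mechanism: the paper obtains $\Phi_W(\mu_S)-\mathbb{E}\,\Phi_W(\hat\mu_S(\mathbf{X}))\le \left\Vert W\right\Vert_\infty R_{\max}/\sqrt{2n}$ from the fact that $\Phi_W$ is $\left\Vert W\right\Vert_\infty$-Lipschitz in the Wasserstein distance (equivalently $R_{\max}\left\Vert W\right\Vert_\infty$-Lipschitz in the Kolmogorov distance), via an integration-by-parts identity for $\Phi_W$; no Lipschitz continuity of $W$ and no DKW inequality is needed for this term, and your route via $\left\vert W(F_{\hat\mu_S})-W(F_{\mu_S})\right\vert\le\left\Vert W\right\Vert_{\rm Lip}\left\Vert F_{\hat\mu_S}-F_{\mu_S}\right\Vert_\infty$ would introduce an extra $\left\Vert W\right\Vert_{\rm Lip}$ into the $n^{-1/2}$ term that the stated bound does not contain. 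Second, after the union bound over the net you must transfer not only the value $f_{S'}$ but also the variance $V_{S'}$ back to the arbitrary $S$; the paper does this with the estimate $\sqrt{V_{S_1}}-\sqrt{V_{S_2}}\le 2\left\Vert W\right\Vert_\infty d_{\mathcal{X}}(S_1,S_2)$, a step absent from your outline but necessary for the final bound to be stated in terms of $V_S$ rather than the variance at the nearest net point.
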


We state two important conclusion from the above theorems.
\begin{enumerate}
\item Our bounds decrease at least as quickly as $n^{-1/2}\ln n$. However, the bound in the last theorem may be considerably smaller than the previous two if $n$ is large and the unperturbed distribution is very concentrated. The last term, which is of order $n^{-1/2}$ does not carry the burden of the complexity measure and decays quickly. The second term contains the complexity, but it decreases as $n^{-1}$. It can be shown from the Efron-Stein inequality (see e.g.\cite{boucheron2013} Theorem 3.1) that the variance $V_{S}$ of our $L$-statistic estimator is at most of order $n^{-1}$, so the entire bound is at most of order $n^{-1/2}\ln n$. On the other hand $V_{s}$ can be very small. For example, if the unperturbed distribution is completely concentrated at $S^{\ast }$ and $\zeta $ is chosen appropriately $V_{S^{\ast }}=0$ and, apart from the complexity-free last term the decay is as $n^{-1}\ln n$.

\item The above bounds implies that, by equating the estimation error to $\frac{1}{2}\big(1{-}\frac{\beta}{1{-}\lambda}\big){\rm IF}_{\max }\left( \mu_{S^{\ast }},W\right)$ and solving for $\eta$, our method recovers a $\delta$-optimal (w.r.t. $\mu^\ast$) model with probability at least equal to $1-\exp(-n)$.
\end{enumerate}
Finally, we highlight that the above uniform bounds may be of independent interest. For example, consider the case that the test data also come from the perturbed distribution. In such a situation one might be interested in evaluating the performance of the learned model only on data that fit the model class, i.e. $\Phi_W{\mu_S}$. These bounds guarantee that by minimizing the empirical robust functional, one also get good performances on future data from the same distribution.

\section{Algorithms}
\label{sec:4}
In this section we present our algorithm for (approximately) minimizing the robust $L$-statistic $\Phi_W(\hat{\mu}({\bf X})_S)$ w.r.t. model $S \in \Models$. Throughout 
we assume $W$ non-increasing and fixed, and to simplify the notation we use the shorthand  ${\hat \Phi}_S({\bf X}) \equiv \Phi_W(\hat{\mu}({\bf X})_S)$.

\subsection{General Algorithm}
Let ${\bf x} = (x_1,\ldots,x_n)$ be a realization of $\mathbf{X} \sim \mu^n$, consider the following function of $S \in \Models$  
\begin{equation}
{\hat \Phi}_S({\bf x}) = \frac{1}{n} \sum_{i=1}^n W\left(\frac{\pi(i)}{n}\right) d_S(x_i) 
\label{eq:robust_obj}
\end{equation}
where $\pi$ is the ascending ordering of the $d_S(x)_{(i)}$ and notice that minimizing \eqref{eq:robust_obj} is equivalent to minimize \eqref{eq:obj_alt}. Let $p$ any fixed element in ${\rm Sym}_n$\footnote{Here ${\rm Sym}_n$ denotes the set of all $n!$ permutations over $n$ objects.} and let 
\[
\phi_S({\bf x}, p) = \frac{1}{n} \sum_{i=1}^n W\left( \frac{p(i)}{n}\right) d_S(x_i).
\]
In the following we will leverage the following property of $\phi_S$.
\begin{lemma}
\label{lemma:optimality_lemma}
For any $S \in \Models$ and any $p \in {\rm Sym}_n$, if $\pi$ is the ascending ordering of the $d_S(x_i)$s, then $\phi_S({\bf x}, p) \ge \phi_S({\bf x}, \pi) = {\hat \Phi}_S({\bf x})$.
\end{lemma}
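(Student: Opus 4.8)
The plan is to prove this as a rearrangement-type inequality: among all ways of pairing the weights $W(1/n),\dots,W(n/n)$ with the data losses $d_S(x_1),\dots,d_S(x_n)$, the sum $\frac{1}{n}\sum_i W(p(i)/n)\,d_S(x_i)$ is minimized when the \emph{largest} weights are matched with the \emph{smallest} losses. Since $W$ is non-increasing, $W(1/n)\ge W(2/n)\ge\cdots\ge W(n/n)$, so the minimizing assignment pairs $W(1/n)$ with the smallest loss, $W(2/n)$ with the second smallest, and so on — which is exactly the assignment induced by the ascending ordering $\pi$ of the losses. Thus $\phi_S({\bf x},\pi)={\hat\Phi}_S({\bf x})$ is the minimum over ${\rm Sym}_n$, giving $\phi_S({\bf x},p)\ge\phi_S({\bf x},\pi)$ for every $p$.

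First I would fix $S$ and abbreviate $w_j=W(j/n)$ and let $a_1\le a_2\le\cdots\le a_n$ denote the sorted losses $d_S(x)_{(1)}\le\cdots\le d_S(x)_{(n)}$; every permutation $p\in{\rm Sym}_n$ corresponds to a sum $\sum_j w_j a_{\sigma(j)}$ for some permutation $\sigma$, and I want to show this is at least $\sum_j w_j a_j$. The standard argument is an exchange/bubble-sort argument: if $\sigma$ is not the identity, there exist indices $j<k$ with $\sigma(j)>\sigma(k)$ (an inversion); swapping the assignments at $j$ and $k$ changes the sum by $(w_j-w_k)(a_{\sigma(k)}-a_{\sigma(j)})$, and since $w_j-w_k\ge 0$ (as $j<k$ and $W$ non-increasing) and $a_{\sigma(k)}-a_{\sigma(j)}\le 0$ (as $\sigma(k)<\sigma(j)$), this change is $\le 0$, so the swap does not increase the sum. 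Iterating, after finitely many such non-increasing swaps we reach the identity permutation, proving $\sum_j w_j a_{\sigma(j)}\ge\sum_j w_j a_j$ for all $\sigma$.

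The only genuine care-point — and the main obstacle if one is not careful — is the handling of \emph{ties} among the losses and the precise identification of $\phi_S({\bf x},\pi)$ with ${\hat\Phi}_S({\bf x})$ in the notation of \eqref{eq:robust_obj} and \eqref{eq:obj_alt}. When several $d_S(x_i)$ coincide, the "ascending ordering" $\pi$ is not unique, but any choice of $\pi$ breaking ties arbitrarily yields the same value $\frac{1}{n}\sum_i W(\pi(i)/n)d_S(x_i)$ (the set of weight-value products is the same), and this equals $\frac{1}{n}\sum_{i=1}^n d_S(x)_{(i)}W(i/n)$ as in \eqref{eq:obj_alt}; one should remark that the exchange argument still works verbatim since a swap across a tie contributes exactly zero. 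I would also note explicitly at the start that $\phi_S({\bf x},\pi)={\hat\Phi}_S({\bf x})$ is immediate by comparing \eqref{eq:robust_obj} with the definition of $\phi_S$, so the content of the lemma is entirely the inequality $\phi_S({\bf x},p)\ge\phi_S({\bf x},\pi)$, which the rearrangement argument above delivers.
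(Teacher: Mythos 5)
Your proposal is correct and follows essentially the same route as the paper: both reduce the lemma to the rearrangement inequality for a non-increasing weight sequence against the ascending losses and prove it by a pairwise exchange argument with the identical sign analysis $(w_j-w_k)(a_{\sigma(k)}-a_{\sigma(j)})\le 0$ (the paper organizes the swaps as an induction that fixes positions $1,\dots,k$ in turn, while you remove inversions, but this is a cosmetic difference). Your explicit remarks on ties and on the identification $\phi_S(\mathbf{x},\pi)=\hat{\Phi}_S(\mathbf{x})$ are fine and consistent with the paper.
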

We need also the following definition.
\begin{definition}
\label{def:descent_oracle}
A mapping $\DO: \Models \times S_n \rightarrow \Models$ is a Descent Oracle for $\phi_S$ iff for any $S \in \Models$ and any $p \in {\rm Sym}_n$, $\phi_{\DO(S, p)}({\bf x}, p) \le \phi_S({\bf x}, p)$.
\end{definition}
The algorithm attempts to minimize \eqref{eq:robust_obj} via alternating minimization of $\phi_S$. At the beginning, it picks an initial model $S_0$ and sort the induced losses in ascending order, i.e. pick the optimal permutation $\pi_0$. Then it starts iterating this two steps by first calling the descent oracle $\DO(S_t, \pi_t)$ and then sorting the induced losses. At each step either the permutation $\pi_t$ or the model $S_t$ are fixed. Pseudocode is given in \Cref{alg:RUL}.
\begin{algorithm}[t!]
\caption{}
\label{alg:RUL}
\begin{algorithmic}[1]
{
\STATE Pick any $S_0 \in \Models$
\STATE $\pi_0 \gets \argmin_{p \in {\rm Sym}_n} \phi_{S_0}({\bf x}, p)$
\FOR{$t = 1,\ldots, T$} 
\STATE $S_t \gets \DO(S_{t-1}, \pi_{t-1})$
\STATE $\pi_t \gets \argmin_{p \in S_n} \phi_{S_t}({\bf x}, p)$
\ENDFOR 
\STATE \textbf{return} $S_T$
}
\end{algorithmic}
\end{algorithm}
Indeed, at each step the algorithm first finds a descending iteration $S_{t+1}$ of $\phi_{S_t} ({\bf x}, \pi_t)$ and then sort the losses according to $\pi_{t+1}$, an operation that by \Cref{lemma:optimality_lemma} cannot increase the value of $\phi_{S_{t+1}}$. Thus the following holds.
\begin{theorem}
\Cref{alg:RUL} is a descent algorithm for the problem of minimizing \eqref{eq:robust_obj}, i.e. for any $t, {\hat \Phi}_{S_{t+1}}({\bf{x}}) \le {\hat \Phi}_{S_t}({\bf x})$.
\end{theorem}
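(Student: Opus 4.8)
The plan is to prove the single-step inequality $\hat{\Phi}_{S_{t+1}}({\bf x}) \le \hat{\Phi}_{S_t}({\bf x})$ by stringing together the two facts already in hand: Lemma~\ref{lemma:optimality_lemma}, which says that for a fixed model the ascending ordering minimizes $p \mapsto \phi_S({\bf x},p)$ over ${\rm Sym}_n$ and that this minimum value equals $\hat{\Phi}_S({\bf x})$, and Definition~\ref{def:descent_oracle}, which says the oracle $\DO$ never increases $\phi$ at a fixed permutation. No induction is required; applying the one-step bound at every $t$ immediately yields monotonicity of the whole trajectory.

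First I would record the bookkeeping identity $\phi_{S_t}({\bf x}, \pi_t) = \hat{\Phi}_{S_t}({\bf x})$, valid for every $t$ for which $\pi_t$ has been computed (line~2 when $t=0$, line~5 otherwise): there $\pi_t$ is chosen as a minimizer of $p \mapsto \phi_{S_t}({\bf x},p)$ over the finite set ${\rm Sym}_n$, and by Lemma~\ref{lemma:optimality_lemma} the minimum value of this map is exactly $\hat{\Phi}_{S_t}({\bf x})$. I would stress that this step does not require $\pi_t$ to be literally an ascending ordering of the losses $d_{S_t}(x_i)$ — ties among losses are harmless, since any minimizer attains the common minimum value.

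Then I would run the chain. Fix $t$ with $0 \le t \le T-1$, so that line~4 at loop index $t+1$ gives $S_{t+1} = \DO(S_t, \pi_t)$ and line~5 gives $\pi_{t+1} = \argmin_{p} \phi_{S_{t+1}}({\bf x},p)$. Then $\hat{\Phi}_{S_{t+1}}({\bf x}) = \phi_{S_{t+1}}({\bf x},\pi_{t+1})$ by the identity at index $t+1$; next $\phi_{S_{t+1}}({\bf x},\pi_{t+1}) \le \phi_{S_{t+1}}({\bf x},\pi_t)$ because $\pi_{t+1}$ is the argmin; next $\phi_{S_{t+1}}({\bf x},\pi_t) = \phi_{\DO(S_t,\pi_t)}({\bf x},\pi_t) \le \phi_{S_t}({\bf x},\pi_t)$ by the descent-oracle property; and finally $\phi_{S_t}({\bf x},\pi_t) = \hat{\Phi}_{S_t}({\bf x})$ by the identity at index $t$. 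Concatenating these four relations gives the claim.

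Honestly there is no serious obstacle here — the argument is a four-line telescoping computation — so the one thing to be careful about is the index bookkeeping (the oracle in line~4 acts with the permutation produced at the end of the previous iteration, while line~5 re-sorts for the updated model) and being explicit that Lemma~\ref{lemma:optimality_lemma} is precisely what makes the re-sorting step in line~5 non-increasing for $\phi$, so that alternating the oracle step with the sorting step never undoes progress. As an optional remark one could note that $(\hat{\Phi}_{S_t}({\bf x}))_t$ is then non-increasing and bounded below by $0$, hence convergent, although the statement asks only for the descent property.
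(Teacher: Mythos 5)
Your proof is correct and follows exactly the route the paper takes (the paper only states it in a single sentence before the theorem): combine Lemma~\ref{lemma:optimality_lemma}, which makes the re-sorting step in line~5 non-increasing and identifies $\phi_{S_t}({\bf x},\pi_t)$ with $\hat{\Phi}_{S_t}({\bf x})$, with the descent-oracle property of line~4, and chain the two. Your version just makes the four-step inequality and the index bookkeeping explicit, which is a faithful elaboration rather than a different argument.
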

This algorithm is general and to apply it to a specific learning problem an implementation of the descent oracle is needed. The efficiency of \Cref{alg:RUL} depends upon such oracle. In the following we show two descent oracles for the cases of \textsc{kmeans} and \textsc{psa}. We complement these results with a computational lower bound showing that, in general, minimizing \eqref{eq:robust_obj} is NP-Hard.

\newcommand{\figw}{1.7in}
\newcommand{\figh}{1.37in}
\begin{figure*}[t!]
\centering
\begin{subfigure}
\centering
\includegraphics[width=\figw, height=\figh]{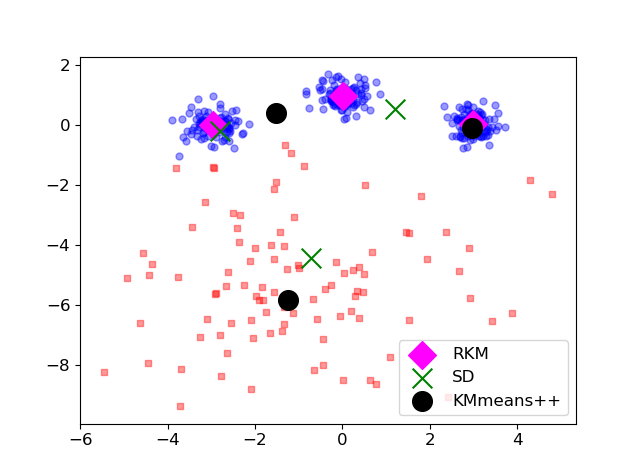}
\end{subfigure}
\begin{subfigure}
\centering
\hspace{-.5truecm}\includegraphics[width=\figw, height=\figh]{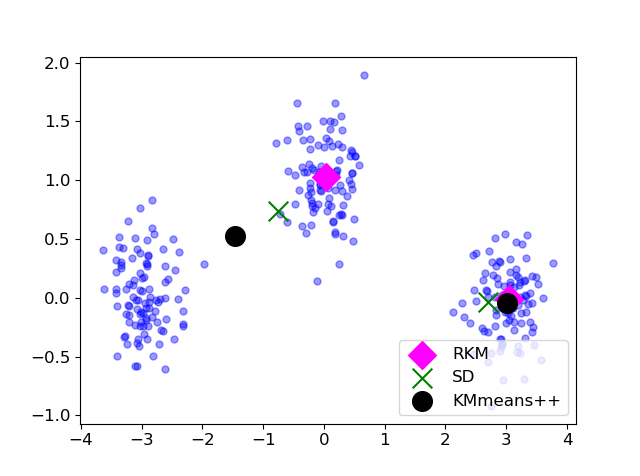}
\end{subfigure}
\begin{subfigure}
\centering
\hspace{-.3truecm}\includegraphics[width=\figw, height=\figh]{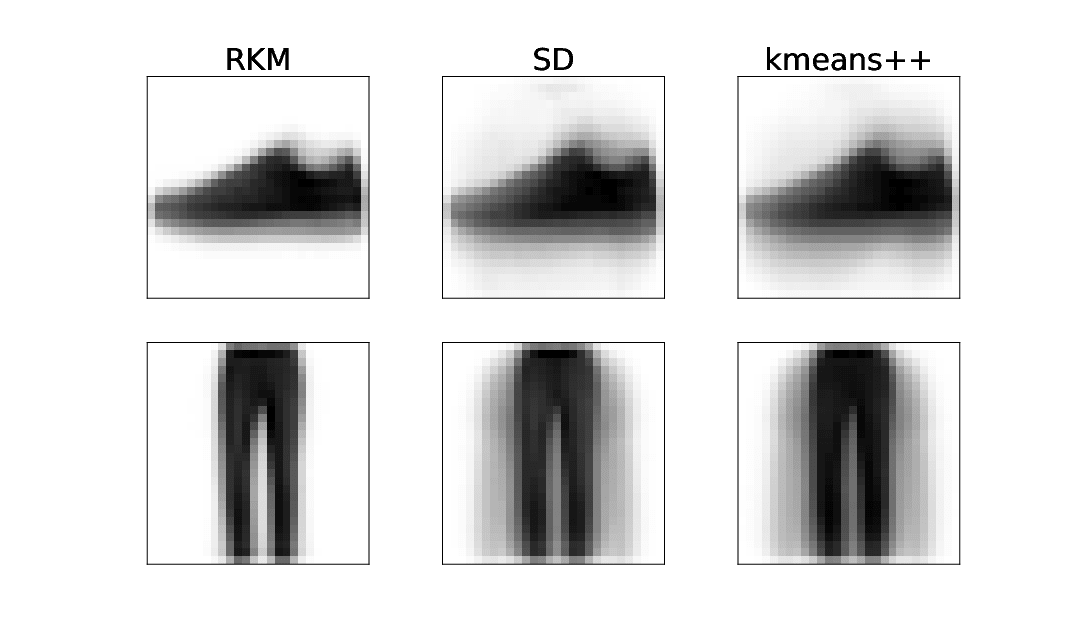}
\end{subfigure}
\begin{subfigure}
\centering
7\hspace{-.5truecm}\includegraphics[width=\figw, height=\figh]{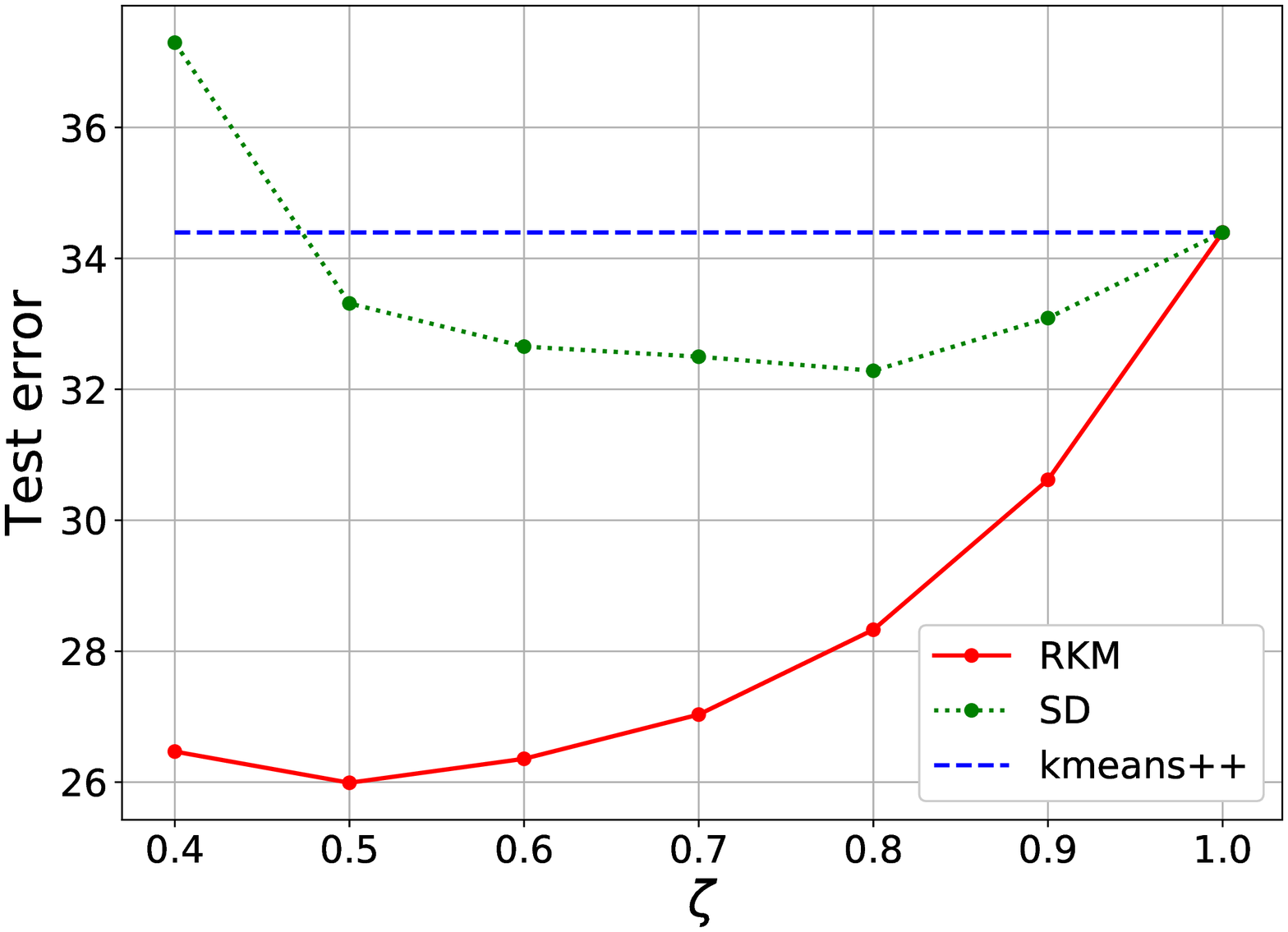}
\end{subfigure}
\vskip -0.2in
\caption{Experiments for \textsc{kmeans} on synthetic data and real data with $k=2$ and $k=3$.}
\label{fig:km}
\end{figure*}
\begin{theorem}
\label{th:hardness}
Minimizing \eqref{eq:robust_obj} for the case of \textsc{kmeans} when $k = 1$ and $W$ is the hard threshold is NP-Hard.
\end{theorem}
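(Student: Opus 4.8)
The plan is to reduce the \textsc{Clique} problem (``given a graph $G=(V,E)$ and an integer $m$, does $G$ contain a clique on $m$ vertices?'') to the decision version of our task (``given $x_1,\dots,x_n\in\mathbb{R}^d$ and a rational $\theta$, is $\min_{\mu}{\hat\Phi}_{\{\mu\}}({\bf x})\le\theta$?''). First I would reformulate the objective. For $k=1$ a model is a single point $\mu\in\mathbb{R}^d$ with $d_{\{\mu\}}(x)=\|x-\mu\|^2$, and with $W=\zeta^{-1}1_{[0,\zeta]}$ and $\zeta=m/n$ (so the empirical weights in \eqref{eq:robust_obj} put weight $n/m$ on the $m$ smallest losses and $0$ on the rest) the objective is $\frac1m\sum_{i=1}^m\ell_{(i)}(\mu)$, where $\ell_{(1)}(\mu)\le\cdots\le\ell_{(n)}(\mu)$ are the ordered values of $\|x_1-\mu\|^2,\dots,\|x_n-\mu\|^2$. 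Using that the centroid minimises the sum of squared distances over any fixed set and that the $m$ smallest losses are, by definition, the minimum of $\frac1m\sum_{i\in T}\|x_i-\mu\|^2$ over $|T|=m$, one gets
\begin{equation*}
\min_{\mu\in\mathbb{R}^d}\ \frac1m\sum_{i=1}^m\ell_{(i)}(\mu)\ =\ \min_{T\subseteq\{1,\dots,n\},\,|T|=m}\ \frac{1}{2m^2}\sum_{i,j\in T}\|x_i-x_j\|^2 .
\end{equation*}
Hence the problem is exactly that of selecting the $m$-point subset of minimal within-cluster scatter, which I would analyse through the squared-distance matrix $D_{ij}=\|x_i-x_j\|^2$.

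Next I would construct the reduction. Given $G=(V,E)$ with $|V|=n$, maximum degree $\Delta$, and target size $m$ (the cases $m\le1$ and $m\ge n$ being trivial, so assume $2\le m\le n-1$, whence $\zeta=m/n\in(0,1)$), I would produce in polynomial time $n$ points with $0/1$ coordinates in dimension $|V|+|E|+\sum_v(\Delta-\deg v)=n+n\Delta-|E|=O(n^2)$, whose pairwise squared distances are \emph{two-valued}: $D_{ij}=2\Delta$ when $\{i,j\}\in E$ and $D_{ij}=2\Delta+2$ otherwise. Concretely, for vertex $v$ concatenate the standard basis vector $e_v$ on the block indexed by $V$, the incidence vector of $v$ on the block indexed by $E$, and $\Delta-\deg(v)$ ones in a block private to $v$; the private padding equalises all squared norms to $1+\Delta$ while leaving $x_v\cdot x_w=\mathbf 1_{\{v\sim w\}}$ for $v\ne w$ (two distinct vertices lie on a common edge iff they are adjacent, and then on exactly one). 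For such $D$ and $|T|=m$ one computes $\frac{1}{2m^2}\sum_{i,j\in T}D_{ij}=\frac{(\Delta+1)(m-1)}{m}-\frac{2\,e(T)}{m^2}$, where $e(T)$ is the number of edges inside $T$; this is minimised precisely by maximising $e(T)$, it equals $\theta:=\frac{\Delta(m-1)}{m}$ when $T$ induces an $m$-clique, and is at least $\theta+2/m^2$ otherwise. Therefore $G$ has a clique of size $m$ iff the minimum of our $L$-statistic objective is $\le\theta$; since the minimiser is attained (the objective is continuous and coercive in $\mu$) and is the centroid of an integer point set, the optimal value is rational, so this is a bona fide polynomial-time many-one reduction.

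I expect the one genuinely delicate point to be the embedding step: producing a prescribed two-valued squared-distance pattern with \emph{rational, polynomially bounded} coordinates, rather than only as an abstract metric of negative type whose Cholesky factor might be irrational. The degree-equalising private coordinates are exactly what make this work cleanly with $0/1$ entries in quadratic dimension; once they are in place, the combinatorial identity for $\sum_{i,j\in T}D_{ij}$ and the bookkeeping of the threshold $\theta$ are routine. Since \textsc{Clique} is NP-hard, minimising \eqref{eq:robust_obj} for \textsc{kmeans} with $k=1$ under the hard-threshold weight is NP-hard as well.
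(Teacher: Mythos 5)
Your proposal is correct and follows the same overall strategy as the paper's proof: reformulate the $k=1$ hard-threshold objective as selecting the size-$m$ subset of minimal within-cluster scatter, rewrite that scatter via the pairwise identity $\sum_{i\in T}\|x_i-\mu_T\|^2=\frac{1}{2|T|}\sum_{i,j\in T}\|x_i-x_j\|^2$, and then reduce from a clique problem by embedding the vertices of a graph as nearly-orthogonal integer vectors whose pairwise distances encode adjacency. Where you differ is in the gadget and the source problem. The paper reduces from $\nicefrac{n}{2}$-\textsc{clique} using $\phi(i)=A_{i:}+ne_i$, which only guarantees that edge distances and non-edge distances fall into two separated ranges; your degree-equalising construction (basis vector, incidence vector, private padding) produces \emph{exactly} two-valued squared distances $2\Delta$ and $2\Delta+2$, which makes the value of the objective on a subset $T$ an exact affine function of $e(T)$ and renders the threshold computation clean rather than an inequality-chasing argument -- this is arguably tidier and less error-prone than the paper's bounds. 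One small caveat: because you take the clique size $m$ as a free parameter and set $\zeta=m/n$, your reduction as stated proves hardness when $\zeta$ is part of the input; the paper's choice of $\nicefrac{n}{2}$-\textsc{clique} pins down the \emph{fixed} threshold $\zeta=1/2$. Your construction handles this too -- just specialise to $m=n/2$, since half-clique is itself NP-complete -- but it is worth saying explicitly if the theorem is read as a statement about a single fixed weight function $W$.
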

Notice that in the case of \textsc{kmeans} when $W$ is the identity, the problem reduces to finding the optimal \textsc{kmeans} solution, a problem which is known to be hard. However, \textsc{kmeans} admits a simple closed form solution when $k = 1$; in some sense minimizing the robust objective is even harder than standard \textsc{kmeans}. The immediate consequence of this result is that approximate solutions to the problem of minimizing \eqref{eq:robust_obj} are the best one can get; our algorithms, are a first step towards the design of methods with provable approximation guarantees.

\paragraph{$k$-Means Clustering ({\textsc KMEANS}).} In this case $\Models$ is the set of all possible $k$-tuples of centers in $\R^d$ and $d_S(x) = \min_{c \in S} \|x - c\|_2^2$. Keeping fixed the permutation $p$, we consider as descent oracle the following Lloyd-like update for the centers. Each center $c \in S$ induces a cluster formed by a subset of training points $x_i$, $i \in {\cal I}$ which are closer to $c$ than every other center (breaking ties arbitrarily). The overall \emph{loss} of representing point in ${\cal I}$ with $c$ is
\[
\sum_{i \in {\cal I}} W\left(\frac{p(i)}{n}\right) \|x_i - c \|_2^2.
\]
This loss is minimized at 
\[{\hat c} = \frac{1}{\sum_{i \in {\cal I}}W\left(\frac{p(i)}{n}\right)} \sum_{i \in {\cal I}} W\left(\frac{p(i)}{n}\right) x_i,
\] 
so the following holds.

\begin{proposition}
Given $S$ and $p$, the mapping that for every $c \in S$ returns the ${\hat c}$ defined above is a descent oracle for \textsc{kmeans} and its runtime is $O(nkd)$.
\end{proposition}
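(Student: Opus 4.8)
The plan is to check the two assertions separately: that the proposed map meets the descent-oracle inequality of \Cref{def:descent_oracle}, and that it runs in time $O(nkd)$. Fix $S=\{c_1,\dots,c_k\}\in\Models$ and a permutation $p\in{\rm Sym}_n$, and for $j=1,\dots,k$ let $\mathcal{I}_j\subseteq\{1,\dots,n\}$ be the set of indices of the points assigned to $c_j$ (nearest center, ties broken arbitrarily), so that $\{\mathcal{I}_j\}_j$ partitions $\{1,\dots,n\}$ and $d_S(x_i)=\|x_i-c_j\|_2^2$ for $i\in\mathcal{I}_j$. Writing $w_i:=W(p(i)/n)$, which are \emph{nonnegative} since $W$ takes values in $[0,\infty)$, we get the cluster decomposition $n\,\phi_S(\mathbf{x},p)=\sum_{i=1}^n w_i\,d_S(x_i)=\sum_{j=1}^k\sum_{i\in\mathcal{I}_j}w_i\|x_i-c_j\|_2^2$.

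The descent property then follows from two monotone steps. First, the re-centering step: for each $j$ the function $c\mapsto\sum_{i\in\mathcal{I}_j}w_i\|x_i-c\|_2^2$ is a convex quadratic whose unique minimizer (obtained by setting the gradient to zero) is exactly the weighted centroid $\hat c_j=\big(\sum_{i\in\mathcal{I}_j}w_i\big)^{-1}\sum_{i\in\mathcal{I}_j}w_i x_i$ from the statement (with the convention that an empty cluster keeps its center), so $\sum_{i\in\mathcal{I}_j}w_i\|x_i-\hat c_j\|_2^2\le\sum_{i\in\mathcal{I}_j}w_i\|x_i-c_j\|_2^2$, and summing over $j$ shows this does not increase $n\,\phi_S(\mathbf{x},p)$. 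Second, writing $S'=\{\hat c_1,\dots,\hat c_k\}=\DO(S,p)$ and letting $j(i)$ be the cluster index of $i$, the elementary bound $d_{S'}(x_i)=\min_{c\in S'}\|x_i-c\|_2^2\le\|x_i-\hat c_{j(i)}\|_2^2$ holds because $\hat c_{j(i)}\in S'$. Multiplying by $w_i\ge 0$ and summing over $i$ gives $n\,\phi_{S'}(\mathbf{x},p)=\sum_i w_i\,d_{S'}(x_i)\le\sum_j\sum_{i\in\mathcal{I}_j}w_i\|x_i-\hat c_j\|_2^2\le n\,\phi_S(\mathbf{x},p)$, which is precisely $\phi_{\DO(S,p)}(\mathbf{x},p)\le\phi_S(\mathbf{x},p)$.

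For the runtime, computing all nearest-center assignments costs $O(nkd)$ (each of the $n$ points is compared against $k$ centers, each comparison a squared distance in $\R^d$), evaluating the weights $w_i=W(p(i)/n)$ costs $O(n)$, and forming the $k$ weighted centroids costs $O(nd)$ in total since each point contributes to exactly one centroid; hence $O(nkd)$ overall.

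There is no real obstacle here — the argument is routine — and the only points that need a little care are: the nonnegativity of $W$, used in both monotone steps so that multiplying the pointwise inequalities by $w_i$ preserves their direction; the bookkeeping that a reassignment changes which center serves a point but never which weight $w_i$ multiplies it, since the weight is attached to $x_i$ via $p$, not to the cluster; and the degenerate empty-cluster case, handled by leaving the corresponding center unchanged.
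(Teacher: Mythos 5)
Your proof is correct and matches the paper's (very terse) justification: the paper simply notes that the per-cluster weighted loss is minimized at the weighted centroid $\hat c$ and concludes "so the following holds," and your two monotone steps — re-centering within each fixed cluster, then using $d_{S'}(x_i)\le\|x_i-\hat c_{j(i)}\|_2^2$ with $w_i\ge 0$ — are exactly the details being left implicit, together with the standard Lloyd-style runtime count. No gaps; your explicit handling of nonnegative weights and empty clusters is a welcome bit of extra care.
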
 

The resulting algorithm can is a generalization of the method proposed in \cite{Chawla2013}.

\paragraph{Principal Subspace Analysis (\textsc{psa}).} In this case $\Models$ is the set of all possible $d \times k$ matrices $U$ such that $U^\top U = I_d$, $d_S(x) = \|x - UU^\top x\|_2^2$ and
\[
\phi_{U}({\bf x}, p) = \sum_{i=1}^n W\left(\frac{p(i)}{n}\right)
 \|x - UU^\top x_i\|_2^2.
\]
Given $p$, it is easy to see that the above function is minimized at the matrix ${\hat U}$ formed by stacking as columns the $k$ eigenvectors of $\sum_{i}^n W\left(\frac{p(i)}{n}\right) x_ix_i^\top $ associated to the top $k$ eigenvalues, so the following holds.

\begin{proposition}
Given $U$ and $p$, the mapping that returns the ${\hat U}$ defined above is a descent oracle for  \textsc{PSA} and its runtime is $O(\min\{d^3 + nd^2, n^3 + n^2d\})$.
\end{proposition}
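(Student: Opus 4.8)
The plan is to prove a stronger statement than required: the matrix $\hat U$ is a \emph{global} minimizer of $U\mapsto \phi_U(\mathbf{x},p)$ over all $U\in\R^{d\times k}$ with $U^\top U = I_k$, which in particular yields $\phi_{\hat U}(\mathbf{x},p)\le \phi_U(\mathbf{x},p)$ for every such $U$ and hence the descent-oracle property of Definition~\ref{def:descent_oracle}. Then I would separately bound the cost of assembling and diagonalizing the relevant matrix.

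For the optimality claim, first rewrite the per-point distortion. Writing $P=UU^\top$ for the orthogonal projector onto the column span of $U$, the identities $P^\top=P$ and $P^2=P$ give $\|x-UU^\top x\|_2^2 = \|x\|_2^2 - x^\top UU^\top x = \|x\|_2^2 - \Tr(U^\top x x^\top U)$. Summing against the weights $w_i := W(p(i)/n)\ge 0$ yields
\[
\phi_U(\mathbf{x},p) = \sum_{i=1}^n w_i\|x_i\|_2^2 \;-\; \Tr\!\bigl(U^\top A\, U\bigr), \qquad A := \sum_{i=1}^n w_i\, x_i x_i^\top .
\]
The first sum is independent of $U$, so minimizing $\phi_U$ is equivalent to maximizing $\Tr(U^\top A U)$ subject to $U^\top U = I_k$. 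Since the $w_i$ are nonnegative, $A$ is symmetric positive semidefinite, and by the Ky Fan maximum principle (equivalently, iterating Courant–Fischer) one has $\max_{U^\top U=I_k}\Tr(U^\top A U) = \sum_{j=1}^k \lambda_j(A)$, where $\lambda_1(A)\ge\cdots\ge\lambda_k(A)$ are the $k$ largest eigenvalues, with the maximum attained exactly when the columns of $U$ span a top-$k$ eigenspace of $A$ — in particular by $\hat U$ obtained by stacking $k$ orthonormal top eigenvectors. This proves $\phi_{\hat U}(\mathbf{x},p)\le\phi_U(\mathbf{x},p)$.

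For the runtime, I would analyze two constructions of $A$ and take the cheaper. Directly: forming $A$ costs $O(nd^2)$ (one $O(d^2)$ rank-one update per sample), and a symmetric eigendecomposition of the $d\times d$ matrix $A$ costs $O(d^3)$, for a total of $O(d^3+nd^2)$. Alternatively, set $B=[\sqrt{w_1}\,x_1,\dots,\sqrt{w_n}\,x_n]\in\R^{d\times n}$, so $A=BB^\top$; the nonzero spectrum of $BB^\top$ coincides with that of the $n\times n$ Gram matrix $B^\top B$, and if $B^\top B v=\lambda v$ with $\lambda>0$ then $Bv/\|Bv\|_2$ is a unit eigenvector of $A$ with eigenvalue $\lambda$. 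Forming $B^\top B$ costs $O(n^2 d)$, its eigendecomposition $O(n^3)$, and mapping the $k\le n$ top eigenvectors back into $\R^d$ costs $O(n^2 d)$, for a total of $O(n^3+n^2 d)$. Taking the minimum gives the claimed $O(\min\{d^3+nd^2,\;n^3+n^2d\})$.

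The argument is essentially routine linear algebra; the one step needing a little care is the Gram-matrix reduction used when $n<d$, namely checking that the pushed-forward vectors are nonzero and mutually orthogonal for the top-$k$ eigenpairs (which holds because the corresponding eigenvalues are positive and eigenvectors of $B^\top B$ for distinct eigenvalues are orthogonal, with any degenerate eigenspace handled by choosing an orthonormal basis), so that $\hat U$ genuinely has orthonormal columns. Invoking the Ky Fan / Courant–Fischer characterization of $\sum_{j=1}^k\lambda_j(A)$ is the only ``named'' ingredient and is entirely standard.
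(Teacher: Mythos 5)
Your proof is correct and is essentially the argument the paper has in mind: the paper offers no explicit proof of this proposition (it is asserted in the main text with ``it is easy to see'' and does not reappear in the appendix), and your reduction to maximizing $\Tr(U^\top A U)$ with $A=\sum_i w_i x_i x_i^\top$ positive semidefinite, resolved by the Ky Fan / Courant--Fischer principle, together with the two-route runtime analysis (direct $d\times d$ eigendecomposition versus the Gram-matrix trick), is the standard and complete way to fill in that claim. No gaps; if anything, you prove the stronger global-optimality statement, which subsumes the descent-oracle property required by Definition~\ref{def:descent_oracle}.
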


\section{Experiments}
\label{sec:exps}
The purpose of the numerical experiments is to show that:

\begin{itemize}
\item Our algorithms for \textsc{psa} and \textsc{kmeans} outperform standard SVD, \textsc{kmeans++} and the \emph{Spherical Depth} method (SD) in presence of outliers, while obtain similar performances on clean data. 

\item Our algorithms on real data are not too sensitive to the parameters of the weight function. In particular, we show that there exist a wide-range of $\zeta$ values such that using the hard-threshold function leads to good results.

\item In the case of \textsc{kmeans} our method is able to accurately reconstruct some of the true centers even when the value of $k$ is miss-specified. This matches the second remark after Theorem 1.
\end{itemize}

\paragraph{Implemented Algorithms.} For \textsc{kmeans}++ we used the {\em sklearn} implementation fed with the same parameters for the maximum number of iterations $T$ and the initializations $r$ we used for our method. Notice that $T$ is only an upper bound to the number of iterations, the algorithms stop when the difference between the current objective value and the previous one is smaller than $10^{-7}$. To set $r$ we used the largest value before diminishing returns were observed. For standard PSA we compute the SVD of $\sum_{i} x_ix_i^\top$. The SD method is a general purpose pre-processing technique that is applied on the data before performing \textsc{kmeans} and \textsc{PSA} (see e.g. \cite{Elmore2006, Fraiman2019}). This method computes a score for each point in the dataset by counting in how many balls, whose antipodes are pairs of points in the data, it is contained. The $1-\zeta n$ points with the smallest scores are discarded. If the data contain $n$ points, the methods needs to check $O(n^2)$ balls for each of the $n$ point resulting in a runtime of $O(n^3)$. For scalability on real data, we implemented a randomized version of this method that for each point only check $M$ balls picked uniformly at random from the set of all possible balls and used $M = O(n)$; the resulting runtime is $O(n^2)$. In the following we refers to our methods as RKM and RPSA respectively. All experiments have been run on an standard laptop equipped with an Intel i9 with 8 cores each working at 2,4 GHz and 16 GB of RAM DDR4 working at 2,6 GHz.
\subsection{{\textsc KMEANS} Clustering}
\paragraph{Synthetic Data.} We run two experiments with artificial data in $\R^2$. In the first experiment, we generated 300 inliers from 3 isotropic truncated Gaussians (100 points each) with variance $0.1$ along both axis and mean $(-3, 0)$, $(0, 1)$ and $(3, 0)$ respectively. We then corrupt the data adding 100 points from a fourth isotropic truncated Gaussian centered at $(-1, -5)$ with variance $5$ along both axis. For both RKM and \textsc{kmeans++} we $T = 10$ and $r = 30$. We initialized \RKM\ with uniform centers and set $\zeta = 0.75$, the same $\zeta$ is used for SD. Results are shown in \Cref{fig:km} top left, where it is possible to see that while \RKM\ recovers the true centers, SD and \textsc{kmeans}++ both fail badly placing one centers in the middle of the two clusters and the other close to the mean of the perturbing distribution. In the second experiment, we generated 300 points from the same 3 inliers Gaussians and set the algorithms with $k = 2$ and $\zeta=0.6$, while $T$ and $r$ are as above. Results are shown in the top right of \Cref{fig:km}, where it is possible to see that \textsc{kmeans}++ and SD -- although to a lesser extend -- wasted a center to merge 2 clusters, while RKM correctly recovers 2 out of the 3 centers.

\begin{figure}[t!]
\centering
\includegraphics[width=1.76in, height=1.37in]{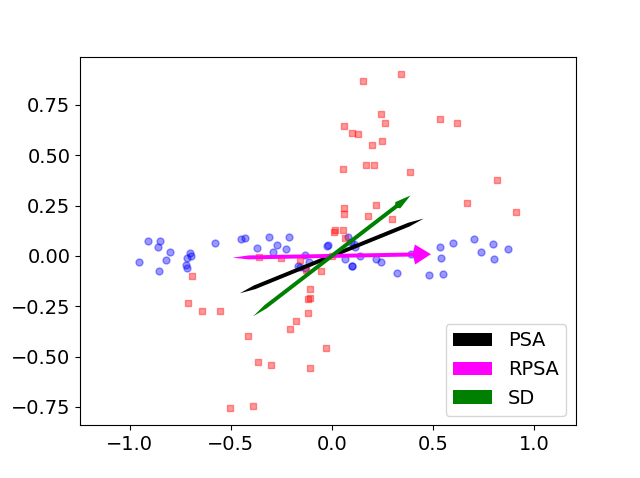}
\includegraphics[width=1.76in, height=1.37in]{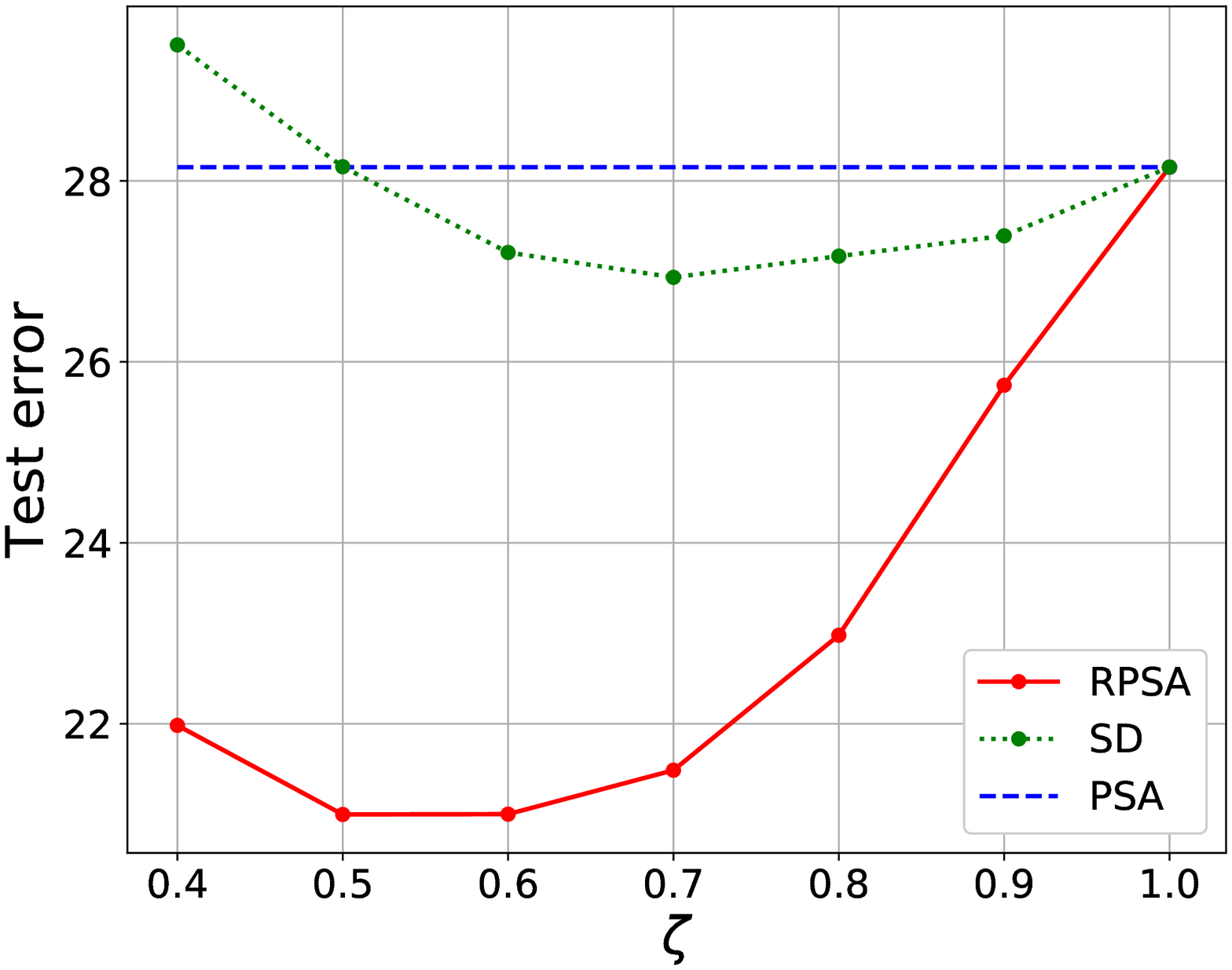}
\caption{Experiments for \textsc{PSA} on synthetic data and real data with $k=1$ and $k=2$.}
\label{fig:psa}
\end{figure}
\paragraph{Real Data.} %

In the synthetic experiments we choose $\zeta$ according to the exact fraction of outliers, a quantity which is usually unknown in practice. Here we show that there is a wide range of values for $\zeta$ such that RKM performs better than \textsc{kmeans}++. We used the Fashion-MNIST dataset which consists of about 70000 $28\times28$ images of various types of clothes splitted in a training set of 60000 images and a test set of 10000 images. Specifically, there are 10 classes in the dataset: t-shirts, trousers, pullover, dresses, coats, sandals, shirts, sneakers, bags and ankle boots. The training data were generated by sampling 1000 points, from the training set, each from the sneakers and the trousers classes as inliears, and 250 points from each other class as outliers.  The resulting fraction of outliers is about 0.5.  The test data consist of all the sneakers and the trousers in the test set and has size of about 2000.  We run the algorithms with $T = 50, r = 30, M = 4000, k = 2$ and $\zeta$ in the range $[0.4,1]$. Results are shown in the bottom row of \Cref{fig:km}. In the lower left, it is possible to see that the centers learned by RKM at the optimal threshold value $\zeta = 0.5$ look good, while the centers found by SD and \textsc{kmeans}++ are affected by the outliers. Specifically, the such centers arise from the overlap of multiple classes. One center suffers from the effect of the other two shoes classes (sandald and boots) as witnessed by the elongated background area, while the other is affected by the clothes classes (most noticeably, the coats) as suggested by background shadow. As for the reconstruction error, RKM outperforms SD uniformly over the range of considered values of $\zeta$.

\subsection{Principal Subspace Analysis}

\paragraph{Synthetic Data.} We run a synthetic experiment with artificial data in $\R^2$. We generate 50 points from the uniform distribution over $[-1, 1] \times [-0.1, 0.1]$ as inliers and 50 points for the uniform distribution over $\R_{++} \cup \R_{--} \cap B(0, 1)$\footnote{Here with $\R_{++}$ and $\R_{--}$ we denote the top right and the bottom left orthant of $\R^2$.} as outliers. We run RPSA with $T = 50$, $r = 30$, $\zeta = 0.5$ and initialize $U$ as a normalized Gaussian matrix. We set $k = 1$ for all algorithms. Results are shown in the left plot of \Cref{fig:psa} where it is possible to see that the principal subspace learned by RPSA is not affected by the outliers, as opposed to SD and PSA.

\paragraph{Real Data.} %

Similarly to the case of \textsc{kmeans}, we tested our method on real data for a range of values of $\zeta$. We used again the same setting as before on the Fashion-MNIST dataset. We run the algorithms we $T=50$, $r=5$, $M = 4000$, $k=2$ and $\zeta$ in the range $[0.4, 1]$. Results are shown in the right plot of \Cref{fig:psa}, where it is possible our algorithm outperforms both PSA and does better than SD.

\section{Conclusions and Future Works}
In this work, we address the important problem of designing robust methods for unsupervised learning. We proposed a novel general framework, based on the minimization of an $L$-statistic, to design algorithms that are resilient to the presence of outliers and/or to model miss-specification. Our method has strong statistical guarantees, is flexible enough to incorporate many problems in unsupervised learning and is effective in practice as the experiments reveal. On the other hand, several extensions can be considered. First, here we studied in details \textsc{kmeans} and \textsc{psa}, but our theory also covers the cases of \textsc{kmedian}, sparse coding or non-negative matrix factorization. A related improvement also regards the design of methods for the choice of $\zeta$ which do not require an estimate of the fraction of outliers. Second, we believe that this framework can be extended to supervised learning problems such us canonical correlation analysis and partial least squares. Third, our algorithm has only a descent property, and it would be interesting to design algorithms with stronger guarantees such as provable approximation properties.

\bibliographystyle{plain}
\bibliography{icml2021}

\newpage
\onecolumn

\appendix
\section*{Supplementary Material}

The supplementary material is organized as follows: %
\begin{itemize}
    \item 
In Appendix~\ref{app:A} we prove the statistical properties of the proposed method; in particular we prove Theorems 1, 3 and 5. 
\item In Appendix \ref{app:B} we give a proof of the hardness result described by Theorem 10. 
\item Finally, in Appendix \ref{app:C} we present additional experiments with the proposed method for the case of \textsc{k-means}.
\end{itemize}

\section{Statistical Properties of the Proposed Method}
\label{app:A}
We first analyze some basic properties of the functional $\Phi _{W}$. The following is easily seen to be an alternative definition of $\Phi _{W}$.
\[
K_{W}\left( t\right)  =\int_{0}^{t}W\left( u\right) du
\]
and\[
\Phi _{W}\left( \rho \right)  =\int_{0}^{\infty }rdK_{W}\left( F_{\rho
}\left( r\right) \right) \text{ for }\rho \in \mathcal{P}\left( \left[
0,\infty \right) \right) \text{.}
\]
From this we find 

\begin{lemma}
\label{Lemma derivative} For $\rho _{1},\rho _{2}\in \mathcal{P}$ and $W$ bounded
\begin{equation}
\Phi _{W}\left( \rho _{1}\right) -\Phi _{W}\left( \rho _{2}\right)
=-\int_{0}^{\infty }\left( K_{W}\left( F_{\rho _{1}}\left( r\right) \right)
-K_{W}\left( F_{\rho _{2}}\left( r\right) \right) \right) dr,
\label{Difference formula}
\end{equation}
and
\[
\frac{d}{dt}\Phi _{W}\left( \left( 1-t\right) \rho _{1}+t\rho _{2}\right)
=\int_{0}^{\infty }W\left( F_{\left( 1-t\right) \rho _{1}+t\rho _{2}}\left(
r\right) \right) \left( F_{\rho _{1}}\left( r\right) -F_{\rho _{2}}\left(
r\right) \right) dr.
\]
\end{lemma}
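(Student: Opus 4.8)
The plan is to work directly from the alternative representation $\Phi_W(\rho) = \int_0^\infty r\,dK_W(F_\rho(r))$ just established, reading the right-hand side as a Lebesgue--Stieltjes integral against the non-decreasing function $r \mapsto K_W(F_\rho(r))$, which has total variation $K_W(1)<\infty$.

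First I would prove the difference formula \eqref{Difference formula} by integration by parts. Writing $g_\rho(r) = K_W(F_\rho(r))$, the Stieltjes integration-by-parts identity $\int_0^\infty r\,dg_\rho(r) = \big[r\,g_\rho(r)\big]_0^\infty - \int_0^\infty g_\rho(r)\,dr$ gives
\begin{equation*}
\Phi_W(\rho_1) - \Phi_W(\rho_2) = \Big[r\big(g_{\rho_1}(r) - g_{\rho_2}(r)\big)\Big]_0^\infty - \int_0^\infty \big(g_{\rho_1}(r) - g_{\rho_2}(r)\big)\,dr.
\end{equation*}
The boundary term at $0$ vanishes trivially. For the term at infinity, observe that $K_W$ is constant on $[\zeta,1]$ since $W$ is zero there (and in the generalization sections the measures $\mu_S$ have bounded support, so $F_{\rho_i}(r)=1$ for $r$ large); hence $g_{\rho_1}(r) - g_{\rho_2}(r)$ is identically zero for all sufficiently large $r$, and the boundary contribution vanishes, yielding \eqref{Difference formula}. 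An alternative derivation that sidesteps Stieltjes integration by parts uses Tonelli's theorem: writing $r = \int_0^\infty \mathbf{1}[s<r]\,ds$ and exchanging orders of integration gives $\Phi_W(\rho) = \int_0^\infty \big(K_W(1) - K_W(F_\rho(s))\big)\,ds$, from which \eqref{Difference formula} follows by subtraction; this form also makes integrability transparent.

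For the derivative formula I would exploit linearity of cumulative distribution functions in the measure: with $\rho_t = (1-t)\rho_1 + t\rho_2$ we have $F_{\rho_t} = (1-t)F_{\rho_1} + tF_{\rho_2}$, so $F_{\rho_{t+h}}(r) = F_{\rho_t}(r) + h\big(F_{\rho_2}(r) - F_{\rho_1}(r)\big)$. Applying \eqref{Difference formula} to the pair $(\rho_{t+h},\rho_t)$ and dividing by $h$ gives
\begin{equation*}
\frac{\Phi_W(\rho_{t+h}) - \Phi_W(\rho_t)}{h} = -\int_0^\infty \frac{K_W\big(F_{\rho_{t+h}}(r)\big) - K_W\big(F_{\rho_t}(r)\big)}{h}\,dr.
\end{equation*}
Since $K_W$ is Lipschitz with constant $\|W\|_\infty$ and, by the same eventual-constancy argument, the integrand vanishes outside a bounded set of $r$ independent of small $h$, the integrands are dominated by a fixed integrable function and dominated convergence applies. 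The difference quotient $\big(K_W(a+h\Delta) - K_W(a)\big)/h$ tends to $W(a)\,\Delta$ at every Lebesgue point $a$ of $W$; for $a = F_{\rho_t}(r)$ this holds for Lebesgue-almost every $r$ when $W$ is continuous, so passing to the limit yields $\frac{d}{dt}\Phi_W(\rho_t) = -\int_0^\infty W\big(F_{\rho_t}(r)\big)\big(F_{\rho_2}(r) - F_{\rho_1}(r)\big)\,dr$, which is exactly the claimed identity.

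The algebra here is routine; the two points needing genuine care are the vanishing of the boundary term in the integration by parts (this is precisely where the hypothesis that $W$ vanishes near $1$, or that the measures are compactly supported, is used) and, in the second part, the interchange of limit and integral together with identifying the pointwise limit of the difference quotient as $W(F_{\rho_t}(r))\big(F_{\rho_2}(r)-F_{\rho_1}(r)\big)$. The latter is the subtle step, since it is delicate at discontinuities of $W$ — although $W$ monotone keeps the exceptional set of levels countable, one still wants $F_{\rho_t}$ not to be locally constant at such a level; in the applications $W$ is taken Lipschitz, which removes the issue entirely, so I expect this to be the only real obstacle to a fully rigorous write-up.
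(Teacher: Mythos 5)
Your proposal is correct and follows essentially the same route as the paper: Stieltjes integration by parts for the difference formula, then differentiation under the integral using $K_{W}^{\prime}=W$ for the derivative. One caveat: your primary justification for the vanishing boundary term at infinity invokes hypotheses (compactly supported measures, or $W$ vanishing on $\left[\zeta,1\right]$) that are not part of the lemma, which assumes only that $W$ is bounded; the paper instead bounds $r\left\vert K_{W}\left(F_{\rho_{1}}\left(r\right)\right)-K_{W}\left(F_{\rho_{2}}\left(r\right)\right)\right\vert \leq \left\Vert W\right\Vert_{\infty}\, r\left\vert \rho_{1}\left(r,\infty\right)-\rho_{2}\left(r,\infty\right)\right\vert \rightarrow 0$ using only the finite first moments of members of $\mathcal{P}$. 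Your Tonelli alternative closes this in full generality, and your attention to the limit of the difference quotient at discontinuities of $W$ (dominated convergence, Lebesgue points) addresses a point the paper's one-line appeal to the chain rule glosses over.
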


\begin{proof}
Since members of $\mathcal{P}$ have finite first moments we have for any $\rho \in \mathcal{P}$ that $r\rho \left( r,\infty \right) \rightarrow 0$ as $r\rightarrow \infty $, so
\[
\lim_{r\rightarrow \infty }r\left( K_{W}\left( F_{\rho _{1}}\left( r\right)
\right) -K_{W}\left( F_{\rho _{2}}\left( r\right) \right) \right) \leq
\left\Vert W\right\Vert _{\infty }\lim_{r\rightarrow \infty }r\left\vert
\rho _{2}\left( r,\infty \right) -\rho _{1}\left( r,\infty \right)
\right\vert =0,
\]
and the formula (\ref{Difference formula}) follows from integration by parts. Thus for arbitrary $\rho \in \mathcal{P}$
\[
\Phi _{W}\left( \left( 1-t\right) \rho _{1}+t\rho _{2}\right) -\Phi
_{W}\left( \rho \right) =-\int_{0}^{\infty }\left( K_{W}\left( \left(
1-t\right) F_{\rho _{1}}\left( r\right) +tF_{\rho _{2}}\left( r\right)
\right) -K_{W}\left( F_{\rho }\left( r\right) \right) \right) dr.
\]
Taking the derivative w.r.t. $t$ and using the chain rule and $K_{W}^{\prime}=W$ gives the second identity. 
\end{proof}

We now analyze the influence function of the functional $\Phi _{W}$.

\begin{lemma}
\label{Lemma Influence} Let $R\in \left[ 0,\infty \right) $, $\rho \in \mathcal{P}\left( \left[ 0,\infty \right) \right)$
	
(i) If $W$ is nonnegative, bounded and $W\left( t\right) =0$ for $t\geq \zeta $ and $\zeta <1$ then 

\[
IF\left( R;\rho ,\Phi _{W}\right) \leq IF_{\max }\left( \rho ,W\right)
:=\int_{0}^{F_{\rho }^{-1}\left( \zeta \right) }W\left( F_{\rho }\left(
r\right) \right) F_{\rho }\left( r\right) dr.
\]
	
(ii) If $\zeta >0$, $W=\zeta ^{-1}1_{\left[ 0,\zeta \right] }$, $\rho $ is non-atomic and 
$F_{\rho }^{-1}\left( \zeta \right) \left( \rho \right) =F_{\rho }^{-1}\left( \zeta \right) =F_{\rho }^{-1}\left( \zeta \right)$.
Then
\begin{eqnarray*}
IC\left( R;\rho ,\Phi _{W}\right)  &=&\left\{ 
\begin{array}{ccc}
\zeta ^{-1}\left( R+\left( \zeta -1\right) F_{\rho }^{-1}\left( \zeta\right) \right) -\Phi _{W}\left( \rho \right)  & ~{\rm if}~ & 0\leq R\leq F_{\rho }^{-1}\left( \zeta \right)  \\ 
F_{\rho }^{-1}\left( \zeta \right) -\Phi _{W}\left( \rho \right)  & ~{\rm if}~
& F_{\rho }^{-1}\left( \zeta \right) <R%
\end{array}
\right.  \\
&\leq &F_{\rho }^{-1}\left( \zeta \right) -\Phi _{W}\left( \rho \right) .
\end{eqnarray*}%
 
\end{lemma}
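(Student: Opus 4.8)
The plan is to derive both parts directly from the second identity of Lemma~\ref{Lemma derivative}, specialized to the point-mass perturbation $\rho_2 = \delta_R$.

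First I would note that $\delta_R \in \mathcal{P}([0,\infty))$ (it has finite first moment), so Lemma~\ref{Lemma derivative} applies with $\rho_1 = \rho$ and $\rho_2 = \delta_R$. Since $F_{\delta_R}(r) = \mathbf{1}\{r \geq R\}$, evaluating that derivative at $t = 0$ yields the identity
\[
{\rm IF}\left(R;\rho,\Phi_W\right) = \int_0^R W\!\left(F_\rho(r)\right) F_\rho(r)\,dr + \int_R^\infty W\!\left(F_\rho(r)\right)\!\left(F_\rho(r) - 1\right) dr ,
\]
which is the structural fact driving everything that follows.

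For part (i): because $W \geq 0$ and $F_\rho \leq 1$, the second integral is non-positive, so ${\rm IF}(R;\rho,\Phi_W) \leq \int_0^R W(F_\rho(r))F_\rho(r)\,dr$. Since $W$ vanishes on $[\zeta,1]$, the integrand is supported on $\{r : F_\rho(r) < \zeta\}$, which lies (up to a null set) inside $[0, F_\rho^{-1}(\zeta)]$; hence replacing the upper limit $R$ by $F_\rho^{-1}(\zeta)$ does not decrease the integral and produces exactly ${\rm IF}_{\max}(\rho,W)$. The only delicate point is reconciling the support of $W$ with the generalized inverse $F_\rho^{-1}$, which is covered by the quantile conventions in the hypothesis.

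For part (ii): I substitute $W = \zeta^{-1}1_{[0,\zeta]}$ into the displayed identity and write $q = F_\rho^{-1}(\zeta)$. Using that $\rho$ is non-atomic (so $F_\rho$ is continuous, with $F_\rho(q) = \zeta$ by the quantile hypothesis), on $[0,q]$ we have $W(F_\rho(r)) = \zeta^{-1}$ while beyond $q$ it is $0$, so splitting into the cases $R \leq q$ and $R > q$ reduces the computation to evaluating $\int_0^q F_\rho(r)\,dr$. Integration by parts turns this into $\zeta q - \int_0^q r\,d\rho(r) = \zeta q - \zeta\,\Phi_W(\rho)$, where I use the alternative description $\Phi_W(\rho) = \zeta^{-1}\int_0^q r\,d\rho(r)$ coming from $K_W(t) = \zeta^{-1}\min(t,\zeta)$. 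Substituting back gives the stated two-case expression; the final inequality follows since on $\{R \leq q\}$ the quantity $\zeta^{-1}(R + (\zeta - 1)q) - \Phi_W(\rho)$ is increasing in $R$ and equals $q - \Phi_W(\rho)$ at $R = q$, which is precisely its value on $\{R > q\}$. I expect the main obstacle to be purely the bookkeeping around $F_\rho^{-1}(\zeta)$ — plateaus of $F_\rho$ at level $\zeta$ and the equality $F_\rho(F_\rho^{-1}(\zeta)) = \zeta$ — rather than any analytic difficulty, since differentiation under the integral and the derivative formula itself are already supplied by Lemma~\ref{Lemma derivative}.
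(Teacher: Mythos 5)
Your proposal is correct and follows essentially the same route as the paper: both derive ${\rm IF}(R;\rho,\Phi_W)=\int_0^\infty W(F_\rho(r))\left(F_\rho(r)-F_{\delta_R}(r)\right)dr$ from the second identity of Lemma~\ref{Lemma derivative} with $\rho_2=\delta_R$, obtain (i) from the sign of the integrand and the support of $W$, and obtain (ii) by the same integration by parts giving $\int_0^{F_\rho^{-1}(\zeta)}F_\rho(r)\,dr=\zeta\left(F_\rho^{-1}(\zeta)-\Phi_W(\rho)\right)$ followed by the case split on $R$. Your explicit split of the integral at $R$ and the monotonicity-in-$R$ argument for the final inequality are just slightly more detailed renderings of the paper's steps.
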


\begin{proof}
(i) In the second conclusion of Lemma \ref{Lemma derivative}, letting $\rho_{2}=\delta _{R}$ and taking the limit $t\rightarrow 0$ we obtain the influence function
\[
IF\left( R;\rho ,\Phi _{W}\right) =\int_{0}^{\infty }W\left( F_{\rho }\left(
r\right) \right) \left( F_{\rho }\left( r\right) -F_{\delta _{R}}\left(
r\right) \right) dr.
\]
Part (i)\ follows.

(ii) From Lemma \ref{Lemma derivative} we get%
\begin{eqnarray*}
\frac{d}{dt}\Phi \left( \left( 1-t\right) \rho +t\delta _{R}\right) \left(t=0\right)  &=&\zeta ^{-1}\int_{0}^{F_{\rho }^{-1}\left( \zeta \right)	}\left( F_{\rho }\left( r\right) -1_{\left[ R,\infty \right) }\left(r\right) \right) dr \\
&=&\zeta ^{-1}\left( \int_{0}^{F_{\rho }^{-1}\left( \zeta \right) }F_{\rho}\left( r\right) dr-\int_{0}^{F_{\rho }^{-1}\left( \zeta \right) }1_{\left[		R,\infty \right) }\left( r\right) dr\right).
\end{eqnarray*}
From integration by parts the first term in parenthesis is $\zeta \left(F_{\rho }^{-1}\left( \zeta \right) -\Phi _{W}\left( \rho \right) \right)$. The second term is zero if $F_{\rho }^{-1}\left( \zeta \right) <R$, otherwise it is $F_{\rho }^{-1}\left( \zeta \right) -R$. This gives the identity. For the inequality observe that $R\leq F_{\rho }^{-1}\left( \zeta\right) $ implies $\zeta ^{-1}\left( R+\left( \zeta -1\right) F_{\rho}^{-1}\left( \zeta \right) \right) \leq F_{\rho }^{-1}\left( \zeta \right)$. 
\end{proof}

\subsection{Resilience to Perturbations}
\label{app:A1}

We prove Theorem 1.

\begin{lemma}
\label{Lemma Robust}Let $S,S^{\ast }\in \mathcal{S}$, $\mu \in \mathcal{P}\left( \mathbb{R}^{d}\right) $, and suppose that there exists $r^{\ast }>0$ and $\alpha \in \left( 0,1\right) $ such that

\begin{equation}
\forall r\in \left( 0,r^{\ast }\right) ,\text{ }F_{\mu _{S}}\left( r\right)
\leq \alpha F_{\mu _{S^{\ast }}}\left( r\right) \text{.}  \label{Condition A}
\end{equation}
If $W$ is nonzero on a set of positive Lebesgue measure, nonincreasing and $W\left( t\right) =0$ for all $t\geq F_{\mu _{S^{\ast }}}\left( r^{\ast}\right) $ then

\[
\Phi _{W}\left( \mu _{S}\right) -\Phi _{W}\left( \mu _{S^{\ast }}\right)
\geq \left( 1-\alpha \right) \int_{0}^{r^{\ast }}W\left( F_{\mu _{S^{\ast}}}\left( r\right) \right) F_{\mu _{S^{\ast }}}\left( r\right) dr=\left(1-\alpha \right) IF_{\max }\left( \mu _{S^{\ast }},W\right) >0\text{.}
\]
\end{lemma}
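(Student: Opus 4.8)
The plan is to derive the inequality directly from the difference formula \eqref{Difference formula} of Lemma~\ref{Lemma derivative}, by writing the gap $\Phi_W(\mu_S)-\Phi_W(\mu_{S^\ast})$ as a single integral over $r\in[0,\infty)$, splitting it at the threshold $r^\ast$, and bounding each piece separately.

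First I would apply \eqref{Difference formula} with $\rho_1=\mu_S$ and $\rho_2=\mu_{S^\ast}$, which turns the quantity to be bounded into
\[
\Phi_W(\mu_S)-\Phi_W(\mu_{S^\ast})=\int_0^\infty\big(K_W(F_{\mu_{S^\ast}}(r))-K_W(F_{\mu_S}(r))\big)\,dr ,
\]
and record two elementary facts about $K_W(t)=\int_0^tW(u)\,du$: it is nondecreasing (since $W\ge 0$), and it is constant on $[\zeta,1]$ with $\zeta=F_{\mu_{S^\ast}}(r^\ast)$ (since $W$ vanishes there).

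Next I would dispose of the tail $r>r^\ast$: there $F_{\mu_{S^\ast}}(r)\ge\zeta$, so $K_W(F_{\mu_{S^\ast}}(r))=K_W(\zeta)\ge K_W(F_{\mu_S}(r))$ because $K_W$ is nondecreasing and $F_{\mu_S}(r)\le1$; hence this part of the integral is nonnegative and may be dropped. On the bulk $r\in(0,r^\ast)$ I would use Assumption \eqref{Condition A}, namely $F_{\mu_S}(r)\le\alpha F_{\mu_{S^\ast}}(r)$, together with monotonicity of $K_W$, to lower-bound the integrand by $K_W(p)-K_W(\alpha p)=\int_{\alpha p}^p W(u)\,du$ where $p=F_{\mu_{S^\ast}}(r)$; then, $W$ being nonincreasing, $W(u)\ge W(p)$ on $[\alpha p,p]$, so this is at least $(1-\alpha)\,p\,W(p)$. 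Integrating over $(0,r^\ast)$ and using the nonnegativity of the tail contribution yields
\[
\Phi_W(\mu_S)-\Phi_W(\mu_{S^\ast})\ge(1-\alpha)\int_0^{r^\ast}W(F_{\mu_{S^\ast}}(r))\,F_{\mu_{S^\ast}}(r)\,dr .
\]

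To finish, I would observe that $W(t)=0$ for $t\ge\zeta$ forces the last integrand to vanish for $r>F_{\mu_{S^\ast}}^{-1}(\zeta)$, so the integral coincides with $IF_{\max}(\mu_{S^\ast},W)$ from Lemma~\ref{Lemma Influence}; and that strict positivity holds because $W$, being nonincreasing and nonzero on a set of positive Lebesgue measure, is bounded below by a positive constant on some $[0,t_0]$ with $0<t_0<\zeta$, so the integrand is strictly positive on the $r$-interval where $0<F_{\mu_{S^\ast}}(r)\le t_0$. I expect the only delicate point to be this last step: verifying that truncating the integral at $F_{\mu_{S^\ast}}^{-1}(\zeta)$ rather than at $r^\ast$ makes no difference (which uses that $W$ vanishes at $\zeta$ together with monotonicity of $F_{\mu_{S^\ast}}$), and that the interval $\{\,0<F_{\mu_{S^\ast}}(r)\le t_0\,\}$ genuinely has positive length, i.e. that $F_{\mu_{S^\ast}}$ is not degenerate at its left endpoint. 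The integral estimates in the earlier steps are routine once \eqref{Difference formula} is invoked.
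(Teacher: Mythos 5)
Your argument is correct and follows essentially the same route as the paper's: both split the difference formula \eqref{Difference formula} of Lemma~\ref{Lemma derivative} at $r^{\ast}$, discard the nonnegative tail $r\geq r^{\ast}$, and on $(0,r^{\ast})$ combine Condition~\eqref{Condition A} with the monotonicity of $W$ to lower-bound the integrand by $(1-\alpha)W(F_{\mu_{S^{\ast}}}(r))F_{\mu_{S^{\ast}}}(r)$ --- your estimate $K_W(p)-K_W(\alpha p)=\int_{\alpha p}^{p}W(u)\,du\geq(1-\alpha)pW(p)$ is exactly what the paper obtains via the fundamental-theorem-of-calculus representation of $K_W(F_{\rho_1})-K_W(F_{\rho_2})$. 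Your handling of the final identification with $IF_{\max}(\mu_{S^{\ast}},W)$ and of strict positivity is, if anything, more explicit than the paper's, which asserts both without comment.
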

		
\begin{proof}
By Lemma \ref{Lemma derivative} and the fundamental theorem of calculus

\[
\Phi _{W}\left( \mu _{S}\right) -\Phi _{W}\left( \mu _{S^{\ast }}\right)
=\int_{0}^{\infty }\left( \int_{\left[ 0,1\right] }W\left( sF_{\mu_{S}}\left( r\right) +\left( 1-s\right) F_{\mu _{S^{\ast }}}\left( r\right)\right) ds\right) \left( F_{\mu _{S^{\ast }}}\left( r\right) -F_{\mu_{S}}\left( r\right) \right) dr.
\]
Suppose first $r^{\ast }\leq r$. If $W>0$ then $sF_{\mu _{S}}\left( r\right)+\left( 1-s\right) F_{\mu _{S^{\ast }}}\left( r\right) <F_{\mu _{S^{\ast}}}\left( r^{\ast }\right) \leq F_{\mu _{S^{\ast }}}\left( r\right)$ and therefore $F_{\mu _{S}}\left( r\right) <F_{\mu _{S^{\ast }}}\left( r^{\ast}\right) $, so the integrand is positive, or else $W=0$. For a lower bound we can therefore restrict the integration in $r$ to the interval $\left[0,r^{\ast }\right) $.
					
If $r<r^{\ast }$ then by (\ref{Condition A}) $sF_{\mu _{S}}\left( r\right)+\left( 1-s\right) F_{\mu _{S^{\ast }}}\left( r\right) <F_{\mu _{S^{\ast}}}\left( r\right) \leq F_{\mu _{S^{\ast }}}\left( r^{\ast }\right) $ so $W\left( sF_{\mu _{S}}\left( r\right) +\left( 1-s\right) F_{\mu _{S^{\ast}}}\left( r\right) \right) \geq W\left( F_{\mu _{S^{\ast }}}\left( r\right)\right) $, since $W$ is nonincreasing. The conclusion follows from (\ref{Condition A}). 
\end{proof}
								
We restate Assumption A and Theorem 1.
								
\textbf{Assumption A}. There exists $S_{0}\in \mathcal{S}$, $\delta >0$, $\beta \in \left( 0,1-\lambda \right) $ and a scale parameter $r^{\ast }\in\left( 0,1\right) $ (in units of squared euclidean distance), such that for every model $S\in \mathcal{S}$ satisfying $\Phi \left( \mu _{S}^{\ast}\right) >\Phi \left( \mu _{S_{0}}^{\ast }\right) +\delta $ we have $F_{\mu_{S}}\left( r\right) <\beta F_{\mu _{S_{0}}^{\ast }}\left( r\right) $ for all $r\leq r^{\ast }$. 
							
\begin{theorem}
Let $\mu ^{\ast },\nu \in \mathcal{P}\left( \mathbb{R}^{d}\right) $, $\mu=\left( 1-\lambda \right) \mu ^{\ast }+\lambda \nu $, and $\lambda \in\left( 0,1\right) $ and suppose there are $S_{0}$, $r^{\ast }$, $\delta >0$ and $0<\beta <1-\lambda $, satisfying Assumption A. Suppose that $W$ is nonzero on a set of positive Lebesgue measure, nonincreasing and $W\left(t\right) =0$ for $t\geq \zeta =F_{\mu _{S_{0}}}\left( r^{\ast }\right)$.
\end{theorem}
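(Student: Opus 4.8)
The plan is to combine the influence-function analysis of Lemma~\ref{Lemma Influence}(i) with the concentration comparison of Lemma~\ref{Lemma Robust}, using the mixture structure $\mu = (1-\lambda)\mu^\ast + \lambda\nu$ to relate $F_{\mu_S}$ to $F_{\mu^\ast_S}$. First I would record the elementary mixture identity: for every $S$ and every $r$,
\[
F_{\mu_S}(r) = (1-\lambda)F_{\mu^\ast_S}(r) + \lambda F_{\nu_S}(r) \geq (1-\lambda)F_{\mu^\ast_S}(r),
\]
and in particular $\zeta = F_{\mu_{S_0}}(r^\ast) \geq (1-\lambda)F_{\mu^\ast_{S_0}}(r^\ast) > 0$, so that $W$ is not identically zero on the relevant range and, by Lemma~\ref{Lemma Influence}(i) applied to $\rho = \mu_{S_0}$, the quantity $\mathrm{IF}_{\max}(\mu_{S_0},W) = \int_0^{F_{\mu_{S_0}}^{-1}(\zeta)} W(F_{\mu_{S_0}}(r))F_{\mu_{S_0}}(r)\,dr$ is strictly positive.

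The main step is the contrapositive: assume $\Phi(\mu^\ast_S) > \Phi(\mu^\ast_{S_0}) + \delta$ and derive that \eqref{Delta condition} must be violated. Under this assumption Assumption~A gives $F_{\mu_S}(r) < \beta F_{\mu^\ast_{S_0}}(r)$ for all $r \leq r^\ast$. Now I want to feed this into Lemma~\ref{Lemma Robust} with $S^\ast$ there replaced by $S_0$ and with the comparison done at the \emph{perturbed} level. The needed hypothesis of Lemma~\ref{Lemma Robust} is $F_{\mu_S}(r) \leq \alpha F_{\mu_{S_0}}(r)$ on $(0,r^\ast)$ for some $\alpha \in (0,1)$; using the mixture bound $F_{\mu_{S_0}}(r) \geq (1-\lambda)F_{\mu^\ast_{S_0}}(r)$ together with Assumption~A yields
\[
F_{\mu_S}(r) < \beta F_{\mu^\ast_{S_0}}(r) \leq \frac{\beta}{1-\lambda}\, F_{\mu_{S_0}}(r),
\]
so we may take $\alpha = \beta/(1-\lambda) \in (0,1)$. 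Then Lemma~\ref{Lemma Robust} (whose hypotheses on $W$ — nonincreasing, nonzero on a positive-measure set, vanishing for $t \geq F_{\mu_{S_0}}(r^\ast) = \zeta$ — are exactly the ones assumed here, with $S_0$ playing the role of $S^\ast$) gives
\[
\Phi_W(\mu_S) - \Phi_W(\mu_{S_0}) \geq \Bigl(1 - \frac{\beta}{1-\lambda}\Bigr)\mathrm{IF}_{\max}(\mu_{S_0},W) > 0,
\]
with strict inequality over the bound in \eqref{Delta condition} — more precisely, \eqref{Delta condition} would force equality at best, and since $W$ is nonzero on a positive-measure set the inequality in Lemma~\ref{Lemma Robust} combined with $\alpha<1$ is strict unless $\Phi(\mu^\ast_S)\le\Phi(\mu^\ast_{S_0})+\delta$ already. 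Thus any $S$ satisfying \eqref{Delta condition} must have $\Phi(\mu^\ast_S) \leq \Phi(\mu^\ast_{S_0}) + \delta$. Finally, for the last sentence, apply this to $S = S^\dagger$: since $S^\dagger$ minimizes $\Phi_W(\mu_S)$ over $\mathcal{S}$, we have $\Phi_W(\mu_{S^\dagger}) \leq \Phi_W(\mu_{S_0})$, so the left side of \eqref{Delta condition} is $\leq 0 \leq$ the right side (using $\mathrm{IF}_{\max} > 0$ and $\beta < 1-\lambda$), and the conclusion $\Phi(\mu^\ast_{S^\dagger}) \leq \Phi(\mu^\ast_{S_0}) + \delta$ follows.

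\textbf{Main obstacle.} The delicate point is matching the strictness of the inequalities so that the non-strict hypothesis \eqref{Delta condition} still forces the conclusion: Assumption~A gives a \emph{strict} inequality $F_{\mu_S}(r) < \beta F_{\mu^\ast_{S_0}}(r)$, and one needs to track this through Lemma~\ref{Lemma Robust} to conclude that the gap $\Phi_W(\mu_S) - \Phi_W(\mu_{S_0})$ is \emph{strictly} larger than $(1-\tfrac{\beta}{1-\lambda})\mathrm{IF}_{\max}(\mu_{S_0},W)$ whenever $\Phi(\mu^\ast_S)>\Phi(\mu^\ast_{S_0})+\delta$ — otherwise the contrapositive only yields the boundary case. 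The secondary bookkeeping issue is verifying that the $W$-hypotheses of Lemma~\ref{Lemma Robust} are met with $r^\ast$ unchanged and that $F_{\mu_{S_0}}^{-1}(\zeta) = r^\ast$ (or at least that $W$ vanishes past $r^\ast$), which is immediate from $\zeta = F_{\mu_{S_0}}(r^\ast)$ but should be stated. Everything else is the routine mixture inequality and a direct invocation of the two lemmas.
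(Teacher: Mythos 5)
Your proof follows essentially the same route as the paper's: pass to the contrapositive, combine Assumption A with the mixture inequality $F_{\mu_{S_{0}}}\left( r\right) \geq \left( 1-\lambda \right) F_{\mu_{S_{0}}^{\ast }}\left( r\right)$ to obtain the hypothesis of Lemma~\ref{Lemma Robust} with $\alpha =\beta /\left( 1-\lambda \right)$, and then use the minimality of $S^{\dagger }$ for the final claim, exactly as the paper does. The only difference is that you explicitly flag the strict-versus-nonstrict boundary case in \eqref{Delta condition} (Lemma~\ref{Lemma Robust} only yields $\geq$, so the contrapositive nominally covers only the strict inequality), a point the paper's own proof silently elides, so your version is if anything slightly more careful.
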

							
\begin{proof}
Let $S,S_{0}\in \mathcal{S}$ and assume that $\Phi \left( \mu _{S}^{\ast}\right) >\Phi \left( \mu _{S_{0}}^{\ast }\right) +\delta $. Then for $r\leq r^{\ast }$ Assumption A implies $F_{\mu _{S}}\left( r\right) \leq \beta F_{\mu _{S_{0}}^{\ast }}\left( r\right) \leq \frac{\beta }{1-\lambda }F_{\mu_{S_{0}}}\left( r\right) $, and the conditions on $W$ also imply that $W=0$ on $\left[ F_{\mu _{S^{\ast }}}\left( r^{\ast }\right) ,1\right] $. Thus Lemma \ref{Lemma Robust} with $a=\beta /\left( 1-\lambda \right) <1$ gives%

\[
\Phi _{W}\left( \mu _{S}\right) -\Phi _{W}\left( \mu _{S_{0}}\right) \geq \left( 1-\frac{\beta }{1-\lambda }\right) IF_{\max }\left( \mu_{S_{0}},W\right) >0.
\]
Thus, if $\Phi _{W}\left( \mu _{S}\right) -\Phi _{W}\left( \mu_{S_{0}}\right) <\left( 1-\frac{\beta }{1-\lambda }\right) IF_{\max }\left(
\mu _{S_{0}},W\right) $, we must have $\Phi \left( \mu _{S}^{\ast }\right)\leq \Phi \left( \mu _{S_{0}}^{\ast }\right) +\delta $. The condition (\ref	{Condition A}) is clearly always satisfied by the minimizer $S^{\dagger }\left( \mu \right) $ of $\Phi _{W}\left( \mu _{S}\right) $. 
\end{proof}
							
\subsection{Generalization}
\label{app:A2}

A second application of Lemma \ref{Lemma derivative} gives a Lipschitz property of $\Phi _{W}$ relative to the Wasserstein and Kolmogorov metrics for distributions with bounded support.
							
\begin{lemma}
\label{Lemma Lipschitz 1}
For $\rho _{1},\rho _{2}\in \mathcal{P}$ with support in $\left[ 0,R_{\max }\right] $ and $\left\Vert W\right\Vert_{\infty }<\infty $%

\[
\Phi _{W}\left( \rho _{2}\right) -\Phi _{W}\left( \rho _{1}\right)  \leq
\left\Vert W\right\Vert _{\infty }d_{\mathcal{W}}\left( \rho _{1},\rho_{2}\right)
\]
and
\[
\Phi _{W}\left( \rho _{2}\right) -\Phi _{W}\left( \rho _{1}\right)  \leq
R_{\max }\left\Vert W\right\Vert _{\infty }d_{K}\left( \rho _{1},\rho_{2}\right) .
\]
Here $d_{\mathcal{W}}\left( \rho _{1},\rho _{2}\right) =\left\Vert F_{\rho_{1}}-F_{\rho _{2}}\right\Vert _{1}$ is the 1-Wasserstein distance and $d_{K}\left( \rho _{1},\rho _{2}\right) =\left\Vert F_{\rho _{1}}-F_{\rho_{2}}\right\Vert _{\infty }$ the Kolmogorov-Smirnov distance.
\end{lemma}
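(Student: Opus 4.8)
The plan is to derive both inequalities directly from the difference formula of Lemma~\ref{Lemma derivative} combined with the elementary observation that the antiderivative $K_W$ is Lipschitz. First I would record that, since $W$ is bounded, for all $t,s\in[0,1]$ we have $|K_W(t)-K_W(s)| = |\int_s^t W(u)\,du| \le \|W\|_\infty\,|t-s|$, i.e. $K_W$ is $\|W\|_\infty$-Lipschitz on $[0,1]$.

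Next I would invoke the identity from Lemma~\ref{Lemma derivative},
\[
\Phi_W(\rho_2)-\Phi_W(\rho_1) = -\int_0^\infty \bigl(K_W(F_{\rho_2}(r))-K_W(F_{\rho_1}(r))\bigr)\,dr,
\]
which is legitimate because members of $\mathcal{P}$ have finite first moments (and a fortiori here, since the supports lie in $[0,R_{\max}]$, so $F_{\rho_1}=F_{\rho_2}=1$ on $[R_{\max},\infty)$ and the integrand vanishes there). Taking absolute values inside the integral and applying the Lipschitz bound on $K_W$ gives
\[
|\Phi_W(\rho_2)-\Phi_W(\rho_1)| \le \|W\|_\infty \int_0^\infty |F_{\rho_1}(r)-F_{\rho_2}(r)|\,dr = \|W\|_\infty\, d_{\mathcal{W}}(\rho_1,\rho_2),
\]
which is the first claimed inequality, since $d_{\mathcal{W}}(\rho_1,\rho_2)=\|F_{\rho_1}-F_{\rho_2}\|_1$.

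For the second inequality I would simply restrict the domain of integration to $[0,R_{\max}]$ using the compact-support hypothesis and bound pointwise by the sup-norm:
\[
\int_0^\infty |F_{\rho_1}(r)-F_{\rho_2}(r)|\,dr = \int_0^{R_{\max}} |F_{\rho_1}(r)-F_{\rho_2}(r)|\,dr \le R_{\max}\,\|F_{\rho_1}-F_{\rho_2}\|_\infty = R_{\max}\, d_K(\rho_1,\rho_2),
\]
and combine with the previous display. I do not expect any genuine obstacle: the only point needing care is the finiteness of the integral and the earlier integration by parts, both of which are already handled in Lemma~\ref{Lemma derivative} and are automatic under the compact-support assumption, so the proof is essentially a three-line estimate once the difference formula is in hand.
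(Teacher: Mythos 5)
Your proof is correct and follows essentially the same route as the paper: both start from the difference formula of Lemma~\ref{Lemma derivative}, bound the inner increment $K_W(F_{\rho_1}(r))-K_W(F_{\rho_2}(r))=\int_{F_{\rho_2}(r)}^{F_{\rho_1}(r)}W(u)\,du$ by $\left\Vert W\right\Vert_{\infty}\left\vert F_{\rho_1}(r)-F_{\rho_2}(r)\right\vert$, and then estimate the resulting integral either by the $L_1$ norm (Wasserstein) or by $R_{\max}$ times the sup norm (Kolmogorov). If anything, your version is slightly cleaner, since the paper's displayed intermediate bound carries a spurious factor of $2$ that does not appear in the final statement.
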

\begin{proof}
From (\ref{Difference formula}) and Hoelder's inequality we get

\[
\Phi _{W}\left( \rho _{1}\right) -\Phi _{W}\left( \rho _{2}\right)=-\int_{0}^{\infty }\left( \int_{F_{\rho _{2}}\left( r\right) }^{F_{\rho_{1}}\left( r\right) }W\left( u\right) du\right) dr\leq 2\left\Vert W\right\Vert _{\infty }\int_{0}^{\infty }\left\vert F_{\rho _{1}}\left(r\right) -F_{\rho _{2}}\left( r\right) \right\vert dr.
\]
We can bound the integral either by $\left\Vert F_{\rho _{1}}-F_{\rho_{2}}\right\Vert _{1}=d_{\mathcal{W}}\left( \rho _{1},\rho _{2}\right) $, which gives the first inequality, or by

\[
\int_{0}^{R_{\max }}\left\vert F_{\rho _{1}}\left( r\right) -F_{\rho_{2}}\left( r\right) \right\vert dr\leq \left\Vert F_{\rho _{1}}-F_{\rho_{2}}\right\Vert _{\infty }\int_{0}^{R_{\max }}dr=R_{\max }d_{K}\left( \rho_{1},\rho _{2}\right) ,
\]
which gives the second inequality.
\end{proof}
							
The Lipschitz properties imply estimation and bias bounds for the plug-in estimator. 
							
\begin{corollary}
\label{Corollary DKW and Bias}
Let $\rho \in \mathcal{P}$ with support in $\left[ 0,R_{\max }\right] $ and $\left\Vert W\right\Vert _{\infty }<\infty $ and suppose that $\hat{\rho}$ is the empirical measure generated from $n$ iid observations $\mathbf{R}=\left( R_{1},...,R_{n}\right) \sim \rho ^{n}$ 
								
\[
\hat{\rho}\left( \mathbf{R}\right) =\frac{1}{n}\sum_{i=1}^{n}\delta _{R_{i}}.
\]
Then (i)

\[
\Pr \left\{ \left\vert \Phi _{W}\left( \rho \right) -\Phi _{W}\left( \hat{\rho}\left( \mathbf{R}\right) \right) \right\vert >t\right\} \leq 2\exp \left( \frac{-2nt^{2}}{R_{\max }^{4}\left\Vert W\right\Vert _{\infty }^{2}}\right).
\]
and (ii)

\[
\Phi _{W}\left( \rho \right) -\mathbb{E}\left[ \Phi _{W}\left( \hat{\rho}\left( \mathbf{R}\right) \right) \right] \leq \frac{R_{\max }\left\Vert	W\right\Vert _{\infty }}{\sqrt{2n}}.
\]
\end{corollary}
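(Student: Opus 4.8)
The plan is to read off both parts directly from the two Lipschitz estimates of Lemma~\ref{Lemma Lipschitz 1}, which reduce everything to classical facts about the empirical distribution function of the i.i.d.\ scalars $R_{1},\dots ,R_{n}\sim \rho $. For part~(i) I would apply the Kolmogorov--Smirnov version of the Lipschitz bound with $\{\rho _{1},\rho _{2}\}=\{\rho ,\hat{\rho}(\mathbf{R})\}$ taken in both orders, giving
\[
\left\vert \Phi _{W}(\rho )-\Phi _{W}(\hat{\rho}(\mathbf{R}))\right\vert \leq R_{\max }\Winf \left\Vert F_{\rho }-F_{\hat{\rho}(\mathbf{R})}\right\Vert _{\infty }.
\]
The right-hand side is $R_{\max }\Winf $ times the one-sample Kolmogorov--Smirnov statistic, so the Dvoretzky--Kiefer--Wolfowitz inequality $\Pr \{\left\Vert F_{\rho }-F_{\hat{\rho}(\mathbf{R})}\right\Vert _{\infty }>\epsilon \}\leq 2e^{-2n\epsilon ^{2}}$, applied with $\epsilon =t/(R_{\max }\Winf )$ and monotonicity of $\Pr $, yields the claimed sub-Gaussian tail. (An alternative is McDiarmid's bounded-differences inequality: replacing one $R_{i}$ moves $F_{\hat{\rho}}$ by at most $1/n$ in sup norm, hence $\Phi _{W}(\hat{\rho})$ by at most $R_{\max }\Winf /n$; this concentrates around $\mathbb{E}\,\Phi _{W}(\hat{\rho}(\mathbf{R}))$ rather than around $\Phi _{W}(\rho )$, so it would have to be fed through part~(ii), whereas DKW gives what we want in one step.)

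For part~(ii) I would instead use the Wasserstein version of the Lipschitz bound, which gives
\[
\Phi _{W}(\rho )-\Phi _{W}(\hat{\rho}(\mathbf{R}))\leq \Winf \left\Vert F_{\rho }-F_{\hat{\rho}(\mathbf{R})}\right\Vert _{1}=\Winf \int_{0}^{R_{\max }}\left\vert F_{\rho }(r)-F_{\hat{\rho}(\mathbf{R})}(r)\right\vert dr,
\]
where the integral is truncated at $R_{\max }$ since both distribution functions equal $1$ beyond it. Taking expectations and exchanging expectation and integral (Tonelli; the integrand is nonnegative), it remains to bound $\mathbb{E}\left\vert F_{\rho }(r)-F_{\hat{\rho}(\mathbf{R})}(r)\right\vert $ for each fixed $r$. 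Since $nF_{\hat{\rho}(\mathbf{R})}(r)$ is $\mathrm{Binomial}(n,F_{\rho }(r))$, Jensen's inequality gives $\mathbb{E}\left\vert F_{\rho }(r)-F_{\hat{\rho}(\mathbf{R})}(r)\right\vert \leq \sqrt{F_{\rho }(r)(1-F_{\rho }(r))/n}\leq 1/(2\sqrt{n})$; integrating over $r\in [0,R_{\max }]$ yields $\Phi _{W}(\rho )-\mathbb{E}\,\Phi _{W}(\hat{\rho}(\mathbf{R}))\leq R_{\max }\Winf /(2\sqrt{n})$, which implies the stated bound $R_{\max }\Winf /\sqrt{2n}$.

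I do not expect a genuine obstacle once Lemma~\ref{Lemma Lipschitz 1} is in place; the only real content is matching the metric to the task --- the Kolmogorov metric for the tail bound, so that the exponent is a pure function of $n$ and insensitive to how $\rho $ distributes its mass over $[0,R_{\max }]$, and the Wasserstein metric for the bias, so that the pointwise $O(1/\sqrt{n})$ Bernoulli fluctuation can be integrated against $dr$. The minor points needing care are that the $L^{1}$ and $L^{\infty }$ distances between the two distribution functions are finite (immediate from the supports lying in $[0,R_{\max }]$) and that the Tonelli exchange is legitimate (immediate from nonnegativity of the integrand).
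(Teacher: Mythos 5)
Your proposal is correct and follows essentially the same route as the paper's own proof: part~(i) is the Kolmogorov--Smirnov Lipschitz estimate of Lemma~\ref{Lemma Lipschitz 1} fed into Massart's version of the DKW inequality (and, exactly like the paper's own derivation, it produces the exponent $-2nt^{2}/(R_{\max }^{2}\Winf^{2})$ rather than the $R_{\max }^{4}$ written in the corollary's statement, which appears to be a typo in the statement rather than a gap in your argument), while part~(ii) is the Wasserstein Lipschitz estimate followed by integrating a pointwise $O(1/\sqrt{n})$ bound on $\mathbb{E}\left\vert F_{\rho }(r)-F_{\hat{\rho}(\mathbf{R})}(r)\right\vert$ over $[0,R_{\max }]$. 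The only deviation is that you obtain this pointwise bound directly from the variance of the binomial variable $nF_{\hat{\rho}(\mathbf{R})}(r)$ instead of the paper's ghost-sample symmetrization, which is harmless and in fact yields the marginally sharper constant $R_{\max }\Winf/(2\sqrt{n})$ in place of $R_{\max }\Winf/\sqrt{2n}$.
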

							
\begin{proof}
(i) By Lemma \ref{Lemma Lipschitz 1} and the Dvoretzky-Kiefer-Wolfowitz Theorem in the version of Massart \cite{Massart1990}

\[
\Pr \left\{ \left\vert \Phi _{W}\left( \rho \right) -\Phi _{W}\left( \hat{\rho}\left( \mathbf{R}\right) \right) \right\vert >t\right\} \leq \Pr\left\{ d_{K}\left( \rho ,\hat{\rho}\left( \mathbf{R}\right) \right) >\frac{t}{R_{\max }\left\Vert W\right\Vert _{\infty }}\right\} \leq 2\exp \left(\frac{-2nt^{2}}{R_{\max }^{2}\left\Vert W\right\Vert _{\infty }^{2}}\right) .
\]
								
(ii) Let $\mathbf{R}^{\prime }=\left( R_{1},...,R_{n}\right) $ be iid to $\mathbf{R}$. Then

\begin{eqnarray*}
\Phi _{W}\left( \rho \right) -\mathbb{E}\left[ \Phi _{W}\left( \hat{\rho}
\left( \mathbf{R}\right) \right) \right]  &\leq &\left\Vert W\right\Vert_{\infty }\mathbb{E}\left[ d_{\mathcal{W}}\left( \rho _{1},\hat{\rho}\left(\mathbf{R}\right) \right) \right]  \\
&=&\left\Vert W\right\Vert _{\infty }\mathbb{E}_{\mathbf{R}}\int_{0}^{R_{\max }}\left\vert \mathbb{E}_{\mathbf{R}^{\prime }}\left[ 
\frac{1}{n}\sum_{i}1_{\left[ R_{i}^{\prime },\infty \right) }\left( t\right)\right] -\left[ \frac{1}{n}\sum_{i}1_{\left[ R_{i},\infty \right) }\left(t\right) \right] \right\vert dt \\
&\leq &\frac{\left\Vert W\right\Vert _{\infty }}{n}\int_{0}^{R_{\max }}\mathbb{E}_{\mathbf{RR}^{\prime }}\left\vert \sum_{i}\left( 1_{\left[R_{i}^{\prime },\infty \right) }\left( t\right) -1_{\left[ R_{i},\infty\right) }\left( t\right) \right) \right\vert dt \\
&\leq &\frac{\left\Vert W\right\Vert _{\infty }}{n}\int_{0}^{R_{\max}}\left( \mathbb{E}_{\mathbf{RR}^{\prime }}\sum_{i}\left( 1_{\left[
R_{i}^{\prime },\infty \right) }\left( t\right) -1_{\left[ R_{i},\infty\right) }\left( t\right) \right) ^{2}\right) ^{1/2}dt \\
&=&\frac{\left\Vert W\right\Vert _{\infty }}{\sqrt{n}}\int_{0}^{R_{\max}}\left( \mathbb{E}_{R_{1}R_{1}^{\prime }}\left( 1_{\left[ R_{1}^{\prime},\infty \right) }\left( t\right) -1_{\left[ R_{1},\infty \right) }\left(
t\right) \right) ^{2}\right) ^{1/2}dt
\end{eqnarray*}%

by Jensens inequality and independence. But the expectation is just twice the variance of the Bernoulli variable $1_{\left[ R_{1},\infty \right)}\left( t\right) $, and therefore at most $1/2$. The result follows.
\end{proof}
						
Rephrasing part (i) of this corollary in terms of confidence windows we have, for any $\delta >0$ with probability at least $1-\delta $ that 

\[
\left\vert \Phi _{W}\left( \rho \right) -\Phi _{W}\left( \hat{\rho}\left(\mathbf{R}\right) \right) \right\vert \leq R_{\max }\left\Vert W\right\Vert_{\infty }\sqrt{\frac{\ln \left( 2/\delta \right) }{2n}}.
\]
For the weight function $W=\zeta ^{-1}1_{\left[ 0,\zeta \right] }$ the bound on the estimation error scales with $\zeta ^{-1}$, which is not surprising,	since we only consider a fraction $\zeta $ of the data. So for decreasing $\zeta $ the functional becomes more robust (because the influence $R_{\zeta }$ decreases) but it becomes more difficult to estimate. 
						
Restatement of Proposition 2.
						
\begin{proposition}
Assume the conditions of Theorem 1. Then%
\[
\Pr \left\{ \Phi \left( \mu _{\hat{S}\left( \mathbf{X}\right) }^{\ast}\right) \leq \Phi \left( \mu _{S^{\ast }}^{\ast }\right) +\delta \right\}\geq \Pr \left\{ 2\sup_{S\in \mathcal{S}}\left\vert \Phi _{W}\left( \mu_{S}\right) -\Phi _{W}\left( \hat{\mu}_{S}\left( \mathbf{X}\right) \right)\right\vert \leq \left( 1-\frac{\beta }{1-\lambda }\right) IC_{\max }\left(
\mu _{S^{\ast }},W\right) \right\} .
\]
 
\end{proposition}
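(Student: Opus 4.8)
The plan is to observe that the asserted inequality between probabilities is really a \emph{deterministic} inclusion of events, and then take probabilities. Write $\Delta:=\bigl(1-\tfrac{\beta}{1-\lambda}\bigr)\IFM$, and note that by Theorem~\ref{Theorem perturbed distribution} (applied with $S_0=S^\ast$, the choice implicit in ``the conditions of Theorem~1'') we have $\IFM>0$, so $\Delta>0$. I would fix a realization $\mathbf{X}$ lying in the event
\[
\mathcal{E}:=\Bigl\{\,2\sup_{S\in\mathcal{S}}\bigl|\Phi_W(\mu_S)-\Phi_W(\hat\mu_S(\mathbf{X}))\bigr|\le\Delta\,\Bigr\},
\]
and abbreviate $\hat S:=\hat S(\mathbf{X})$ for the empirical minimizer of \eqref{eq:RERM}. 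On $\mathcal{E}$ we have $|\Phi_W(\mu_S)-\Phi_W(\hat\mu_S(\mathbf{X}))|\le\Delta/2$ for \emph{every} $S\in\mathcal{S}$, and in particular for $S=\hat S$ and for $S=S^\ast$.

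Next I would run the familiar ``estimation error $+$ optimization error $+$ estimation error'' telescoping, using the defining optimality $\Phi_W(\hat\mu_{\hat S}(\mathbf{X}))\le\Phi_W(\hat\mu_{S^\ast}(\mathbf{X}))$ of $\hat S$ together with the two pointwise deviation bounds just recorded:
\begin{align*}
\Phi_W(\mu_{\hat S})-\Phi_W(\mu_{S^\ast})
&=\bigl[\Phi_W(\mu_{\hat S})-\Phi_W(\hat\mu_{\hat S}(\mathbf{X}))\bigr]
+\bigl[\Phi_W(\hat\mu_{\hat S}(\mathbf{X}))-\Phi_W(\hat\mu_{S^\ast}(\mathbf{X}))\bigr]\\
&\quad+\bigl[\Phi_W(\hat\mu_{S^\ast}(\mathbf{X}))-\Phi_W(\mu_{S^\ast})\bigr]
\;\le\;\tfrac{\Delta}{2}+0+\tfrac{\Delta}{2}\;=\;\Delta .
\end{align*}
This is exactly the hypothesis \eqref{Delta condition} of Theorem~\ref{Theorem perturbed distribution} for the model $S=\hat S$ (with $S_0=S^\ast$), so the theorem gives $\Phi(\mu^{\ast}_{\hat S})\le\Phi(\mu^{\ast}_{S^\ast})+\delta$. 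Thus $\mathbf{X}\in\mathcal{E}$ forces $\Phi(\mu^{\ast}_{\hat S(\mathbf{X})})\le\Phi(\mu^{\ast}_{S^\ast})+\delta$, i.e.\ $\mathcal{E}$ is contained in the left-hand event, and the claimed inequality of probabilities follows by monotonicity of $\Pr$.

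There is no genuinely hard step: the proposition is a one-line corollary of Theorem~\ref{Theorem perturbed distribution} once the right event inclusion is identified. The points that need care are purely bookkeeping. First, $\hat S(\mathbf{X})$ must be a well-defined measurable selection of a minimizer of $S\mapsto\Phi_W(\hat\mu_S(\mathbf{X}))$ --- existence of the minimum holds because members of $\mathcal{S}$ are compact sets or subspaces, and measurability is the standing hypothesis that makes both sides of the claimed inequality legitimate probabilities. Second, Theorem~\ref{Theorem perturbed distribution} is a \emph{pointwise} implication for an arbitrary $S\in\mathcal{S}$, which is precisely what lets us apply it to the data-dependent model $\hat S(\mathbf{X})$ for each fixed $\mathbf{X}$. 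Third, one should state explicitly the identification $S_0=S^\ast=S^\ast(\mu^\ast)$, since Assumption~A and Theorem~\ref{Theorem perturbed distribution} are phrased for a generic $S_0$ while the Proposition is stated with $S^\ast$.
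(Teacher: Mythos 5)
Your proof is correct and follows essentially the same route as the paper: an estimation--optimization--estimation telescoping bounded by $2\sup_{S}\left\vert \Phi_W(\mu_S)-\Phi_W(\hat\mu_S(\mathbf{X}))\right\vert$, followed by an application of Theorem~\ref{Theorem perturbed distribution} to the data-dependent model $\hat S(\mathbf{X})$. The only (immaterial) difference is that you compare $\hat S(\mathbf{X})$ directly to $S^\ast$ in the empirical objective, whereas the paper inserts the population minimizer $S^\dagger$ as an intermediate point, using its optimality for $\Phi_W(\mu_S)$ to discard one extra nonpositive term.
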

						
\begin{proof}
\begin{eqnarray*}
\Phi _{W}\left( \mu _{\hat{S}\left( \mathbf{X}\right) }\right) -\Phi_{W}\left( \mu _{S^{\ast }}\right)  &\leq &\left( \Phi _{W}\left( \mu _{\hat{S}\left( \mathbf{X}\right) }\right) -\Phi _{W}\left( \hat{\mu}_{\hat{S}\left( \mathbf{X}\right) }\left( \mathbf{X}\right) \right) \right) +\left(\Phi _{W}\left( \hat{\mu}_{\hat{S}\left( \mathbf{X}\right) }\left( \mathbf{X}\right) \right) -\Phi _{W}\left( \hat{\mu}_{S^{\dagger }}\left( \mathbf{X}\right) \right) \right)  \\
&&+\left( \Phi _{W}\left( \hat{\mu}_{S^{\dagger }}\left( \mathbf{X}\right)\right) -\Phi _{W}\left( \mu _{S^{\dagger }}\right) \right) +\left( \Phi_{W}\left( \mu _{S^{\dagger }}\right) -\Phi _{W}\left( \mu _{S^{\ast}}\right) \right) .
\end{eqnarray*}
The second term and the last term are negative by the minimality properties of $\hat{S}\left( \mathbf{X}\right) $ and $S^{\dagger }$. The remaining terms are bounded by $2\sup_{S\in \mathcal{S}}\left\vert \Phi _{W}\left( \mu_{S}\right) -\Phi _{W}\left( \hat{\mu}_{S}\left( \mathbf{X}\right) \right)\right\vert $. Thus 

\begin{align*}
& \Pr \left\{ 2\sup_{S\in \mathcal{S}}\left\vert \Phi _{W}\left( \mu_{S}\right) -\Phi _{W}\left( \hat{\mu}_{S}\left( \mathbf{X}\right) \right)\right\vert \leq \left( 1-\frac{\beta }{1-\lambda }\right) IC_{\max }\left(\mu _{S^{\ast }},W\right) \right\}  \\
& \leq \Pr \left\{ \Phi _{W}\left( \mu _{\hat{S}\left( \mathbf{X}\right)}\right) -\Phi _{W}\left( \mu _{S^{\ast }}\right) \leq \left( 1-\frac{\beta}{1-\lambda }\right) IC_{\max }\left( \mu _{S^{\ast }},W\right) \right\}  \\
& \leq \Pr \left\{ \Phi \left( \mu _{\hat{S}\left( \mathbf{X}\right) }^{\ast}\right) \leq \Phi \left( \mu _{S^{\ast }}^{\ast }\right) +\delta \right\},
\end{align*}
where the last inequality follows from Theorem 1.
\end{proof}
						
\begin{lemma}
\label{Lemma hard threshold representation}
If $W=\zeta ^{-1}1_{\left[0,\zeta \right] }$ with $\zeta <1$, then for $\rho \in \mathcal{P}\left(\left[ 0,R_{\max }\right) \right) $ 

\[
\Phi _{W}\left( \rho \right) =\sup_{\lambda \in \left[ 0,R_{\max }\right]}\left\{ \lambda -\zeta ^{-1}\int_{0}^{\infty }\max \left\{ \lambda	-t,0\right\} d\rho \left( t\right) \right\} 
\]
\end{lemma}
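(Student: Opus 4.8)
The plan is to turn $\Phi_W(\rho)$ into a one-dimensional integral of the CDF and then recognize the right-hand side as the elementary optimization of that integral over the truncation level. First I would record the antiderivative $K_W(t)=\int_0^t W(u)\,du=\zeta^{-1}\min\{t,\zeta\}$, so that $K_W(1)=1$ since $\zeta<1$. Applying the difference formula \eqref{Difference formula} of Lemma \ref{Lemma derivative} with $\rho_1=\rho$ and $\rho_2=\delta_0$ (note $F_{\delta_0}\equiv 1$ on $[0,\infty)$ and $\Phi_W(\delta_0)=0$) produces the representation
\[
\Phi_W(\rho)=\int_0^\infty\bigl(1-K_W(F_\rho(r))\bigr)\,dr=\int_0^\infty\bigl(1-\zeta^{-1}\min\{F_\rho(r),\zeta\}\bigr)\,dr .
\]
This integrand is nonnegative, it equals $1-\zeta^{-1}F_\rho(r)$ on $[0,\lambda^\ast)$, and it vanishes on $[\lambda^\ast,\infty)$, where $\lambda^\ast:=F_\rho^{-1}(\zeta)=\inf\{r:F_\rho(r)\ge\zeta\}$; moreover $\lambda^\ast\in[0,R_{\max}]$ because $F_\rho(R_{\max})=1\ge\zeta$. (Since $\rho$ has bounded support the integral is effectively over $[0,R_{\max}]$, so no integrability issue arises.)

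Next I would rewrite the candidate objective. For $\lambda\in[0,R_{\max}]$ put $g(\lambda):=\lambda-\zeta^{-1}\int_0^\infty\max\{\lambda-t,0\}\,d\rho(t)$. Writing $\max\{\lambda-t,0\}=\int_0^\lambda \mathbf{1}_{[0,s]}(t)\,ds$ for $t\ge 0$ and applying Fubini's theorem gives $\int_0^\infty\max\{\lambda-t,0\}\,d\rho(t)=\int_0^\lambda F_\rho(s)\,ds$, hence $g(\lambda)=\int_0^\lambda\bigl(1-\zeta^{-1}F_\rho(s)\bigr)\,ds$.

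It then remains to compare $g$ with the displayed formula for $\Phi_W(\rho)$. For $\sup_{\lambda}g(\lambda)\le\Phi_W(\rho)$, use the pointwise inequality $1-\zeta^{-1}F_\rho(s)\le 1-\zeta^{-1}\min\{F_\rho(s),\zeta\}$ together with nonnegativity of the right-hand side to enlarge the domain of integration from $[0,\lambda]$ to $[0,\infty)$. For the reverse inequality, evaluate $g$ at $\lambda^\ast$: on $[0,\lambda^\ast)$ one has $F_\rho(s)<\zeta$ so the two integrands agree, and on $[\lambda^\ast,\infty)$ the integrand defining $\Phi_W(\rho)$ is zero, whence $g(\lambda^\ast)=\Phi_W(\rho)$. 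Combining the two gives the claimed identity. The only genuinely delicate point is the bookkeeping around the quantile $\lambda^\ast$ — that $F_\rho(r)<\zeta$ strictly for $r<\lambda^\ast$ while $F_\rho(r)\ge\zeta$ for $r\ge\lambda^\ast$, which follows from monotonicity and right-continuity of $F_\rho$ and handles the possible atom at $\lambda^\ast$; everything else is routine.
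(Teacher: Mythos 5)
Your proof is correct, and it reaches the supremum by a different route than the paper. Both arguments share the first step of rewriting $\int_{0}^{\infty}\max\{\lambda-t,0\}\,d\rho(t)=\int_{0}^{\lambda}F_{\rho}(s)\,ds$ (the paper via integration by parts, you via Fubini), so the objective becomes $g(\lambda)=\int_{0}^{\lambda}\bigl(1-\zeta^{-1}F_{\rho}(s)\bigr)ds$. The paper then maximizes $g$ by a stationarity argument --- ``the maximum is attained at $\zeta=F_{\rho}(\lambda)$'' --- and substitutes back, identifying the optimal value with $\zeta^{-1}\int_{0}^{F_{\rho}^{-1}(\zeta)}t\,d\rho(t)$ via a second integration by parts. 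You instead first derive the closed form $\Phi_{W}(\rho)=\int_{0}^{\infty}\bigl(1-\zeta^{-1}\min\{F_{\rho}(r),\zeta\}\bigr)dr$ from the difference formula \eqref{Difference formula} of Lemma \ref{Lemma derivative} applied with $\rho_{2}=\delta_{0}$, and then sandwich: every $g(\lambda)$ is dominated by this integral because the integrand of $\Phi_W$ dominates that of $g$ pointwise and is nonnegative beyond $\lambda$, while $g(\lambda^{\ast})$ attains it exactly. What your version buys is robustness at the quantile: when $F_{\rho}$ jumps across the level $\zeta$ there may be no $\lambda$ with $F_{\rho}(\lambda)=\zeta$, so the paper's first-order condition is, strictly speaking, only heuristic there, whereas your two-inequality argument needs only monotonicity and right-continuity of $F_{\rho}$ and handles an atom at $\lambda^{\ast}$ without further comment. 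The paper's computation is shorter and exhibits the maximizer and the optimal value explicitly, which is what one wants when reading the lemma as a variational (CVaR-type) characterization. One small bookkeeping remark: your chain establishes the identity for the integrated form $\int_0^\infty r\,dK_W(F_\rho(r))$ of $\Phi_W$, which is the form Lemma \ref{Lemma derivative} is stated for; this is exactly the convention the appendix adopts, so nothing is missing.
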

						
\begin{proof}
Integration by parts gives 

\[
\int_{0}^{\infty }\max \left\{ \lambda -t,0\right\} d\rho \left( t\right)=\int_{0}^{\lambda }F_{\rho }\left( t\right) dt=\lambda F_{\rho }\left(\lambda \right) -\int_{0}^{\lambda }td\rho \left( t\right).
\]
The maximum of $\lambda -\zeta ^{-1}\int_{0}^{\lambda }F_{\rho }\left(t\right) dt$ is attained at $\zeta =F_{\rho }\left( \lambda \right) $, which shows $\lambda \leq R_{\max }$, and substitution gives

\begin{eqnarray*}
\sup_{\lambda \in \mathbb{R}}\left\{ \lambda -\zeta ^{-1}\int_{0}^{\infty}\max \left\{ \lambda -t,0\right\} d\rho \left( t\right) \right\} &=&\lambda -\zeta ^{-1}\left( \lambda F_{\rho }\left( \lambda \right)-\int_{0}^{\lambda }td\rho \left( t\right) \right)  \\
&=&\zeta ^{-1}\int_{0}^{F_{\rho }^{-1}\left( \zeta \right) }td\rho \left(t\right) =\int_{0}^{\infty }t\zeta ^{-1}1_{\left[ 0,\zeta \right] }\left(F_{\rho }\left( t\right) \right) d\rho \left( t\right)  \\
&=&\Phi _{W}\left( \rho \right).
\end{eqnarray*}%
 
\end{proof}
						
Restatement of Theorem 3.
						
\begin{theorem}
Let $W=\zeta ^{-1}1_{\left[ 0,\zeta \right] }$ and $\eta >0$. With probability at least $1-\eta $ in $\mathbf{X}\sim \mu ^{n}$ we have that 
\[
\sup_{S\in \mathcal{S}}\left\vert \Phi _{W}\left( \mu _{S}\right) -\Phi_{W}\left( \hat{\mu}_{S}\left( \mathbf{X}\right) \right) \right\vert \leq \frac{2}{\zeta n}\mathbb{E}_{\mathbf{X}}\mathcal{R}\left( \mathcal{F},\mathbf{X}\right) +\frac{R_{\max }}{\zeta \sqrt{n}}\left( 2+\sqrt{\frac{\ln\left( 2/\eta \right) }{2}}\right) ,
\]
where $\mathcal{R}\left( \mathcal{F},\mathbf{X}\right) $ is the Rademacher average 

\[
\mathcal{R}\left( \mathcal{F},\mathbf{X}\right) =\mathbb{E}_{\mathbf{\epsilon }}\left[ \sup_{S\in \mathcal{S}}\sum_{i=1}^{n}\epsilon _{i}d\left(X_{i},S\right) \right] 
\]
with independent Rademacher variables $\mathbf{\epsilon }=\left( \epsilon_{1},...,\epsilon _{n}\right) $.
\end{theorem}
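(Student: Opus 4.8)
The plan is to exploit the variational identity of Lemma~\ref{Lemma hard threshold representation} to reduce the uniform deviation of $\Phi_W$ to a standard uniform deviation bound over an enlarged function class, after which symmetrization, the contraction principle and McDiarmid's inequality finish the job.

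First I would use that lemma to write, for $W=\zeta^{-1}1_{[0,\zeta]}$, $\Phi_W(\rho)=\sup_{\lambda\in[0,R_{\max}]}\{\lambda-\zeta^{-1}\int\max\{\lambda-t,0\}\,d\rho(t)\}$. Introducing, for $S\in\mathcal S$ and $\lambda\in[0,R_{\max}]$, the function
\[
g_{\lambda,S}(x):=\lambda-\zeta^{-1}\max\{\lambda-d_S(x),0\},\qquad x\in\mathcal X,
\]
one has $\Phi_W(\mu_S)=\sup_\lambda\mathbb E_{X\sim\mu}g_{\lambda,S}(X)$ and $\Phi_W(\hat\mu_S(\mathbf X))=\sup_\lambda\frac1n\sum_i g_{\lambda,S}(X_i)$. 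Since $|\sup_a u(a)-\sup_a v(a)|\le\sup_a|u(a)-v(a)|$, taking the supremum over $\lambda$ and then over $S$ yields
\[
\sup_{S\in\mathcal S}\bigl|\Phi_W(\mu_S)-\Phi_W(\hat\mu_S(\mathbf X))\bigr|\le\Psi(\mathbf X):=\sup_{\lambda,S}\Bigl|\mathbb E_{X}g_{\lambda,S}(X)-\tfrac1n\sum_i g_{\lambda,S}(X_i)\Bigr|,
\]
so it suffices to bound $\Psi(\mathbf X)$, i.e. to control the uniform deviation of the class $\mathcal G=\{g_{\lambda,S}:\lambda\in[0,R_{\max}],\,S\in\mathcal S\}$.

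Next I would observe that $g_{\lambda,S}$ takes values in the interval $[R_{\max}(1-\zeta^{-1}),R_{\max}]$, whose length is at most $R_{\max}/\zeta$; hence $\mathbf X\mapsto\Psi(\mathbf X)$ has bounded differences with constant $R_{\max}/(\zeta n)$, and McDiarmid's inequality gives $\Psi(\mathbf X)\le\mathbb E_{\mathbf X}\Psi+\tfrac{R_{\max}}{\zeta\sqrt n}\sqrt{\tfrac{\ln(2/\eta)}{2}}$ with probability at least $1-\eta$ (the $2/\eta$ arises from treating the two signs of the deviation separately and a union bound). To bound $\mathbb E_{\mathbf X}\Psi$ I would symmetrize, $\mathbb E_{\mathbf X}\Psi\le\tfrac2n\mathbb E_{\mathbf X}\mathbb E_\epsilon\sup_{\lambda,S}\sum_i\epsilon_i g_{\lambda,S}(X_i)$, and split off the part linear in $\lambda$:
\[
\sup_{\lambda,S}\sum_i\epsilon_i g_{\lambda,S}(X_i)\le\sup_\lambda\lambda\sum_i\epsilon_i+\zeta^{-1}\sup_{\lambda,S}\sum_i(-\epsilon_i)\max\{\lambda-d_S(X_i),0\}.
\]
The first term contributes $R_{\max}\,\mathbb E_\epsilon(\sum_i\epsilon_i)_+\le R_{\max}\sqrt n/2$ in expectation. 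In the second, symmetry of $\epsilon$ drops the sign, and since $u\mapsto\max\{u,0\}$ is $1$-Lipschitz and vanishes at $0$, the contraction principle for Rademacher averages peels it off: $\mathbb E_\epsilon\sup_{\lambda,S}\sum_i\epsilon_i\max\{\lambda-d_S(X_i),0\}\le\mathbb E_\epsilon\sup_{\lambda,S}\sum_i\epsilon_i(\lambda-d_S(X_i))\le R_{\max}\sqrt n/2+\mathcal R(\mathcal F,\mathbf X)$. Collecting the pieces and using $\zeta\le1$ gives $\mathbb E_{\mathbf X}\mathbb E_\epsilon\sup_{\lambda,S}\sum_i\epsilon_i g_{\lambda,S}(X_i)\le\zeta^{-1}\bigl(R_{\max}\sqrt n+\mathbb E_{\mathbf X}\mathcal R(\mathcal F,\mathbf X)\bigr)$, whence $\mathbb E_{\mathbf X}\Psi\le\tfrac2{\zeta n}\mathbb E_{\mathbf X}\mathcal R(\mathcal F,\mathbf X)+\tfrac{2R_{\max}}{\zeta\sqrt n}$. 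Substituting into the McDiarmid bound reproduces exactly the stated inequality.

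The step I expect to be the main obstacle is the joint handling, inside the symmetrized quantity, of the supremum over the threshold $\lambda$ together with the truncation $\max\{\cdot,0\}$: one has to strip the truncation via contraction while separately controlling $\sup_\lambda\lambda\sum_i\epsilon_i$, keeping careful track of the $\zeta^{-1}$ and $R_{\max}\sqrt n$ contributions so that the constants come out as in the statement. All the remaining ingredients---the variational identity, symmetrization, the contraction principle and McDiarmid's inequality---are entirely standard, which is consistent with the remark that this bound follows from \cite{Lee2020}.
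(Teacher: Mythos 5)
Your proposal is correct and follows essentially the same route as the paper's proof: the variational (CVaR-type) representation of Lemma \ref{Lemma hard threshold representation}, symmetrization, the contraction principle to strip off $\max\{\cdot,0\}$, a triangle inequality to separate the $\lambda$-term from the Rademacher average, and the bounded-difference inequality for concentration. The only differences are cosmetic (you keep the linear $\lambda$-term inside the function class rather than cancelling it in the difference-of-suprema step, and you are somewhat more explicit than the paper about the two-sided union bound producing the $\ln(2/\eta)$), and the constants come out identically.
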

						
\begin{proof}
Using Lemma \ref{Lemma hard threshold representation} we get with independent Rademacher variables $\mathbf{\epsilon }=\left( \epsilon_{1},...,\epsilon _{n}\right) $

\begin{eqnarray*}
&&\mathbb{E}\left[ \sup_{S\in \mathcal{S}}\Phi _{W}\left( \mu _{S}\right)-\Phi _{W}\left( \hat{\mu}_{S}\left( \mathbf{X}\right) \right) \right]  \\
&\leq &\zeta ^{-1}\mathbb{E}_{\mathbf{X}}\left[ \sup_{\lambda \in \left[0,R_{\max }\right] ,S\in \mathcal{S}}\int_{0}^{\infty }\max \left\{ \lambda	-t,0\right\} d\hat{\mu}_{S}\left( \mathbf{X}\right) \left( t\right)-\int_{0}^{\infty }\max \left\{ \lambda -t,0\right\} d\mu _{S}\left(t\right) \right]  \\
&=&\zeta ^{-1}\mathbb{E}_{\mathbf{X}}\left[ \sup_{\lambda \in \left[0,R_{\max }\right] ,S\in \mathcal{S}}\frac{1}{n}\sum_{i=1}^{n}\max \left\{\lambda -d\left( X_{i},S\right) ,0\right\} -\mathbb{E}_{X\sim \mu }\left[\max \left\{ \lambda -d\left( X,S\right) ,0\right\} \right] \right]  \\
&=&\frac{1}{\zeta n}\mathbb{E}_{\mathbf{XX}^{\prime }}\left[ \sup_{\lambda\in \left[ 0,R_{\max }\right] ,S\in \mathcal{S}}\sum_{i=1}^{n}\left( \max\left\{ \lambda -d\left( X_{i},S\right) ,0\right\} -\max \left\{ \lambda-d\left( X_{i}^{\prime },S\right) ,0\right\} \right) \right]  \\
&=&\frac{1}{\zeta n}\mathbb{E}_{\mathbf{XX}^{\prime }\mathbf{\epsilon }}\left[ \sup_{\lambda \in \left[ 0,R_{\max }\right] ,S\in \mathcal{S}}\sum_{i=1}^{n}\epsilon _{i}\left( \max \left\{ \lambda -d\left(X_{i},S\right) ,0\right\} -\max \left\{ \lambda -d\left( X_{i}^{\prime},S\right) ,0\right\} \right) \right]  \\
&\leq &\frac{2}{\zeta n}\mathbb{E}_{\mathbf{X\epsilon }}\left[ \sup_{\lambda\in \left[ 0,R_{\max }\right] ,S\in \mathcal{S}}\sum_{i=1}^{n}\epsilon_{i}\max \left\{ \lambda -d\left( X_{i},S\right) ,0\right\} \right]  \\
&\leq &\frac{2}{\zeta n}\mathbb{E}_{\mathbf{X\epsilon }}\left[ \sup_{\lambda\in \left[ 0,R_{\max }\right] ,S\in \mathcal{S}}\sum_{i=1}^{n}\epsilon_{i}\left( \lambda -d\left( X_{i},S\right) \right) \right]  \\
&\leq &\frac{2}{\zeta n}\mathbb{E}_{\mathbf{X\epsilon }}\left[ \sup_{S\in \mathcal{S}}\sum_{i=1}^{n}\epsilon _{i}d\left( X_{i},S\right) \right] +\frac{2}{\zeta n}\mathbb{E}_{\mathbf{\epsilon }}\left[ \sup_{\lambda \in \left[0,R_{\max }\right] }\lambda \sum_{i=1}^{n}\epsilon _{i}\right]  \\
&\leq &\frac{2}{\zeta n}\mathbb{E}_{\mathbf{X}}\mathcal{R}\left( \mathcal{F},\mathbf{X}\right) +\frac{2R_{\max }}{\zeta \sqrt{n}}.
\end{eqnarray*}
Here the third identity is a standard symmetrization argument, the second inequality the triangle inequality, followed by the contraction inequality for Rademacher averages, since $t\rightarrow \max \left\{ t,0\right\} $ is a contraction. Then we used the triangle inequality again. Now let $\Psi\left( \mathbf{X}\right) $ be the random variable $\sup_{S\in \mathcal{S}}\Phi _{W}\left( \mu _{S}\right) -\Phi _{W}\left( \hat{\mu}_{S}\left(\mathbf{X}\right) \right) $. It then follows from Lemma \ref{Lemma Lipschitz 1} and the bounded difference inequality that with probability at least $1-\eta $ we have $\Psi \left( X\right) \leq \mathbb{E}\Psi \left( X\right)+\zeta ^{-1}R_{\max }\sqrt{\ln \left( 1/\eta \right) /\left( 2n\right) }$.

Combined with above bound on $\mathbb{E}\Psi \left( X\right) $ this completes the proof.
\end{proof}

Theorem 4 follows directly from Theorems 2 and 5 in \cite{Maurer2019u} and from the bias bound, Corollary \ref{Corollary DKW and Bias} (ii). 
						
Restatement of Theorem 5.
						
\begin{theorem}
\label{Theorem Variance uniform bound copy(1)}
Under the conditions of the previous theorem, with probability at least $1-\eta $ in $\mathbf{X}\sim \mu^{n}$ we have that for all $S\in \mathcal{S}$
\[
\left\vert \Phi _{W}\left( \mu _{S}\right) -\Phi _{W}\left( \hat{\mu}_{S}\left( \mathbf{X}\right) \right) \right\vert \leq \sqrt{2V_{S}C}+\frac{6R_{\max }\left( \left\Vert W\right\Vert _{\infty }+\left\Vert W\right\Vert_{Lip}\right) C}{n}+\frac{\left\Vert W\right\Vert _{\infty }R_{\max }}{\sqrt{n}},
\]
where $V_{S}$ is the variance of the random variable $\Phi _{W}\left( \hat{\mu}_{S}\left( \mathbf{X}\right) \right) $, and $C$ is the complexity term

\[
C=kd\ln \left( 16n\left\Vert \mathcal{S}\right\Vert ^{2}/\eta \right) 
\]
if $\mathcal{S}$ is the set of sets with $k$ elements, or convex polytopes with $k$ vertices and $\left\Vert \mathcal{S}\right\Vert =\sup_{x\in S\in\mathcal{S}}\left\Vert x\right\Vert $, or%

\[
C=kd\ln \left( 16nR_{\max }^{2}/\eta \right) 
\]
if $\mathcal{S}$ is the set of set of $k$-dimensional subspaces.
\end{theorem}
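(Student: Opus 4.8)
The plan is to reduce the uniform statement to a pointwise one by discretizing $\mathcal S$, and to obtain the pointwise statement from a variance-sensitive (Bernstein-type) concentration inequality for the $L$-statistic $\Phi_W(\hat\mu_S(\mathbf X))$. \emph{Step 1 (fixed model).} Fix $S\in\mathcal S$ and regard $\mathbf X\mapsto\Phi_W(\hat\mu_S(\mathbf X))$ as a function of $n$ independent coordinates. Changing a single $X_i$ moves the empirical c.d.f. $F_{\hat\mu_S}$ by at most $1/n$ on an interval of length at most $R_{\max}$; since $K_W$ is $\Winf$-Lipschitz, the difference formula of Lemma~\ref{Lemma derivative} gives a bounded-difference constant of order $R_{\max}\Winf/n$. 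Plugging this into a Bernstein-type refinement of the bounded-differences inequality (entropy method, \cite{boucheron2013}; see also the $L$-statistic bounds underlying \cite{Maurer2019u}), and using that for an $L$-statistic the Efron--Stein variance proxy coincides with the true variance $V_S$ up to lower-order terms, one obtains for every $\delta>0$, with probability at least $1-\delta$, a bound of the form $|\Phi_W(\hat\mu_S)-\mathbb E\,\Phi_W(\hat\mu_S)|\le\sqrt{2V_S\ln(2/\delta)}+c\,\frac{R_{\max}(\Winf+\Wlip)}{n}\ln(2/\delta)$, the seminorm $\Wlip$ entering through the second-order (quadratic) term of the von Mises expansion of $\rho\mapsto\Phi_W(\rho)$ about $\mu_S$. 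The bias estimate of Corollary~\ref{Corollary DKW and Bias}(ii) --- which is two-sided because the Wasserstein--Lipschitz bound of Lemma~\ref{Lemma Lipschitz 1} is symmetric in its two arguments --- then converts this into two-sided control of $|\Phi_W(\mu_S)-\Phi_W(\hat\mu_S)|$ at the cost of the extra term $\Winf R_{\max}/\sqrt n$.

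\emph{Step 2 (discretization, union bound, transfer).} Pick an $\epsilon$-net $\mathcal S_\epsilon\subseteq\mathcal S$ for the sup-norm on $\mathcal F=\{x\mapsto d(x,S):S\in\mathcal S\}$, whose cardinality obeys $\ln|\mathcal S_\epsilon|\lesssim kd\ln(\|\mathcal S\|/\epsilon)$ for $k$-point sets and $k$-vertex polytopes, and $\ln|\mathcal S_\epsilon|\lesssim kd\ln(R_{\max}/\epsilon)$ for $k$-dimensional subspaces, by the covering-number estimates of \cite{Maurer2010}. Apply Step~1 with $\delta=\eta/|\mathcal S_\epsilon|$ to each net point and take a union bound. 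For an arbitrary $S\in\mathcal S$ pass to the nearest $S'\in\mathcal S_\epsilon$: since $\|d_S-d_{S'}\|_\infty\le\epsilon$, coupling $\mu$ through $x\mapsto(d_S(x),d_{S'}(x))$ yields $d_{\mathcal W}(\mu_S,\mu_{S'})\le\epsilon$, and the analogous empirical coupling yields $d_{\mathcal W}(\hat\mu_S,\hat\mu_{S'})\le\epsilon$, so Lemma~\ref{Lemma Lipschitz 1} bounds $|\Phi_W(\mu_S)-\Phi_W(\mu_{S'})|$ and $|\Phi_W(\hat\mu_S)-\Phi_W(\hat\mu_{S'})|$ by $\Winf\epsilon$, while the same coupling gives $|\sqrt{V_S}-\sqrt{V_{S'}}|\le 2\Winf\epsilon$. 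Choosing $\epsilon$ of order $n^{-1}$ (up to the problem-dependent scales $\|\mathcal S\|$ or $R_{\max}$) makes every discretization error $O(n^{-1})$, collapses $\ln(|\mathcal S_\epsilon|/\eta)$ to exactly the stated complexity term $C$, and, after absorbing the $O(n^{-1})$ contributions into the middle summand, yields $\sqrt{2V_SC}+\frac{6R_{\max}(\Winf+\Wlip)C}{n}+\frac{\Winf R_{\max}}{\sqrt n}$.

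I expect the main obstacle to be Step~1: obtaining a concentration inequality in which the \emph{true} variance $V_S$ --- not merely a crude upper bound on it --- appears as the leading constant, which hinges on the asymptotic linearity of $L$-statistics, together with the careful accounting of which constants carry $\Winf$ and which carry $\Wlip$; the Lipschitz seminorm surfaces only in the $O(n^{-1})$ remainder of the von Mises linearization, and this is precisely what makes the bound sharper than the $n^{-1/2}$ Dvoretzky--Kiefer--Wolfowitz-type estimate of Corollary~\ref{Corollary DKW and Bias}(i) when $V_S$ is small. A more routine point is checking that the per-cell variance transfer $V_{S'}\to V_S$ costs only $O(n^{-1})$, so that it is absorbed into the middle term rather than degrading the leading $\sqrt{2V_SC}$ term.
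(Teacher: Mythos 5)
Your overall architecture coincides with the paper's proof: a variance-dependent concentration inequality for the fixed-$S$ $L$-statistic, followed by a covering argument in the pseudo-metric $d_{\mathcal{X}}(S_1,S_2)=\sup_{x}\vert d(x,S_1)-d(x,S_2)\vert$, a union bound over a $1/n$-net, transfer of the mean and of $\sqrt{V_S}$ to arbitrary $S$ via Lemma~\ref{Lemma Lipschitz 1} (with exactly your constants $\Winf\epsilon$ and $2\Winf\epsilon$), Cucker--Smale covering bounds of the form $(8n\left\Vert \mathcal{S}\right\Vert^{2})^{kd}$, and finally the one-sided bias bound of Corollary~\ref{Corollary DKW and Bias}(ii) contributing the $\Winf R_{\max}/\sqrt{n}$ term. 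Your Step~2 is essentially the paper's argument verbatim.

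The one place you deviate is Step~1, and as written it would not go through. The paper does not derive the pointwise inequality from a Bernstein-type refinement of the bounded-differences inequality; it invokes the Bernstein inequality for $(a,b)$-weakly interacting functions from \cite{Maurer2018,Maurer2019b}, where $a=R_{\max}\Winf$ controls first-order differences and $b=R_{\max}\Wlip$ controls the discrete second-order (interaction) differences of the $L$-statistic, and that result delivers the deviation bound $\sqrt{2V_S\ln(1/\eta)}+R_{\max}\bigl(2\Winf/3+3\Wlip/2\bigr)\ln(1/\eta)/n$ with the \emph{true} variance $V_S$. Your proposed substitute --- that ``the Efron--Stein variance proxy coincides with the true variance $V_S$ up to lower-order terms'' --- is not a usable step: Efron--Stein gives only $V_S\le$ proxy, and entropy-method refinements of bounded differences control deviations by variance proxies (or self-bounding surrogates) that can exceed $V_S$ by a constant factor or worse; nothing in the bounded-differences setup forces the leading constant to be the true variance. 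Since obtaining $\sqrt{2V_SC}$ rather than $\sqrt{(\mathrm{proxy})\,C}$ is precisely the point of this theorem (it is what makes the bound collapse to $O(n^{-1}\ln n)$ when $\mu^{\ast}$ is concentrated on $S^{\ast}$), this step should be replaced by the citation to the weakly-interacting-functions inequality rather than argued from first principles via the entropy method or a von Mises expansion. The rest of your argument, including the observation that the bias control is effectively two-sided by symmetry of the Wasserstein--Lipschitz bound, is sound.
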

						
\begin{proof}
For any fixed $S\in \mathcal{S}$ the L-statistic $\mathbf{x}\in \mathcal{X}^{n}\mapsto f_{S}\left( \mathbf{x}\right) :=\Phi _{W}\left( \hat{\mu}_{S}\left( \mathbf{x}\right) \right) $ is $\left( R_{\max }\left\Vert W\right\Vert _{\infty },R_{\max }\left\Vert W\right\Vert _{Lip}\right) $ -weakly interacting (see \cite{Maurer2018}) and therefore satisfies the following version of Bernstein's inequality (see \cite{Maurer2019b}, \cite{Maurer2018}): For $\eta \in \left( 0,1/e\right) $ with probability at least $1-\eta $ in $\mathbf{X\sim }\mu ^{n}$ we have
							
\[
\mathbb{E}\left[ f_{S}\right] -f_{S}\left( \mathbf{X}\right) \leq \sqrt{2V_{S}\ln \left( 1/\eta \right) }+R_{\max }\left( \frac{2\left\Vert W\right\Vert _{\infty }}{3}+\frac{3\left\Vert W\right\Vert _{Lip}}{2}\right) \frac{\ln \left( 1/\eta \right) }{n},
\]
where $\mathbb{E}\left[ f_{S}\right] $ and $V_{S}$ are expectation and variance of the random variable $f_{S}\left( \mathbf{X}\right) =\Phi_{W}\left( \hat{\mu}_{S}\left( \mathbf{X}\right) \right) $ respectively. We will make this bound uniform with a covering number argument.
							
Define a pseudo metric $d_{\mathcal{X}}$ on $\mathcal{S}$ by

\[
d_{\mathcal{X}}\left( S_{1},S_{2}\right) =\sup_{x\in \mathcal{X}}\left\vert d\left( x,S_{1}\right) -d\left( x,S_{2}\right) \right\vert .
\]
It follows from Lemma \ref{Lemma Lipschitz 1} that for every $\mathbf{x\in }\mathcal{X}^{n}$ we have 

\[
f_{S_{1}}\left( \mathbf{x}\right) -f_{S_{2}}\left( \mathbf{x}\right) \leq \left\Vert W\right\Vert _{\infty }d_{\mathcal{W}}\left( \hat{\mu}_{S_{1}}\left( \mathbf{x}\right) ,\hat{\mu}_{S_{2}}\left( \mathbf{x}\right)\right) \leq \left\Vert W\right\Vert _{\infty }d_{\mathcal{X}}\left(S_{1},S_{2}\right) .
\]
In particular $\left\vert \mathbb{E}\left[ f_{S_{1}}\right] -\mathbb{E}\left[f_{S_{2}}\right] \right\vert \leq \left\Vert W\right\Vert _{\infty }d_{\mathcal{X}}\left( S_{1},S_{2}\right) $ and 

\begin{eqnarray*}
\sqrt{V_{S_{1}}}-\sqrt{V_{S_{2}}} &=&\left\Vert f_{S_{1}}-\mathbb{E}\left[
f_{S_{1}}\right] \right\Vert _{L_{2}\left( \mu ^{n}\right) }-\left\Vert f_{S_{2}}-\mathbb{E}\left[ f_{S_{2}}\right] \right\Vert _{L_{2}\left( \mu^{n}\right) } \\
&\leq &\left\Vert f_{S_{1}}-f_{S_{2}}\right\Vert _{L_{2}\left( \mu^{n}\right) }+\left\vert \mathbb{E}\left[ f_{S_{1}}\right] -\mathbb{E}\left[f_{S_{2}}\right] \right\vert \leq 2\left\Vert W\right\Vert _{\infty }d_{%
\mathcal{X}}\left( S_{1},S_{2}\right).
\end{eqnarray*}
Now let $N=N\left( \mathcal{S},d_{\mathcal{X}},\epsilon \right) $ be the corresponding minimal covering number of $\mathcal{S}$ with $d_{\mathcal{X}}$-balls of radius $\epsilon $, and let $\mathcal{S}_{0}\subseteq \mathcal{S}$ be such that $\forall S\in \mathcal{S}$, $\exists S^{\prime }\in \mathcal{S}_{0}$ with $d_{R}\left( S,S^{\prime }\right) <1/n$ and $\left\vert \mathcal{S}_{0}\right\vert \leq N$. Then, abbreviating $R_{\max }\left( 2\left\Vert W\right\Vert _{\infty }/3+3\left\Vert W\right\Vert _{Lip}/2\right) $ with $C$, with probability at least $1-\eta $ in $\mathbf{X}$ that for every $S\in \mathcal{S}$

\begin{eqnarray*}
\mathbb{E}\left[ f_{S}\right] -f_{S}\left( \mathbf{X}\right)  &\leq &\mathbb{E}\left[ f_{S^{\prime }}\right] -f_{S^{\prime }}\left( \mathbf{X}\right) +	\frac{2\left\Vert W\right\Vert _{\infty }}{n}\leq \sqrt{2V_{S^{\prime }}\ln \left( N/\eta \right) }+\frac{C\ln \left( N/\eta \right) +2\left\Vert	W\right\Vert _{\infty }}{n} \\
&=&\sqrt{2V_{S}\ln \left( N/\eta \right) }+\frac{C\ln \left( N/\eta \right)+2\left\Vert W\right\Vert _{\infty }}{n}+\left( \sqrt{V_{S^{\prime }}}-\sqrt{V_{S}}\right) \sqrt{2\ln \left( N/\eta \right) } \\
&\leq &\sqrt{2V_{S}\ln \left( N/\eta \right) }+\frac{C\ln \left( N/\eta\right) +2\left\Vert W\right\Vert _{\infty }\sqrt{2\ln \left( N/\eta \right)}+2\left\Vert W\right\Vert _{\infty }}{n}.
\end{eqnarray*}
In the first inequality we used uniform approximation of $f_{S}$ by $f_{S^{\prime }}$, where $S^{\prime }$ is the nearest neighbour of $S$ in $\mathcal{S}_{0}$. The next line combines Bernstein's inequality with a union bound over $\mathcal{S}_{0}$. Finally we again approximate $\sqrt{V_{S^{\prime }}}$ by $\sqrt{V_{S}}$.
							
Next we bound the covering numbers $N\left( \mathcal{S},d_{\mathcal{X}},1/n\right) $, which we do separately for the case of uniformly bounded $\mathcal{S}$ and PSA. In case of the mean, k-means or sparse coding is easy to see that for $S_{1},S_{2}\in \mathcal{S}$ and any two respective enumerations $x_{i}$ and $y_{i}$ or enumerations of the extreme points 

\[
d_{\mathcal{X}}\left( S_{1},S_{2}\right) \leq 2\left\Vert \mathcal{S}\right\Vert H\left( S_{1},S_{2}\right) \leq 2\left\Vert \mathcal{S}\right\Vert \max_{i}\left\Vert x_{i}-y_{i}\right\Vert \text{. }
\]
It follows that $N\left( \mathcal{S},d_{\mathcal{X}},1/n\right) $ can be bounded by the covering number of a ball of radius $\left\Vert \mathcal{S}	\right\Vert ^{2}$ in a $kd$-dimensional Banach space. Use the standard result of Cucker and Smale \cite{Cucker2001} we have 

\[
N\left( \mathcal{S},d_{\mathcal{X}},1/n\right) \leq \left( 8n\left\Vert\mathcal{S}\right\Vert ^{2}\right) ^{kd}.
\]
For PSA we can use unit vectors spanning the subspaces and instead of $\left\Vert \mathcal{S}\right\Vert ^{2}$ we have the maximal squared norm in the support, so

\[
N\left( \mathcal{S},d_{\mathcal{X}},1/n\right) \leq \left( 8n\left\Vert \mathcal{X}\right\Vert ^{2}\right) ^{kd}.
\]
\[
kd\ln \left( 8n\left\Vert \mathcal{S}\right\Vert ^{2}/\eta \right) .
\]
Putting it all together and adding the bias bound $\Phi _{W}\left( \mu_{S}\right) -\mathbb{E}\left[ \Phi _{W}\left( \hat{\mu}_{S}\left( \mathbf{X}\right) \right) \right] \leq \left\Vert W\right\Vert _{\infty }R_{\max }/\sqrt{n}$ (Corollary \ref{Corollary DKW and Bias} (ii)) we get

\begin{align*}
& \Phi _{W}\left( \mu _{S}\right) -\Phi _{W}\left( \hat{\mu}_{S}\left(\mathbf{X}\right) \right)  \\
& \leq \sqrt{2\sigma ^{2}\left( \Phi _{W}\left( \hat{\mu}_{S}\left( \mathbf{X}\right) \right) \right) kd\ln \left( 8n\left\Vert \mathcal{S}\right\Vert^{2}/\eta \right) }+\frac{R_{\max }\left( 6\left\Vert W\right\Vert _{\infty}+\frac{3\left\Vert W\right\Vert _{Lip}}{2}\right) kd\ln \left( 8n\left\Vert \mathcal{S}\right\Vert ^{2}/\eta \right) }{n}+\frac{\left\Vert W\right\Vert_{\infty }R_{\max }}{\sqrt{n}}
\end{align*}%

The result follows from elementary estimates and algebraic simplifications. 
\end{proof}
					
\section{Algorithms}
\label{app:B}
Restatement of Lemma 7.

\begin{lemma}
For any $S \in \mathcal{S}$ and any $p \in {\rm Sym}_n$, if $\pi$ is the ascending ordering of the $d_S(x_i)$s, then $\phi_S({\bf x}, p) \ge \phi_S({\bf x}, \pi) = {\hat \Phi}_S({\bf x})$.
\end{lemma}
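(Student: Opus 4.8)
The plan is to read $\phi_S({\bf x},p)$ as an inner product between the \emph{fixed} vector of losses $\left(d_S(x_1),\dots,d_S(x_n)\right)$ and a permuted copy of the weight vector $\left(W(1/n),\dots,W(n/n)\right)$, and then apply the rearrangement inequality. Since $W$ is non-increasing we have $W(1/n)\ge W(2/n)\ge\dots\ge W(n/n)$, so over all $p\in{\rm Sym}_n$ the quantity $\sum_{i=1}^n W(p(i)/n)\,d_S(x_i)$ is minimized exactly when the largest weight is matched with the smallest loss, the second-largest weight with the second-smallest loss, and so on. That is precisely what the ascending ordering $\pi$ of the $d_S(x_i)$ achieves, so $\phi_S({\bf x},p)\ge\phi_S({\bf x},\pi)$ for every $p$; and $\phi_S({\bf x},\pi)=\frac1n\sum_{j=1}^n W(j/n)\,d_S(x)_{(j)}={\hat \Phi}_S({\bf x})$ is immediate from \eqref{eq:robust_obj}.

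Rather than invoke the rearrangement inequality as a black box, I would spell out the short exchange argument, which is self-contained and also makes the (possible) non-uniqueness of $\pi$ under ties among the $d_S(x_i)$ transparent. Suppose $p$ is not an ascending ordering of the losses; then there exist indices $i,j$ with $d_S(x_i)<d_S(x_j)$ but $p(i)>p(j)$. Let $p'$ agree with $p$ except that it swaps the two values $p(i)$ and $p(j)$. A one-line computation gives
\[
\phi_S({\bf x},p)-\phi_S({\bf x},p')=\frac1n\left(W\!\left(\tfrac{p(j)}{n}\right)-W\!\left(\tfrac{p(i)}{n}\right)\right)\left(d_S(x_j)-d_S(x_i)\right)\ge 0,
\]
because $W$ non-increasing and $p(i)>p(j)$ make the first factor $\ge 0$, while the second factor is $>0$. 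Such a swap strictly decreases the number of ``inversions'' of $p$ relative to the sorted order of the losses, so after finitely many swaps (as in bubble sort) we reach an ascending ordering $\pi$ of the $d_S(x_i)$ without ever increasing $\phi_S$; hence $\phi_S({\bf x},p)\ge\phi_S({\bf x},\pi)$.

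The only thing needing a word of care is that $\pi$ is determined only up to permutations within blocks of indices on which $d_S$ takes a constant value; one checks that any two such orderings give the same value of $\phi_S({\bf x},\cdot)$, since permuting a block merely reassigns the same multiset of weights to the same multiset of losses. With that remark the equality $\phi_S({\bf x},\pi)={\hat \Phi}_S({\bf x})$ is unambiguous. Beyond this bookkeeping I expect no real obstacle: the statement is essentially a restatement of the rearrangement inequality for two finite sequences, and termination of the exchange process is standard.
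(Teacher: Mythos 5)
Your proof is correct and is essentially the paper's own argument: both reduce the lemma to the rearrangement inequality for the non-increasing weights $W(1/n)\ge\dots\ge W(n/n)$ against the sorted losses, and both establish it by the same pairwise-exchange computation $\bigl(W(a/n)-W(b/n)\bigr)\bigl(d_S(x_j)-d_S(x_i)\bigr)\le 0$. The only difference is bookkeeping for termination (the paper fixes a prefix by induction, you count inversions), and your explicit remark about ties is a small but welcome addition.
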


\begin{proof}
Writing $w\left( i\right) =W\left( \frac{\pi \left( i\right) }{n}\right) $
and $z_{i}=d_{S}\left( x_{\pi \left( i\right) }\right) $ it is enough to
show that the identity permutation is a minimizer of%
\[
r\left( p\right) =\sum_{i=1}^{n}w\left( p\left( i\right) \right) z_{i}\text{
for }p\in \text{Sym}_{n}
\]%
This follows from the following claim, which we prove by induction:

For $k\in \left\{ 1,...,n\right\} $ there is for every $p\in ~$Sym$_{n}$
some $p^{\prime }\in ~$Sym$_{n}$ such that $r\left( p^{\prime }\right) \leq
r\left( p\right) $ and $p^{\prime }\left( j\right) =j$ for all $1\leq j<k$.
The case $k=1$ holds trivially. If the claim holds for any $k\leq n-1$ then
there is $q\in ~$Sym$_{n}$ such that $r\left( q\right) \leq r\left( p\right) 
$ and $q\left( j\right) =j$ for all $1\leq j<k$. If $q\left( k\right) =\pi
\left( k\right) $ then the claim for $k+1$ clearly holds by defining $%
p^{\prime }:=q$. If $q\left( k\right) \neq k$ note first that both $q\left(
k\right) >k$ and $q^{-1}\left( k\right) >k$. Then define $p^{\prime }\left(
j\right) :=q\left( j\right) $ except for $p^{\prime }\left( k\right) :=k$
and $p^{\prime }\left( q^{-1}\left( k\right) \right) :=q\left( k\right) $.
Then $p^{\prime }\left( j\right) =j$ for all $1\leq j<k+1$ and 
\[
r\left( p^{\prime }\right) -r\left( p\right) \leq r\left( p^{\prime }\right)
-r\left( q\right) =\left( w\left( k\right) -w\left( q\left( k\right) \right)
\right) \left( z_{k}-z_{q^{-1}\left( k\right) }\right) \leq 0,
\]%
because the first term is non-negative (since $w$ is non-increasing) and the
second non-positive. So $r\left( p^{\prime }\right) \leq r\left( p\right) $
which proves the claim for the case $k+1$ and completes the induction.
\end{proof}

Restatement of Theorem 9.

\begin{theorem}
Minimizing ${\hat \Phi}_S({\bf x})$ for the case of \textsc{kmeans} when $k = 1$ and $W$ is the hard threshold is NP-Hard.
\end{theorem}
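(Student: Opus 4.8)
The plan is to strip the $k=1$, hard-threshold case down to a purely combinatorial subset-selection problem and then reduce the \textsc{clique} problem to it. First, the reformulation. With $k=1$ a model is a single center $c\in\R^d$, $d_S(x_i)=\|x_i-c\|^2$, and the hard threshold $W=\zeta^{-1}1_{[0,\zeta]}$ assigns weight $\zeta^{-1}$ to exactly the $m:=\zeta n$ smallest losses, so $\hat\Phi_S({\bf x})=\frac1m\sum_{i=1}^m d_S(x)_{(i)}$. Minimizing this over $c$ is, up to the factor $1/m$, a joint minimization over a center and the $m$ points it is responsible for, and by swapping the two inner minima,
\[
\min_{c\in\R^d}\sum_{i=1}^m d_S(x)_{(i)}=\min_{\substack{T\subseteq[n],\,|T|=m}}\ \min_{c\in\R^d}\sum_{i\in T}\|x_i-c\|^2=\min_{\substack{T\subseteq[n],\,|T|=m}}\ \sum_{i\in T}\Big\|x_i-\tfrac1m\textstyle\sum_{j\in T}x_j\Big\|^2 .
\]
Hence the associated decision problem is: given $x_1,\dots,x_n\in\R^d$, an integer $m$ and a threshold $B$, does some $m$-subset $T$ have within-cluster sum of squares $\mathrm{wcss}(T)\le B$?

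Second, the reduction. Given a graph $G=([n],E)$ with adjacency matrix $A$ and maximum degree $D$, I would build points $x_1,\dots,x_n$ with Gram matrix $A+D\,I$, which can be realized explicitly with $0/1$ coordinates in polynomially many dimensions (one coordinate per edge, equal to $1$ on its two endpoints, together with $D-\deg(i)$ private coordinates per vertex equal to $1$); this keeps the instance rational and polynomially sized, sidestepping any issue with factorizing $A+D\,I$ numerically. For $|T|=m$ one then computes, with $e(T)$ the number of edges of $G$ induced on $T$,
\[
\mathrm{wcss}(T)=\sum_{i\in T}\|x_i\|^2-\frac1m\Big\|\sum_{i\in T}x_i\Big\|^2=mD-\frac1m\big(mD+2e(T)\big)=(m-1)D-\frac{2e(T)}{m},
\]
so minimizing $\mathrm{wcss}$ over $m$-subsets is the same as maximizing the induced edge count, and the minimum equals $(m-1)(D-1)$ if and only if $G$ contains a clique on $m$ vertices. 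Taking $\zeta=m/n$ and $B=(m-1)(D-1)$ (equivalently, asking whether the robust objective value is at most $(m-1)(D-1)/m$), the constructed instance decides the $m$-\textsc{clique} problem; since that problem is NP-complete and the construction runs in polynomial time, minimizing $\hat\Phi_S({\bf x})$ in this case is NP-hard.

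The two displayed identities and the bookkeeping that the $0/1$ construction has polynomial size are routine; the part that requires an idea is identifying the target problem. The point worth emphasizing is that although $1$-means has a trivial closed form, \emph{$1$-means with outliers}---which is exactly what the hard threshold produces---secretly encodes densest-$m$-subgraph, so the hardness comes entirely from deciding which $m$ points to keep. The hard part will really be packaging this cleanly: presenting the embedding so that all coordinates are small rationals rather than the irrational entries a generic Cholesky-type factorization of $A+D\,I$ would give (the incidence-vector construction above does this), and noting that ``clique of size exactly $m$'' and ``clique of size at least $m$'' are interchangeable since any larger clique contains one of size $m$.
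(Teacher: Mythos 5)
Your proof is correct, and it follows the same overall strategy as the paper: reformulate the $k=1$ hard-threshold problem as choosing the best $m$-subset by within-cluster sum of squares, then reduce \textsc{clique} to that subset-selection problem via a graph embedding. The difference lies in the embedding and in how the objective is tied to the graph. The paper embeds vertex $i$ as $A_{i:}+n e_i\in\R^n$ and runs a gap argument: pairs joined by an edge have squared distance at most $2n^2-3n$ while non-adjacent pairs have squared distance at least $2n^2$, so (via the pairwise-distance expansion of the WCSS) an $\nicefrac{n}{2}$-subset meets the cost threshold iff it induces a clique. You instead use an incidence-vector embedding realizing the Gram matrix $A+DI$ and obtain the \emph{exact} identity $\mathrm{wcss}(T)=(m-1)D-2e(T)/m$, which shows that trimmed $1$-means on your instances \emph{is} densest-$m$-subgraph, not merely that it separates cliques from non-cliques; this is a cleaner and slightly stronger structural statement (it would, for instance, transfer inapproximability of densest subgraph, which neither argument pursues). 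Two minor points of divergence: you reduce from $m$-\textsc{clique} with $m$ part of the input, so your threshold $\zeta=m/n$ varies with the instance, whereas the paper reduces from $\nicefrac{n}{2}$-\textsc{clique} and thereby gets hardness for the fixed value $\zeta=1/2$; if one reads the theorem as asserting hardness for a fixed weight function, you should simply specialize your reduction to $m=n/2$, which costs nothing. Your care in keeping the embedding over small rationals (avoiding a Cholesky factorization) is a genuine point the paper also implicitly handles by working with integer coordinates.
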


\begin{proof}

Notice that minimizing the ${\hat \Phi}_S({\bf x})$ in the case of $\textsc{kmeans}$ is equivalent to minimize the following function of a subset $C \subseteq X$ of size $\lfloor zn \rfloor$

$$
L(C) = \frac{1}{n} \sum_{x \in C} \| x_i - \mu_C \|_2^2
$$
where $\mu_C = mean(C)$ and return $\mu_C$. In what follow we will consider $L(C)$ as actually $L(C)n$ in order to remove the constant factor outside the objective and simplify the notation. The following lemma enables us to rewrite $L(C)$ in terms of pairwise distances. 

\begin{lemma}
Let $C \subseteq X$, then 

\begin{equation}
L(C) = \frac{1}{2|C|} \sum_{x, y \in C} \|x - y\|_2^2. 
\end{equation}
\label{L:pairwise}
\end{lemma}

\begin{proof}
Let $X$ and $Y$ two i.i.d. random variables supported on $C$, then

\begin{align*}
\Ex [\|X - Y\|_2^2] &= \Ex [\|X \|_2^2] + \Ex [\|Y \|_2^2] - 2\Ex [\langle X, Y\rangle] \\
&= \Ex [\|X \|_2^2] + \Ex [\|X \|_2^2] - 2\Ex [\|\Ex[X]\|_2^2] \\
&= 2 \Ex [\|X \|_2^2] - 2\Ex [\|\Ex[X]\|_2^2] = 2 \Ex [\|X - \Ex[X]\|_2^2]. 
\end{align*}
Now assume $X$ and $Y$ are independent samples from the uniform distribution on $C$, then

\begin{align*}
\Ex [\|X - \Ex[X]\|_2^2] &= \frac{1}{|C|}\sum_{x \in C} \|x- \mu_C\|_2^2 \\
&=  \Ex [\|X - Y\|_2^2]/2 = \frac{1}{2|C|^2}\sum_{x, y \in C} \|x - y\|_2^2    
\end{align*}
from which the thesis follows.
\end{proof}

We recall the definition of NP-hardness for optimization problems. 

\begin{definition}
A computational problem $\Pi$ is said NP-hard (optimization) if and only if the related decision problem $\Pi_D$ is NP-hard. Assume $\Pi$ is defined as the problem of minimizing a function $f_X(\mu)$ defined by an input instance $X$ if the minimum exists, then $\Pi_D$ is defined as the problem of determining, given in input $X$ and a rational number $c$, whether there exist an assignment to the variables $\mu$ such that $f_X(\mu) \le q$.
\end{definition}

In order to show hardness of an optimization problem $\Pi$, it is enough to show hardness of the related decision problem $\Pi_D$. For this reason, the following will be useful.

\begin{definition}
\textsc{decision robust 1-means}

\begin{itemize}

\item[] \textit{Input:} Points $X =\{x_1, \ldots ,x_n\} \subset \R^d$, an integer $h$ and a rational number $c$.

\item[] \textit{Output:} \emph{Yes} if there exist a $C \subseteq X$ such that $|C| = h$ and $L(C) \le c$, \emph{No} otherwise.
\end{itemize}

\end{definition}

To prove the theorem we will reduce \textsc{$\nicefrac{n}{2}$-clique} to the decision version \textsc{robust 1-means} via a polynomial time algorithm. Since \textsc{$\nicefrac{n}{2}$-clique} is NP-complete, hardness for \textsc{robust 1-means} will follow.

\begin{definition}
\textsc{$\nicefrac{n}{2}$-clique}

\begin{itemize}

\item[] \textit{Input:} A simple undirected connected graph $G=(V, E)$ with $|V| = n$. 

\item[] \textit{Output:} \emph{Yes} if $G$ contains a clique of size $\nicefrac{n}{2}$, \emph{No} otherwise.

\end{itemize}

\end{definition}

Given an instance of $n/2$-\textsc{clique} in the form of a graph $G = (V, E)$ with $n$ vertices, we create an instance of \textsc{robust 1-means} $\Pi_D(G)$ which is equivalent to $G$. Let $A$ denote the symmetric $n \times n$ adjacency matrix of $G$, i.e. $A_{ij} = 1$ iff $(i, j) \in E$ otherwise $A_{ij} = 0$. Consider the graph embedding given by the map $\phi: V \rightarrow \R^n$ such that $\phi(i) = A_{i:} +  n e_i$, where $A_{i:}$ denotes the $i$-th row of $A$ and $e_i$ denotes the $i$-th vector of the canonical basis of $\R^n$. Given $G$ we build an instance of \textsc{robust 1-means} by setting $X = \{\phi(1),\ldots,\phi(n)\}$, $h=n/2$ and $c = m(2n^2-3n)$, where we set $m = \binom{n}{2}$ as a shortcut. Notice that it takes $O(n)$ to build such instance. The following lemma finishes the proof by showing the aforementioned equivalence. 

\begin{lemma}
$G$ is a Yes instance iff $\Pi_{D}(G)$ is a Yes instance.
\end{lemma}

\begin{proof}
Assume that $G$ is a \emph{Yes} instance, i.e. $G$ contains at least clique of size $\nicefrac{n}{2}$. Notice that for any $(i, j) \in E$ it holds that

$$
\|\phi(i)-\phi(j)\|_2^2 \le  (n-1)^2+ (n-1)^2 = 2n^2 - 4n + 2 \le 2n^2 - 3n
$$
while for any $(i, j) \notin E$ it holds
$$
\|\phi(i)-\phi(j)\|_2^2 \ge 2n^2.
$$
If $\{c_1,\ldots,c_{\nicefrac{n}{2}}\}$ are the vertices in the clique, the cost $L(C)$, by \Cref{L:pairwise}, of the subset $C = \{\phi(c_1),\ldots,\phi(c_{\nicefrac{n}{2}})\}$ is at most $c$, since in such clique contains exactly $m$ edges.

Now suppose that $\Pi_D(G)$ admits a cost of at most $c$. Lets denote by $C$ the subsets of $X$ achieving such cost, then the associated vertices must form a clique otherwise at least one of the $m$ distance will be larger than $2n^2$ leading to a cost larger of $c$.
\end{proof}

Thus if we could solve in polynomial time \textsc{decision robust 1-means} we could solve in polynomial time \textsc{$\nicefrac{n}{2}$-clique}.

\end{proof}

\begin{table}[t!]
\centering
\begin{tabular}{||c | c | c c c||} 
\hline
Dataset & $k$ & RKM & SD & \textsc{k-means++} \\ [0.5ex] 
\hline\hline
FMNIST & 2 & \textbf{25.98} & 33.17 & 34.39 \\ 
EMNIST & 2 & 38.41 & \textbf{37.78} & 40.29 \\
cifar10 & 4 & $\bf{31.07} \times \bf{10^4}$ & $96.42 \times 10^5$ & $95.57 \times 10^5$ \\
Victorian & 5 & \textbf{1.64} & 1.66 & 1.76 \\
Iris & 1 & \textbf{0.32} & 3.71 & 4.75 \\  
\hline
\end{tabular}
\caption{Experimental results for the case of \textsc{k-means} clustering. In all the experiments $\zeta = 0.5$. In each row, the performance in bold corresponds to the winning algorithm.}
\label{table:1}
\end{table}

\section{Experiments}
\label{app:C}
In this section we discuss the additional experimental results we obtained with our method in the case of \textsc{k-means} clustering. We tested RKM, SD and standard \textsc{k-means++} with the $\zeta = 0.5$, $r = 30$, and $T = 100$. Due to its cubic runtime, SD is slow even on moderate-sized datasets. Thus, we considered the randomized version of SD with $M$ equals to the size of the training set. For this method, we repeated each experiment 5 times and reported the average reconstruction error on the test data (standard deviations resulted to be negligible in all cases).

In the following we describe each dataset, but Fashion MNIST whose experiment has already been described in the main body.

\paragraph{EMNIST.} This dataset consists of about 814000 $28 \times 28$ images of digits, lowercase and uppercase letters from the English alphabet arranged in 62 classes. The training data were generated by sampling 1000 0s and 1000 1s as inliers and sampling 33 points from each other class as outliers. We used $k=2$ clusters. The test data consist of all the 0s and 1s in the test set and has a size of about 2000.

\paragraph{cifar10.} The dataset consists of about 60000 $100 \times 100$ images from 10 classes: airplanes, cars, trucks, ships, dogs, cats, frogs, horses, birds and deer. The training data were generated by sampling 1000 points from each of the vehicle classes as inliers and 300 points from each of the animal classes as outliers. We used $k=4$ clusters. The test data consist of all the vehicle images from the test set and has size of about $4000$.

\paragraph{Victorian.} This dataset consists of 4500 texts from 45 authors of English language from Victorian Era, 100 texts from each author. The data have been processed as in \cite{AhmadianE0M19} and is made of 10 features. The training data were generated by sampling 50 points from each of one of the first 5 authors in the dataset as inliears and 5 points from each other class as outliers. We used $k=5$. The test data consist of the remaining 50 points from each of the inlier authors and has a size of about $250$.

\paragraph{Iris.} This dataset consists of 150 records of iris flowers. Each record contains 4 features: sepal length, sepal width, petal length and petal width. There classes. The training data were generated by sampling 30 points from the \emph{iris-setosa} class as inliear and 15 points from each other class as outliers. We used $k=1$. Since the training set is small sized, we used exact version for SD. The test data consist of all the remaining \emph{iris-setosa} points and has a size of about $20$.

\end{document}